\definecolor{mydarkblue}{rgb}{0,0.08,0.45}
\theoremstyle{plain}
\newtheorem{theorem}{Theorem}[section]
\newtheorem{proposition}[theorem]{Proposition}
\newtheorem{corollary}[theorem]{Corollary}
\theoremstyle{definition}
\newtheorem{definition}[theorem]{Definition}
\newtheorem{example}[theorem]{Example}
\theoremstyle{remark}
\DeclareMathOperator{\err}{err}
\newcommand{\id}{\mathrm{Id}}
\NewDocumentCommand{\definealphabet}{mmmm}
 {
  \int_step_inline:nnn { `#3 } { `#4 }
   {
    \cs_new_protected:cpx { #1 \char_generate:nn { ##1 }{ 11 } }
     {
      \exp_not:N #2 { \char_generate:nn { ##1 } { 11 } }
     }
   }
 }
\title{A General Theory of Correct, Incorrect, and Extrinsic Equivariance}
\author{%
Dian Wang$^1$ \quad Xupeng Zhu$^1$ \quad Jung Yeon Park$^1$ \quad Mingxi Jia$^{2*}$ \quad Guanang Su$^{3}$\thanks{Work done as students at Northeastern University} \quad \\ \textbf{Robert Platt$^1$ \quad Robin Walters$^1$} \\
  $^1$Northeastern University \quad $^2$Brown University \quad $^3$University of Minnesota\\
  \small\texttt{\{wang.dian,zhu.xup,park.jungy,r.platt,r.walters\}@northeastern.edu} \\
  \small\texttt{mingxi\_jia@brown.edu\quad su000265@umn.edu}\\
}
\begin{document}

\maketitle

\begin{abstract}
Although equivariant machine learning has proven effective at many tasks, success depends heavily on the assumption that the ground truth function is symmetric over the entire domain matching the symmetry in an equivariant neural network. A missing piece in the equivariant learning literature is the analysis of equivariant networks when symmetry exists only partially in the domain. In this work, we present a general theory for such a situation. We propose pointwise definitions of correct, incorrect, and extrinsic equivariance, which allow us to quantify continuously the degree of each type of equivariance a function displays. We then study the impact of various degrees of incorrect or extrinsic symmetry on model error. We prove error lower bounds for invariant or equivariant networks in classification or regression settings with partially incorrect symmetry. We also analyze the potentially harmful effects of extrinsic equivariance. Experiments validate these results in three different environments.
\end{abstract}

\section{Introduction}
\begin{wrapfigure}[13]{r}{0.48\linewidth}
\centering
\vspace{-0.8cm}
\subfloat[Correct]{\includegraphics[width=0.33\linewidth]{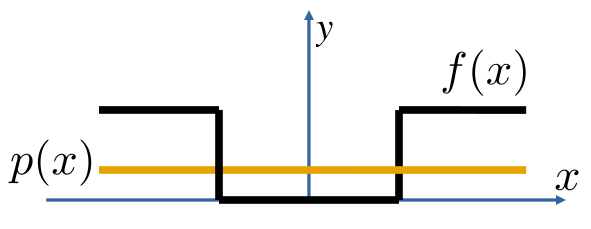}}
\subfloat[Incorrect]{\includegraphics[width=0.33\linewidth]{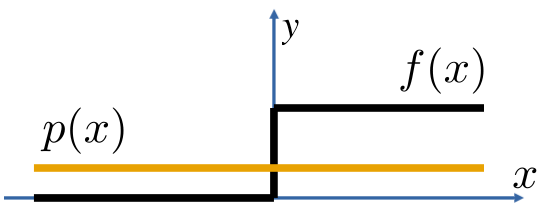}}
\subfloat[Extrinsic]{\includegraphics[width=0.33\linewidth]{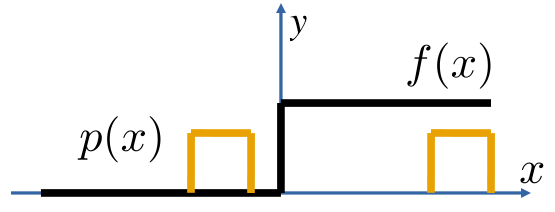}}
\caption{
An example of correct, incorrect, and extrinsic equivariance. The ground truth function $f(x)$ is shown in black and its probability density function $p(x)$ is shown in orange. If we model $f(x)$ using a $G$-invariant network where $G$ is a reflection group that negates $x$, different $f(x)$ and $p(x)$ will lead to correct, incorrect, and extrinsic equivariance. See Section~\ref{sec:full_def} for details.
}
\label{fig:iclr_def}
\end{wrapfigure}
Equivariant neural networks~\cite{g_conv,steerable_cnns} have proven to be an effective way to improve generalization and sample efficiency in many machine learning tasks. This is accomplished by encoding task-level symmetry into the structure of the network architecture so that the model does not need to explicitly learn the symmetry from the data. However, encoding a fixed type of symmetry like this can be limiting when the model symmetry does not exactly match the symmetry of the underlying function being modeled, i.e., when there is a symmetry mismatch. For example, consider the digit image classification task. Is it helpful to model this problem using a model that is invariant to 180-degree rotation of the image? For some digits, the label is invariant (e.g., $\mathbf{0}$ and $\mathbf{8}$). However, for other digits, the label changes under rotation (e.g., $\mathbf{6}$ and $\mathbf{9}$), suggesting that a rotationally symmetric model would be inappropriate here. However, recent work~\cite{surprising} suggests that this is not necessarily the case -- symmetric models are sometimes helpful even when a symmetry mismatch exists between the problem and model. This raises the question -- do the advantages obtained by using a symmetric model outweigh the errors introduced by the symmetry mismatch? 

This paper makes four main contributions towards this problem.  First, this paper extends the definitions for types of model symmetry with respect to true symmetry introduced in~\citet{surprising}.  They classify models as having \textit{correct}, \textit{incorrect}, or \textit{extrinsic equivariance} (see Figure~\ref{fig:iclr_def}), where correct means the ground truth function has the same symmetry as the equivariant model, incorrect means the ground truth function disagrees with the symmetry in the model, and extrinsic means the model symmetry transforms in-distribution data to out-of-distribution data.  We generalize this system into a continuum of equivariance types to reflect the fact that a single task may have different proportions of correct, incorrect, and extrinsic symmetry across its domain. 
For example, in the digit classification task, $\mathbf{0}$ has correct equivariance, $\mathbf{6}$ has incorrect equivariance, and $\mathbf{4}$ has extrinsic equivariance. 

Our second contribution is to introduce an analytical lower bound on model error in classification tasks resulting from incorrect model symmetry. 
This result can help guide model selection by quantifying error resulting from incorrect equivariance constraints.  
Our result generalizes that of~\citet{surprising} by removing the simplifying assumption that data density over the domain is group invariant. We prove the minimum error of an invariant classifier can be realized by assigning all data points in the same group orbit the label with the majority of the data density (Theorem~\ref{the:lb_cls}). 

Our third contribution is to develop new lower bounds on the $L_2$ error for regression tasks in terms of the variance of the function to be modeled over the orbit of the symmetry group. Like our classification bound, this bound can assist in model selection in situations with symmetry mismatch. 

Fourth, in contrast to \citet{surprising} who show benefits of extrinsic equivariance, we theoretically demonstrate its potential harm.  We perform experiments documenting the error rate across the correct-extrinsic continuum. Finally, we perform empirical studies illustrating the ideas of the paper and showing that the lower bounds obtained in our analysis appear tight in practice. 
This suggests our analysis can assist practitioners select symmetry groups appropriate for a given problem setting. Our code is available at \url{https://github.com/pointW/ext_theory}.

\section{Related Work}
\textbf{Equivariant Learning.} Originally used for exploiting symmetry in image domains~\cite{g_conv,steerable_cnns}, equivariant learning has been very successful in various tasks including molecular dynamics~\citep{anderson2019cormorant,atz2021geometric}, particle physics~\citep{bogatskiy2020lorentz}, fluid dynamics~\citep{wang2020incorporating}, trajectory prediction~\citep{walters2020trajectory}, pose estimation~\cite{li2021leveraging,klee2023image,liu2023selfsupervised}, shape completion~\cite{chatzipantazis2023mathrmseequivariant}, robotics~\citep{neural_descriptor,rss22xupeng,rss22haojie,corl22,simeonov2023se,pan2023tax,jia2023seil,huang2023edge,ryu2023equivariant} and reinforcement learning~\citep{van2020mdp,corl,iclr,pmlr-v162-mondal22a,zhao2023integrating}. 
However, most prior work assumes that the symmetry of the ground truth function is perfectly known and matches the model symmetry. 
\citet{surprising} go further and define correct, incorrect, and extrinsic equivariance to classify the relationship between model symmetry and domain symmetry. However, they do not discuss the possible combinations of the three categories, and limit their theory to a compact group and invariant classification. 
Our work extends \cite{surprising} and allows for a continuum of equivariance types and analyzes error bounds in a more general setup.

\textbf{Symmetric Representation Learning.} Various works have proposed learning symmetric representations, using transforming autoencoders \citep{hinton2011transforming}, restricted Boltzmann machines \citep{ICML2012Sohn_659}, and equivariant descriptors \citep{schmidt2012learning}. In particular, \cite{lenc2015understanding} shows that convolutional neural networks implicitly learn representations that are equivariant to rotations, flips, and translations, suggesting that symmetric representations are important inductive biases. Other works have considered learning symmetry-aware features using disentanglement \citep{quessard2020learning}, projection mapping \citep{klee2022i2i}, equivariance constraints \citep{marchetti2022equivariant}, separation into invariant and equivariant parts \citep{winter2022unsupervised} or subgroups~\cite{maile2023equivarianceaware}. \citet{park2022sen} propose learning a symmetric encoder that maps to equivariant features and \citet{dangovski2021equivariant} learn features that are sensitive and insensitive to different group representations. Other works assume no prior knowledge of symmetry and learn it from data \citep{anselmi2019symmetry, zhou2020meta, dehmamy2021automatic, moskalev2022liegg}. In particular, \citet{moskalev2022liegg} estimate the difference between the true latent symmetry and the learned symmetry. 
Similarly, our work considers a gap between the true symmetry and model symmetry and theoretically analyze its effects on error.

\textbf{Theory of Equivariant Learning.}
There are several lines of work on the theory of equivariant learning. \citet{kondor2018generalization} prove that convolutions are sufficient and necessary for equivariance of scalar fields on compact groups, later generalized to the steerable case by \citet{cohen2019general}. Certain equivariant networks have been proved to be universal in that such networks can approximate any $G$-equivariant function \citep{maron2019universality, yarotsky2022universal}. Another line of work has considered equivariant networks in terms of generalization error. \citet{abu1993hints} show that an invariant model has a VC dimension less than or equal to that of a non-equivariant model. Other works studied the generalization error of invariant classifiers by decomposing the input space \citep{sokolic2017generalization, sannai2021improved}. \citet{elesedy2021provably} quantify a generalization benefit for equivariant linear models using the notion of symmetric and anti-symmetric spaces. A PAC Bayes approach was used for generalization bounds of equivariant models~\cite{behboodi2022pac, lyle2020benefits}. Our work is complimentary to these and quantifies approximation error for equivariant model classes.

\textbf{Data Augmentation.} Some methods use data augmentation~\citep{lecun1998gradient,krizhevsky2012imagenet} to 
encourage the network to learn invariance with respect to transformations defined by the augmentation function~\citep{chen2020group}. Recent works have explored class-specific~\citep{hauberg2016dreaming,rommel2021cadda} and instance-specific~\citep{pmlr-v202-miao23a} data augmentation methods to further boost training by avoiding the potential error caused by a uniform augmentation function. Those methods can be viewed as applying data augmentation where pointwise correct or extrinsic invariance exist, while avoiding incorrect invariance. 

\section{Preliminaries}

\paragraph{Problem Statement.}
Consider a function $f\colon X\to Y$. Let $p: X\to \mathbb{R}$ be the probability density function of the domain $X$. We assume that there is no distribution shift during testing, i.e., $p$ is always the underlying distribution during training and testing. The goal for a model class $\lbrace h: X\to Y \rbrace$ is to fit the function $f$ by minimizing an error function $\err(h)$. We assume the model class $\{h\}$ is arbitrarily expressive except that it is constrained to be equivariant with respect to a group $G$. Let $\mathbbm{1}$ be an indicator function that equals to 1 if the condition is satisfied and 0 otherwise. In classification, $\err(h)$ is the classification error rate; for regression tasks, the error function is a $L_2$ norm function, 
\begin{equation}
\label{eqn:err_fcns}
\err_\mathrm{cls}(h)=\mathbb{E}_{x\sim p}[\mathbbm{1}(f(x) \neq h(x))],
\qquad
\err_\mathrm{reg}(h)=\mathbb{E}_{x\sim p}[||h(x) - f(x)||_2^2].
\end{equation}


\textbf{Equivariant Function.} A function $f:X\to Y$ is equivariant with respect to a symmetry group $G$ if it commutes with the group transformation $g\in G$, $
f(gx)=gf(x)$, where $g$ acts on $x\in X$ through the representation $\rho_X(g)$; $g$ acts on $y\in Y$ through the representation $\rho_Y(g)$.

\label{sec:full_def}
\subsection{Correct, Incorrect, and Extrinsic Equivariance.}
Consider a model $h$ which is 
equivariant with respect to a group $G$.  Since real-world data rarely exactly conforms to model assumptions, in practice there may often be a gap between the symmetry of the model and the ground truth function. \citet{surprising} propose a three-way classification which describes the relationship between the symmetry of $f$ and the symmetry of $h$.  In this system, $h$ has correct equivariance, incorrect equivariance, or extrinsic equivariance with respect to $f$. 

\begin{definition}[Correct Equivariance]
\label{def:cor}
For all $x \in X, g \in G$ where $p(x)>0$, if $p(gx)>0$ and $f(gx)=gf(x)$, $h$ has \emph{correct equivariance} with respect to $f$.
\end{definition}

\begin{definition}[Incorrect Equivariance]
\label{def:inc}
If there exist $x\in X, g\in G$ such that $p(x)>0, p(gx>0)$, but $f(gx)\neq gf(x)$, $h$ has \emph{incorrect equivariance} with respect to $f$.
\end{definition}

\begin{definition}[Extrinsic Equivariance]
\label{def:ext}
For all $x\in X, g\in G$ where $p(x)>0$, if $p(gx)=0$, $h$ has \emph{extrinsic equivariance} with respect to $f$.
\end{definition}

\begin{example}
\label{exp:full_def}
Consider a binary classification task where $X=\bbR$ and $Y=\{0, 1\}$. If the model $h$ is invariant to a reflection group $G$ where the group element $g\in G$ acts on $x\in X$ by $gx=-x$,  
Figure~\ref{fig:iclr_def} shows examples when correct, incorrect, or extrinsic equivariance is satisfied. 
\end{example}

\subsection{Pointwise Equivariance Type.}
\begin{wrapfigure}[11]{r}{0.4\linewidth}
\centering
    \vspace{-0.36cm}
    \includegraphics[width=\linewidth]{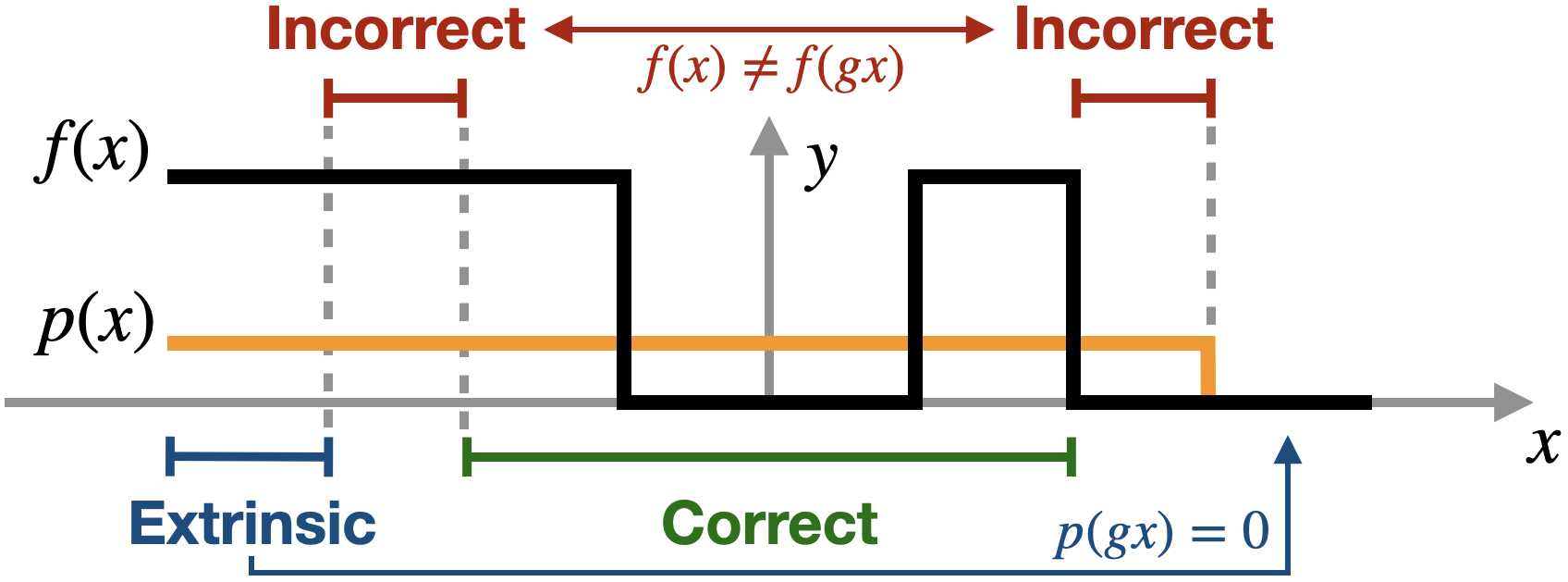}
    \caption{Example of pointwise correct, incorrect, and extrinsic equivariance in a binary classification task. $f(x)$ is in black and $p(x)$ is in orange. $G$ is a reflection group that negates $x$.}
    \label{fig:pw_ice_example}
\end{wrapfigure}
Although Definitions~\ref{def:cor}-~\ref{def:ext} are self-contained, they do not consider the mixture of different equivariance types in a single function. In other words, an equivariant model can have correct, incorrect, and extrinsic equivariance in different subsets of the domain.  
To overcome this issue, we define pointwise correct, incorrect, and extrinsic equivariance, which is a generalization of the prior work. 





\begin{definition}[Pointwise Correct Equivariance]
\label{def:p_cor}
For $g\in G$ and $x\in X$ where $p(x)\neq 0$, if $p(gx)\neq 0$ and $f(gx)=gf(x)$, 
$h$ has correct equivariance with respect to $f$ at $x$ under 
transformation $g$.
\end{definition}

\begin{definition}[Pointwise Incorrect Equivariance]
\label{def:p_inc}
For $g\in G$ and $x\in X$ where $p(x)\neq 0$, if $p(gx)\neq 0$ and $f(gx)\neq gf(x)$, 
$h$ has incorrect equivariance with respect to $f$ at $x$ under 
transformation $g$.
\end{definition}

\begin{definition}[Pointwise Extrinsic Equivariance]
\label{def:p_ext}
For $g\in G$ and $x\in X$ where $p(x)\neq 0$, if $p(gx)=0$, 
$h$ has extrinsic equivariance with respect to $f$ at $x$ under 
transformation $g$.
\end{definition}

Notice that the definitions of pointwise correct, incorrect, and extrinsic equivariance are mutually exclusive, i.e., a pair $(x,g)$ can only have one of the three properties. The pointwise definitions are generalizations of the global Definitions~\ref{def:cor}-~\ref{def:ext}. 
For example, when pointwise correct equivariance holds for all $x\in X$ and $g\in G$, Definition~\ref{def:cor} is satisfied.


\begin{example}[Example of Pointwise Correct, Incorrect, and Extrinsic Equivariance]
Consider the same binary classification task in Example~\ref{exp:full_def}.
Figure~\ref{fig:pw_ice_example} shows $f(x)$, $g(x)$, and four subsets of $X$ where pointwise correct, incorrect, or extrinsic holds. For $x$ in the correct section (green), $p(x)>0, p(gx)>0, f(x)=f(gx)$. For $x$ in the incorrect sections (red), $p(x)>0, p(gx)>0, f(x)\neq f(gx)$. For $x$ in the extrinsic section (blue), $p(x)>0, p(gx)=0$.
\end{example}

\begin{definition}[Correct, Incorrect, and Extrinsic Sets]
\label{def:space}

The Correct Set $C\subseteq X\times G$ is a subset of $X\times G$ where pointwise correct equivariance holds for all $(x, g)\in C$. Similarly, the Incorrect Set $I$ and the Extrinsic Set $E$ are subsets where incorrect equivariance or extrinsic equivariance holds for all elements in the subset. Denote $U\subseteq X\times G$ as the Undefined Set where $\forall(x, g)\in U, p(x)=0$. By definition we have $X\times G = C\amalg I\amalg E\amalg U$, where $\amalg$ denotes a disjoint union. 
\end{definition}

\section{Approximation Error Lower Bound from Incorrect Equivariance}
\label{sec:bound}
Studying the theoretical error lower bound of an equivariant network is essential for model selection, especially when incorrect equivariance exists. \citet{surprising} prove an error lower bound for an incorrect equivariant network, but their setting is limited to a classification task in the global situation of Definition~\ref{def:inc} with a discrete group and an invariant density function.
In this section, we find the lower bound of $\err(h)$ for an equivariant model $h$ in a general setting. To calculate such a lower bound, we first define the \textit{fundamental domain} $F$ of $X$. Let $d$ be the dimension of a generic orbit of $G$ in $X$ and $n$ the dimension of $X$. Let $\nu$ be the $(n-d)$ dimensional Hausdorff measure in $X$.
\begin{definition}[Fundamental Domain]
\label{def:fundamental}
A closed subset $F$ of $X$ is called a fundamental domain of $G$ in $X$ if $X$ is the union of conjugates\footnote{A conjugate $gF$ is defined as $gF=\{gx | x\in F\}$.} of $F$, i.e.,
    $X=\bigcup_{g\in G} gF,$
and the intersection of any two conjugates has 0 measure under $\nu$. 
\end{definition}

We assume further that the set of all $x$ which lie in any pairwise intersection 
$\bigcup_{g_1F \neq g_2F} \left(g_1F \cap g_2F \right)$
has measure 0 under $\nu$.
Let $Gx = \{gx: g\in G\}$ be the orbit of $x$, then $X$ can be written as the union of the orbits of all points in the fundamental domain $F$ as such $X=\bigcup_{x\in F} Gx$.

\subsection{Lower Bound for Classification}

We first show the lower bound of the error $\err_\mathrm{cls}(h)$ 
(Equation~\ref{eqn:err_fcns}) given the invariant constraint in $h$: $h(gx)=h(x), g\in G$. 
In this section, the codomain $Y$ of $f$ is a finite set of possible labels. 
Since $h$ is $G$-invariant, $h$ has the same output for all inputs in an orbit $Gx$. We call the label that causes the minimal error inside the orbit the \textit{majority label}\footnote{The majority label has more associated data than all other labels, but does not need to be more than 50\%.}, and define the error in the orbit as the \textit{total dissent}.

\begin{definition}[Total Dissent]
\label{def:total_dissent}
For the orbit $Gx$ of $x\in X$, the total dissent $k(Gx)$ is the integrated probability density of the elements in the orbit $Gx$ having a different label than the majority label
\begin{equation}
\label{eqn:kx}
    k(Gx)=\min_{y\in Y} \int_{Gx}p(z) \mathbbm{1} (f(z) \neq y)dz.
\end{equation}
\end{definition}

We can also lift the integral to $G$ itself
by introducing a factor $\alpha(x, g)$ to account for the Jacobian of the action map and size of the stabilizer of $x$. (See Appendix \ref{app:reparam}.)
\begin{align}
\label{eqn:k_intG}
k(Gx) & =\min_{y\in Y} \int_{G}p(gx) \mathbbm{1} (f(gx) \neq y) \alpha(x, g) dg.
\end{align}

\begin{theorem}
\label{the:lb_cls}
$\err(h)$ is lower bounded by $\int_F k(Gx) dx$.
\end{theorem}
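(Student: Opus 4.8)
The plan is to reduce the global error integral to a collection of per-orbit contributions, and then bound each orbit's contribution below by its total dissent. First I would write the classification error as an integral over the whole domain, $\err(h)=\int_X p(x)\,\mathbbm{1}(f(x)\neq h(x))\,dx$, and decompose $X$ using the orbit partition $X=\bigcup_{x\in F} Gx$ induced by the fundamental domain of Definition~\ref{def:fundamental}. Since the pairwise intersections of the conjugates $gF$ are assumed to have $\nu$-measure zero, this union is essentially disjoint, so a coarea-type change of variables lets me rewrite the integral over $X$ as an iterated integral with an outer integration over representatives $x\in F$ and an inner integration over the orbit $Gx$, namely $\err(h)=\int_F\left(\int_{Gx} p(z)\,\mathbbm{1}(f(z)\neq h(z))\,dz\right)dx$.

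Second, I would exploit the invariance constraint $h(gx)=h(x)$. Because $h$ is constant on each orbit, there is a single label $y_0=h(x)\in Y$ with $h(z)=y_0$ for every $z\in Gx$. Hence the inner integrand satisfies $\mathbbm{1}(f(z)\neq h(z))=\mathbbm{1}(f(z)\neq y_0)$ throughout the orbit, and the inner integral collapses to $\int_{Gx} p(z)\,\mathbbm{1}(f(z)\neq y_0)\,dz$ for this fixed $y_0$.

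Third, I would invoke the definition of total dissent (Definition~\ref{def:total_dissent}, Equation~\eqref{eqn:kx}), which is exactly the minimum over all labels $y\in Y$ of this same quantity. Since $y_0$ is one particular admissible label, the inner integral is at least the minimum: $\int_{Gx} p(z)\,\mathbbm{1}(f(z)\neq y_0)\,dz \geq k(Gx)$. Integrating this pointwise inequality over $F$ gives $\err(h)\geq \int_F k(Gx)\,dx$, which is the claim; equality is achieved precisely when $h$ outputs the majority label of each orbit.

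I expect the only delicate step to be the measure-theoretic decomposition in the first paragraph: rigorously converting $\int_X$ into the iterated integral over $F$ and the orbits requires a coarea/change-of-variables argument that accounts for the Jacobian of the action map and the size of the stabilizer — precisely the factor $\alpha(x,g)$ appearing in the lifted form of Equation~\eqref{eqn:k_intG} — and relies on the $\nu$-null overlap assumption on the conjugates. Once that reparametrization is justified, the invariance observation and the minimality of $k$ make the remaining inequality immediate.
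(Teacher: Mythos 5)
Your proposal is correct and follows essentially the same route as the paper's proof: rewrite $\err(h)$ as an iterated integral over the fundamental domain $F$ and the orbits $Gx$ (with the measure-zero overlap assumption, handled in Appendix~\ref{app:iterated_integral} and~\ref{app:reparam}), use the invariance of $h$ to fix a single label $y_0$ on each orbit, and bound the inner integral below by the minimum over $y\in Y$, which is $k(Gx)$ by Definition~\ref{def:total_dissent}. Your closing remark about tightness also matches the paper's observation that the bound is realized by an $h$ assigning each orbit its majority label, given universal approximation.
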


\begin{proof}
Rewriting the error function of Equation~\ref{eqn:err_fcns}, we have
\begin{align}
\label{eqn:err4}
\err(h) = \int_{X} p(x) \mathbbm{1}(f(x)\neq h(x)) dx 
= \int_{x\in F} \int_{z\in Gx} p(z) \mathbbm{1}(f(z)\neq h(z)) dz dx,
\end{align}
using iterated integration (Appendix~\ref{app:iterated_integral}) and Definition~\ref{def:fundamental}. We assume the measure of $F\cap gF$ is 0.
Since $h(z)$ can only have a single label in orbit $Gx$, we can lower bound the inside integral as
\begin{align}
\nonumber
 \int_{z\in Gx} p(z) \mathbbm{1}(f(z)\neq h(z)) dz 
\geq  \min_{y \in Y} \int_{z\in Gx} p(z) \mathbbm{1}(f(z)\neq y) dz = k(Gx).
\end{align}
We obtain the claim by integrating over $F$. Notice that this is a tight lower bound assuming universal approximation. That is, there exists $h$ which realizes this lower bound.
\end{proof}

We can express the total dissent in terms of the Incorrect Set $I$ (Definition~\ref{def:space}). 


\begin{proposition}
\label{prop:err_T}
$k(Gx) = \min_{x' \in (Gx)^+} \int_G p(gx') \mathbbm{1}((x',g)\in I)\alpha(x', g) dg$, where $(Gx)^+=\{x_0\in Gx|p(x_0)>0\}$. 
\end{proposition}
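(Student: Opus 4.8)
The plan is to reduce both sides to the same orbit integral $\int_{Gx} p(z)\,\mathbbm{1}(f(z)\neq y)\,dz$ indexed by a label $y$, and then match the two minimizations. The essential observation specific to the invariant classification setting is that the group acts trivially on the label space: the constraint $h(gx)=h(x)$ forces $\rho_Y(g)=\id$, so $gf(x')=f(x')$ for every $x'$ and $g$. Consequently, for a base point $x'$ with $p(x')\neq 0$, the pair $(x',g)$ lies in the Incorrect Set $I$ exactly when $p(gx')\neq 0$ and $f(gx')\neq f(x')$; and wherever $p(gx')=0$ the integrand on the right-hand side vanishes anyway. This identifies the condition $(x',g)\in I$ with $f(gx')\neq f(x')$ for the purposes of the integral.

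First I would rewrite the right-hand side using base-point independence of the lift. By the reparametrization of Appendix~\ref{app:reparam}, for any base point $x'\in Gx$ and any fixed label $y$ we have $\int_G p(gx')\,\mathbbm{1}(f(gx')\neq y)\,\alpha(x',g)\,dg = \int_{Gx} p(z)\,\mathbbm{1}(f(z)\neq y)\,dz$, the value being intrinsic to the orbit rather than to the chosen base point. Taking $y=f(x')$ and using the identification above gives, for each $x'\in (Gx)^+$, the equality $\int_G p(gx')\,\mathbbm{1}((x',g)\in I)\,\alpha(x',g)\,dg = \int_{Gx} p(z)\,\mathbbm{1}(f(z)\neq f(x'))\,dz$. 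Hence the right-hand side equals $\min_{x'\in (Gx)^+}\int_{Gx} p(z)\,\mathbbm{1}(f(z)\neq f(x'))\,dz$, while by definition $k(Gx)=\min_{y\in Y}\int_{Gx} p(z)\,\mathbbm{1}(f(z)\neq y)\,dz$.

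It then remains to show these two minima agree, which I would do by two inequalities. The right-hand minimization ranges only over the realized label set $\{f(x'):x'\in (Gx)^+\}\subseteq Y$, so minimizing over this smaller collection can only be larger, giving $\text{RHS}\ge k(Gx)$. For the reverse, let $y^\star$ be a majority label attaining $k(Gx)$, and write $m(y)=\int_{Gx}p(z)\,\mathbbm{1}(f(z)=y)\,dz$, so that minimizing the dissent is the same as maximizing $m(y)$. If the orbit carries positive mass then $m(y^\star)>0$, so $\{z\in Gx: f(z)=y^\star,\ p(z)>0\}$ has positive measure and in particular contains a point $x'\in (Gx)^+$ with $f(x')=y^\star$; the corresponding right-hand term then equals $k(Gx)$, giving $\text{RHS}\le k(Gx)$. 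If the orbit has zero mass both sides are $0$.

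The main obstacle is the careful handling of the lift and of the "majority label is realized" step. I expect the base-point independence of $\int_G(\cdots)\,\alpha(x',g)\,dg$ to rely on the stabilizer and Jacobian bookkeeping in Appendix~\ref{app:reparam}, and the existence of a positive-density point $x'$ with $f(x')=y^\star$ to need the finiteness of $Y$ together with the positivity $m(y^\star)>0$. The measure-theoretic subtleties here, namely that choosing such an $x'$ as a legitimate base point for the reparametrization is compatible with $x'$ having positive density but lying in a measure-zero orbit slice, are what must be treated with care.
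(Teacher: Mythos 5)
Your proof is correct and takes essentially the same route as the paper's: lift the orbit integral to $G$ via the $\alpha$ factor, exploit base-point independence within the orbit (the paper phrases this as $k(Gx)=k(Gx')$), and identify $\mathbbm{1}(f(gx')\neq f(x'))$ with $\mathbbm{1}((x',g)\in I)$ using the trivial action on labels and the vanishing factor $p(gx')$. Your two-inequality argument simply makes rigorous the paper's one-line assertion that the minimum over $y\in Y$ is attained at $y=f(x')$ for some $x'\in (Gx)^+$, using finiteness of $Y$ and positivity of the majority label's mass.
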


\begin{proof}
Consider Equation~\ref{eqn:k_intG}, since the minimum over $y$ is obtained for $y=f(x')$ for some $x'\in Gx$ such that $p(x')>0$ (i.e., $x'\in (Gx)^+$), 
\begin{align}
\nonumber
k(Gx) = \min_{x'\in (Gx)^+} \int_G p(gx) \mathbbm{1} (f(gx) \neq f(x')) \alpha(x, g) dg.
\end{align}
Since $x'\in Gx$, then $Gx'=Gx$ and we have $k(Gx)=k(Gx')$. Thus,
\begin{align}
\nonumber
k(Gx) &= \min_{x'\in (Gx)^+} \int_G p(gx') \mathbbm{1} (f(gx') \neq f(x')) \alpha(x', g) dg\\
\nonumber
&= \min_{x' \in (Gx)^+} \int_G p(gx') \mathbbm{1}((x',g)\in I)\alpha (x', g) dg.
\end{align}
\end{proof}


\begin{example}[Lower bound example for a binary classification task using Proposition~\ref{prop:err_T}] 
\end{example}
\begin{wrapfigure}[18]{r}{0.51\linewidth}
\vspace{-0.4cm}
\centering
\begin{minipage}{.51\linewidth}
\centering
\includegraphics[width=\linewidth]{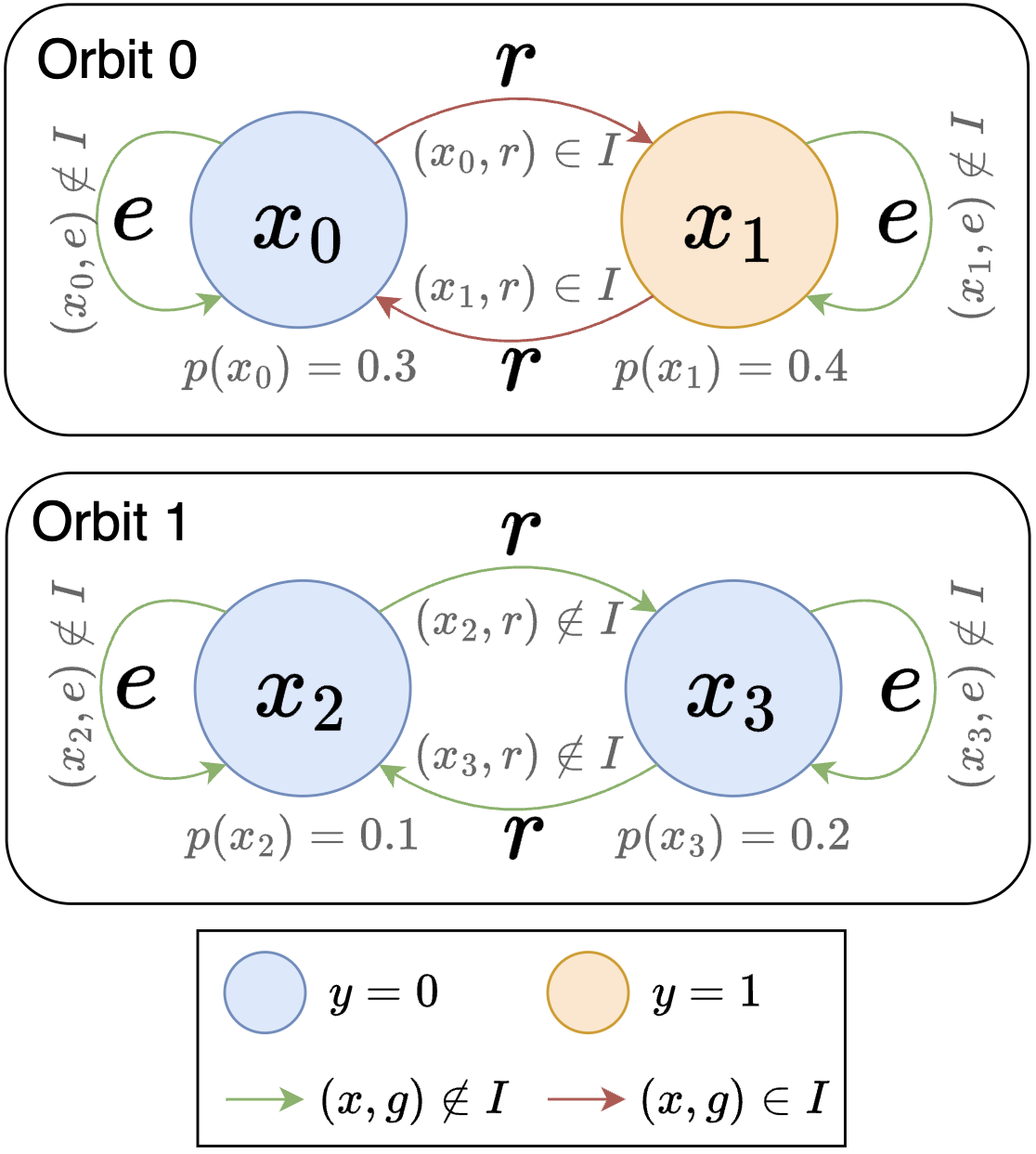}
\caption{An example binary classification task. The circles are elements of $X$. 
The arrows show how $g\in G$ acts on $x\in X$. The arrow color 
shows whether $(x, g)\in I$.}
\label{fig:example_discrete}
\end{minipage}\hfill
\begin{minipage}{.46\linewidth}
  \centering
  \vspace{0.07cm}
\includegraphics[width=\linewidth]{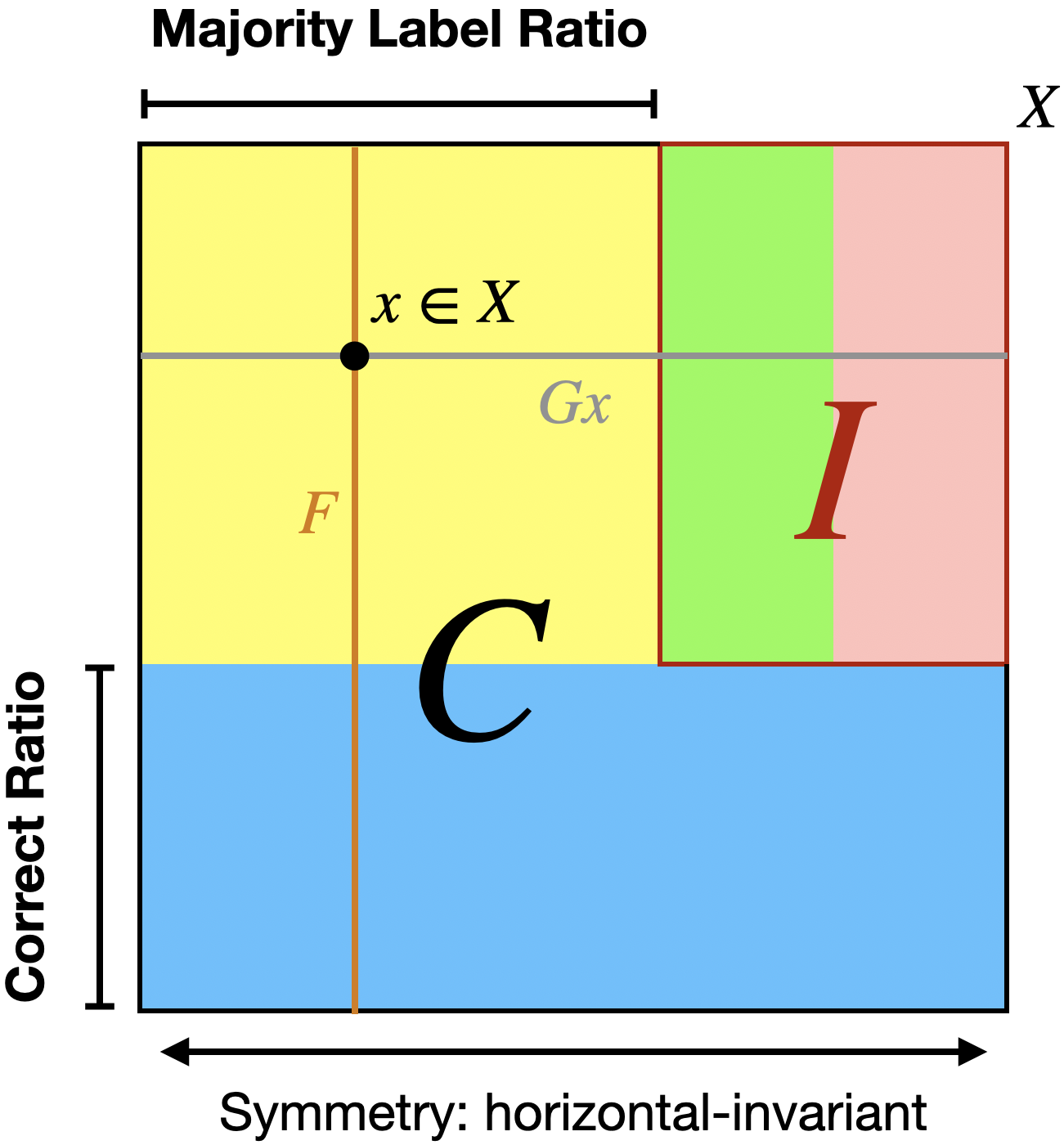}
  \caption{An example multi-class classification task. Color indicates the label. The fundamental domain $F$ is a vertical line. For a point $x\in F$, the orbit $Gx$ is a horizontal line.}
  \label{fig:square_env}
\end{minipage}
\end{wrapfigure}
\vspace{-\parskip}
Let $f \colon X \to \{0, 1\}$ be a binary classification function on $X = \{x_0,x_1,x_2,x_3\}$. Let $G=C_2=\{e, r\}$ be the cyclic group of order two that permutes the elements in $X$. Figure~\ref{fig:example_discrete} shows $X$, the label for each $x\in X$, and how $e, r\in G$ acts on $x\in X$. $\{x_0, x_3\}$ forms a fundamental domain $F$, and there are two orbits: $Gx_0=\{x_0, x_1\}$ and $Gx_2=\{x_2, x_3\}$. Since both $X$ and $G$ are discrete and $g\in G$ acts on $X$ through permutation, 
The lower bound can be written as $\err(h)\geq \sum_{x\in F} \min_{x'\in (Gx)^+}\sum_{g\in G} p(gx')\mathbbm{1}((x', g)\in I).$
We can then calculate $\sum_{g\in G} p(gx')\mathbbm{1}((x', g)\in I)$ for $x'\in X$: $x_0: 0.4, x_1: 0.3, x_2: 0, x_3: 0$. Taking the min over each orbit we have $k(Gx_0)=0.3, k(Gx_2)=0$. Taking the sum over $F=\{x_0, x_3\}$ we obtain $\err(h)\geq 0.3$.

\begin{example}[Lower bound example for a multi-class classification task using Proposition~\ref{prop:err_T}]
\label{exp:square}
Consider a multi-class classification task $f: \bbR ^2\to Y$ with $n=|Y|$ classes. For $x=(u, v)\in [0,1]^2$ then $p(u, v)=1$ and otherwise $p(u,v)=0$; i.e., the support of $p$ is a unit square. Let $G$ denote the group of translations in the $u$-direction and $h$ a $G$-invariant network. In a data distribution illustrated in Figure~\ref{fig:square_env}, we compute the lower bound for $\err(h)$. Consider a fundamental domain $F$ (brown line in Figure~\ref{fig:square_env}). In the blue area, there is one label across the orbit (i.e., the horizontal line), meaning $\forall g\in G, (x', g)\in C$, yielding Proposition~\ref{prop:err_T} equals 0. For points in the yellow area, the majority label is yellow. This means that for $g\in G$ such that $gx$ is in yellow, $(x, g)\in C$; for other $g\in G, (x, g)\in I$. Consequently, Proposition~\ref{prop:err_T} is equivalent to the combined green and pink lengths. Taking the integral over $F$ (Theorem~\ref{the:lb_cls}), the lower bound equals the green and pink area ($I$ in Figure~\ref{fig:square_env}). 
We define correct ratio ($c$) as the blue area's height and majority label ratio ($m$) as the yellow area's length. 
Adjusting $c$ and $m$ transitions incorrect to correct equivariance, leading to $\err(h)\geq area(I)=(1-c)\times (1-m)$. 
Appendix~\ref{sec:exp_square} shows an experiment where the empirical result matches our analysis. 
\end{example}

\paragraph{Lower Bound When $G$ is Finite and The Action of $G$ is Density Preserving.}

In this section, we consider the lower bound in Theorem~\ref{the:lb_cls} when $G$ is finite and the action of $G$ is density preserving, i.e., $p(gx)=p(x)$. Let $(Gx)_y=\{z\in Gx|f(z)=y\}$ be a subset of $Gx$ with label $y$. Define $\mathcal{Q}(x)=(\max_{y\in Y}|(Gx)_y|) / |Gx|$, which is the fraction of data in the orbit $Gx$ that has the majority label. Denote $Q=\{\mathcal{Q}(x): x\in X\}$ the set of all possible values for $\mathcal{Q}$. Consider a partition of $X=\coprod_{q\in Q} X_{q}$ where $X_{q} =\{x\in X: \mathcal{Q}(x)=q\}$. 
Define $c_q = \mathbb{P}(x\in X_q) = |X_q|/|X|$.

\begin{proposition}
\label{prop:lb_finite}
The error lower bound $\err(h)\geq 1-\sum_q qc_q$ from \citet{surprising} (Proposition 4.1) is a special case of Theorem~\ref{the:lb_cls}.
\end{proposition}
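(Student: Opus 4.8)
The plan is to show that, under the two extra hypotheses that $G$ is finite and $p(gx)=p(x)$, the orbit integral $\int_F k(Gx)\,dx$ furnished by Theorem~\ref{the:lb_cls} collapses to exactly $1-\sum_q q c_q$. The starting observation is that density preservation forces $p$ to be constant along every orbit, so that in the total dissent (Definition~\ref{def:total_dissent}) the mass-minimizing label coincides with the count-majority label. Writing $p_{Gx}$ for the common density on the orbit, this gives $k(Gx)=p_{Gx}\bigl(|Gx|-\max_{y}|(Gx)_y|\bigr)=\mathbb{P}(Gx)\,(1-\mathcal{Q}(x))$, where $\mathbb{P}(Gx)$ is the total probability mass of the orbit. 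The crucial point, and the reason density preservation is needed, is that equal within-orbit weights let me identify the minimizing label with the majority label whose count fraction is precisely $\mathcal{Q}(x)$.

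Next I would remove the outer integral over $F$. Letting $y^\ast(Gx)$ denote the majority label of the orbit, I can write $k(Gx)=\int_{Gx} p(z)\,\mathbbm{1}(f(z)\neq y^\ast(Gx))\,dz$, and since the integrand depends on $z$ only through its orbit, reversing the iterated integration used in Theorem~\ref{the:lb_cls} (Appendix~\ref{app:iterated_integral}) yields
\[
\int_F k(Gx)\,dx=\int_X p(z)\,\mathbbm{1}\bigl(f(z)\neq y^\ast(Gz)\bigr)\,dz=\mathbb{P}_{z\sim p}\bigl(f(z)\neq y^\ast(Gz)\bigr).
\]
In words, the lower bound is simply the probability that a sample receives a label different from its orbit's majority label.

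Finally I would evaluate this probability through the partition $X=\coprod_{q\in Q} X_q$. Because $\mathcal{Q}$ is constant on orbits, each $X_q$ is a $G$-invariant union of whole orbits, so I can compute the contribution orbit by orbit: on an orbit with $\mathcal{Q}=q$ the majority label covers a mass fraction $q$ (again using that the density is flat on the orbit), hence the minority mass is a fraction $1-q$, giving $\int_{X_q} p(z)\,\mathbbm{1}(f(z)\neq y^\ast)\,dz=(1-q)\,c_q$. Summing over $q$ and using $\sum_q c_q=1$ produces $\err(h)\geq\sum_q(1-q)c_q=1-\sum_q q c_q$, which is the bound of \citet{surprising}.

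I expect the only delicate point to be the measure-theoretic bookkeeping in the continuous-$X$ case: the orbit ``integral'' must be read through the reparametrization of Equation~\ref{eqn:k_intG}, where the Jacobian and stabilizer factor $\alpha(x,g)$ enters, and I would need to verify that density preservation makes this factor cancel so that the within-orbit computation genuinely reduces to the cardinality ratio $\mathcal{Q}(x)$. Establishing that the mass-majority and count-majority labels coincide is the conceptual crux, but it is immediate once the orbit density is known to be constant.
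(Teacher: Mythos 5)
Your proof is correct, and it organizes the computation by a different decomposition than the paper. The paper's proof (Appendix~\ref{app:proof_cls_dis}) stays on the fundamental domain: it rewrites $k(Gx)=\int_{Gx}p(z)dz-\max_{y\in Y}\int_{(Gx)_y}p(z)dz$, uses $p(gx)=p(x)$ to turn the second term into $\max_y|(Gx)_y|\,p(x)$, partitions $F=\coprod_q F_q$ by the value of $\mathcal{Q}$, and then must separately verify the identity $c_q=\int_{F_q}|Gx|\,p(x)\,dx$ to match the resulting expression with $\sum_q qc_q$. You instead collapse the iterated integral back to $X$, interpret $\int_F k(Gx)\,dx$ as the probability that a sample's label disagrees with its orbit's majority label, and partition $X$ itself into the $G$-invariant sets $X_q$, on each of which the minority mass fraction is $1-q$, giving $\sum_q(1-q)c_q=1-\sum_q qc_q$ directly. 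Both arguments hinge on the same two facts---flat density on each orbit makes the mass-minimizing label the count-majority label, and $\mathcal{Q}$ is constant on orbits---but your route works with $c_q=\mathbb{P}(x\in X_q)$ exactly as defined, avoiding the paper's extra $F_q$-integral computation, so it is marginally more direct. The one caveat you flag, about the reparameterization factor $\alpha(x,g)$, dissolves in this setting: for finite $G$ the orbit integral is the finite sum $k(Gx)=\min_{y\in Y}\sum_{z\in Gx}p(z)\mathbbm{1}(f(z)\neq y)$ (Appendix~\ref{app:iterated_integral}), so no Jacobian or stabilizer bookkeeping is needed and the reduction to the cardinality ratio $\mathcal{Q}(x)$ is immediate; likewise, tie-breaking in the choice of the majority label $y^{\ast}(Gz)$ is immaterial, since any minimizer attains $k(Gx)$ and $y^{\ast}$ is constant on each orbit, as your reversal of the iterated integral requires.
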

Proof in Appendix~\ref{app:proof_cls_dis}. The proposition shows Theorem~\ref{the:lb_cls} is a strict generalization of~\cite[Prop 4.1]{surprising}. 

\subsection{Lower Bound for Invariant Regression}

In this section, we give a lower bound of the error function $\err_\mathrm{reg}(h)$ (Equation~\ref{eqn:err_fcns}) in a regression task given that $h$ is invariant, i.e., $h(gx)=h(x)$ for all $g\in G$. Assume $Y=\bbR^n$.  
Denote by $p(Gx)=\int_{z\in Gx}p(z)dz$ the probability of the orbit $Gx$. Denote by $q(z) = \frac{p(z)}{p(Gx)}$ the normalized probability density of the orbit $Gx$ such that $\int_{Gx}q(z)dz=1$. Let $\mathbb{E}_{Gx}[f]$ be the mean of function $f$ on the orbit $Gx$ defined, and let $\bbV_{Gx}[f]$ be the variance of $f$ on the orbit $Gx$,
\begin{equation}
\nonumber
\bbE_{Gx}[f] =\int_{Gx}q(z)f(z)dz
=\frac{\int_{Gx}p(z)f(z)dz}{\int_{Gx}p(z)dz}, \qquad \bbV_{Gx}[f]=\int_{Gx}q(x)||\bbE_{Gx}[f]-f(z)||_2^2.
\end{equation}


\begin{theorem}
\label{theo:invar_regression}
$\err(h)\geq \int_F p(Gx)\bbV_{Gx}[f] dx$.
\end{theorem}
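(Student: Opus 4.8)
The plan is to mirror the structure of the proof of Theorem~\ref{the:lb_cls}: first decompose the global $L_2$ error into an integral over the fundamental domain $F$ of per-orbit errors, and then minimize each per-orbit error separately by exploiting the invariance constraint on $h$. Concretely, I would begin by rewriting
\[
\err(h) = \int_X p(x)\,\|h(x)-f(x)\|_2^2\,dx = \int_{x\in F}\int_{z\in Gx} p(z)\,\|h(z)-f(z)\|_2^2\,dz\,dx,
\]
using the same iterated-integration argument (Appendix~\ref{app:iterated_integral}) and the fundamental-domain decomposition $X=\bigcup_{x\in F}Gx$, whose pairwise intersections have measure zero under $\nu$, that already underlie Theorem~\ref{the:lb_cls}.

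The key observation is that invariance of $h$ forces $h$ to be constant along each orbit: for every $z\in Gx$ we have $h(z)=h(x)=:c_x$, where $c_x\in\bbR^n$ is a single vector depending only on the orbit. Substituting this and normalizing the inner integral by $p(Gx)=\int_{Gx}p(z)\,dz$ via $q(z)=p(z)/p(Gx)$, the inner integral becomes $p(Gx)\,\bbE_{Gx}[\|c_x-f\|_2^2]$. I would then lower bound this by minimizing over the free vector $c_x$ using the standard bias--variance identity: for any constant $c$,
\[
\bbE_{Gx}[\|c-f\|_2^2] = \|c-\bbE_{Gx}[f]\|_2^2 + \bbV_{Gx}[f]\ \geq\ \bbV_{Gx}[f],
\]
with equality precisely when $c=\bbE_{Gx}[f]$. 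Hence each inner integral is at least $p(Gx)\,\bbV_{Gx}[f]$, and integrating over $F$ yields $\err(h)\geq\int_F p(Gx)\,\bbV_{Gx}[f]\,dx$.

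Finally I would record tightness: the bound is attained by the invariant model that sets $h(x)=\bbE_{Gx}[f]$ on every orbit, which is admissible since $\{h\}$ is assumed arbitrarily expressive subject only to invariance, and this assignment is automatically $G$-invariant and well defined on orbits. The main technical obstacle is not the pointwise variance minimization, which is elementary, but justifying the orbit/fundamental-domain iterated integral rigorously---in particular that the Jacobian/stabilizer factor $\alpha(x,g)$ and the $\nu$-null pairwise intersections legitimately let us exchange $\int_X$ for $\int_F\int_{Gx}$---and confirming that the orbitwise minimizers $\bbE_{Gx}[f]$ assemble into a single measurable invariant $h$ that realizes the bound.
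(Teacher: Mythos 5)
Your proposal is correct and follows essentially the same route as the paper's proof: decompose $\err(h)$ into an iterated integral over the fundamental domain $F$ and the orbits $Gx$, use invariance to reduce each inner integral to a minimization over a single constant $c\in\bbR^n$, identify the minimizer $c^*=\bbE_{Gx}[f]$, and conclude that each orbit contributes at least $p(Gx)\bbV_{Gx}[f]$ before integrating over $F$. The only cosmetic differences are that you justify the per-orbit minimization via the bias--variance identity rather than by setting the derivative with respect to $c$ to zero, and that you explicitly record tightness of the bound, which the paper notes only for the classification case.
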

\begin{proof}
The error function (Equation~\ref{eqn:err_fcns}) can be written as:
\begin{align}
\nonumber
\err(h) =\int_X p(x)||f(x)-h(x)||_2^2dx
=\int_{x\in F} \int_{z\in Gx} p(z)||f(z) - h(z)||_2^2dzdx.
\end{align}
Denote $e(x)=\int_{Gx} p(z)||f(z) - h(z)||_2^2dz$. Since $h$ is $G$-invariant, there exists $c \in \bbR^n$ such that $h(z)=c$ for all $z\in Gx$. Then $e(x)$ can be written as $e(x)=\int_{Gx}p(z)||f(z) - c||_2^2dz.$
Taking the derivative of $e(x)$ with respect to $c$ and setting it to 0 gives $c^*$, the minimum of $e(x)$, $c^*=\frac{\int_{Gx}p(z)f(z)dz}{\int_{Gx}p(z)dz}=\bbE_{Gx}[f].$
Substituting $c^*$ into $e(x)$ 
we have
\begin{align}
\nonumber
e(x) \geq  \int_{Gx}p(Gx)\frac{p(z)}{p(Gx)}||\bbE_{Gx}[f] - f(z)||_2^2dz
= p(Gx)\bbV_{Gx}[f].
\end{align}

We can obtain the claim by taking the integral of $e(x)$ over the fundamental domain $F$.
\end{proof}

\subsection{Lower Bound for Equivariant Regression}
We now prove a lower bound for  $\err(h)$ in a regression task given the model $h$ is equivariant, that is, $h(\rho_X(g)x)=\rho_Y(g)h(x)$ where $g\in G, \rho_X$ and $\rho_Y$ are group representations associated with $X$ and $Y$. We will denote $\rho_X(g)x$ and $\rho_Y(g)y$ by $gx$ and $gy$, leaving the representation implicit. Assume $Y=\bbR^n$ and $\alpha(x, g)$ is the same as in equation~\ref{eqn:k_intG}. Let $\id$ be the identity. Define a matrix $Q_{Gx} \in \bbR^{n \times n}$ and $q(gx) \in \bbR^{n \times n}$ so that $\int_G q(gx) dg=\id$ by 
\begin{equation}
\label{eqn:QGx}
Q_{Gx} = \int_G p(gx)\rho_Y(g)^T\rho_Y(g)\alpha(x, g)dg,\qquad q(gx)=Q_{Gx}^{-1}p(gx)\rho_Y(g)^T\rho_Y(g)\alpha(x, g).
\end{equation}


Here, for simplicity, we assume $Q_{Gx}$ is an invertible matrix. (See Appendix~\ref{app:proof_equi_reg} for general case).


If $f$ is equivariant, $g^{-1}f(gx)$ is a constant for all $g \in G$. Define $\mathbf{E}_{G}[f, x]$
\begin{equation}
\label{eqn:G_norm_E}
\mathbf{E}_{G}[f, x] = \int_G q(gx)g^{-1}f(gx) dg.
\end{equation}

\begin{theorem}
\label{theo:equivar_regression}
The error of $h$ has lower bound $\err(h)\geq \int_F \int_G p(gx)|| f(gx) - g\mathbf{E}_{G}[f, x]||^2_2 \alpha(x, g) dg dx$.
\end{theorem}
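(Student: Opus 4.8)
The plan is to mirror the structure of the invariant regression proof (Theorem~\ref{theo:invar_regression}), adapting each step to the equivariant constraint $h(gx)=gh(x)$. First I would use iterated integration over the fundamental domain, exactly as in Equation~\ref{eqn:err4}, to write
\begin{equation}
\nonumber
\err(h) = \int_{x\in F}\int_{z\in Gx} p(z)\|f(z)-h(z)\|_2^2\, dz\, dx,
\end{equation}
and then lift the inner integral from the orbit $Gx$ to the group $G$ via the factor $\alpha(x,g)$ from Equation~\ref{eqn:k_intG}, giving an inner term
\begin{equation}
\nonumber
e(x) = \int_G p(gx)\|f(gx)-h(gx)\|_2^2\,\alpha(x,g)\, dg.
\end{equation}
The key move that distinguishes this from the invariant case is that $h$ is no longer constant on the orbit: equivariance forces $h(gx)=g\,h(x)$, so the single free parameter is $c:=h(x)\in\bbR^n$, and I would substitute $h(gx)=gc$ to get $e(x)=\int_G p(gx)\|f(gx)-gc\|_2^2\,\alpha(x,g)\,dg$.

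Next I would minimize $e(x)$ over the free vector $c$. Because $\|f(gx)-gc\|_2^2 = \|g(g^{-1}f(gx)-c)\|_2^2$, this is a weighted least-squares problem in $c$ with the twist that the ``weight'' is the matrix $\rho_Y(g)^T\rho_Y(g)$ rather than a scalar; expanding the norm and differentiating in $c$ yields a normal equation whose solution is precisely $c^\ast=\mathbf{E}_G[f,x]$ as defined in Equation~\ref{eqn:G_norm_E}, using the definitions of $Q_{Gx}$ and $q(gx)$ in Equation~\ref{eqn:QGx} (the normalization $\int_G q(gx)\,dg=\id$ is exactly what makes $\mathbf{E}_G[f,x]$ the correct matrix-weighted average of $g^{-1}f(gx)$). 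I would then substitute $c^\ast$ back and show $e(x)\geq \int_G p(gx)\|f(gx)-g\mathbf{E}_G[f,x]\|_2^2\,\alpha(x,g)\,dg$, and integrate over $F$ to obtain the claimed bound. Tightness follows as before by noting a universal equivariant approximator can realize $h(x)=c^\ast$ on each orbit.

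The main obstacle is the matrix-weighted least-squares minimization in the non-orthogonal case. When $\rho_Y(g)$ is not orthogonal, $g^{-1}\neq g^T$ and the cross terms do not collapse as cleanly as in the invariant scalar case; I must carefully verify that differentiating $e(x)$ with respect to $c$ and setting the gradient to zero produces the normal equation $Q_{Gx}\, c = \int_G p(gx)\rho_Y(g)^T f(gx)\,\alpha(x,g)\,dg$, and then confirm that solving it via $Q_{Gx}^{-1}$ reproduces exactly $\mathbf{E}_G[f,x]=\int_G q(gx)\,g^{-1}f(gx)\,dg$. A subtle point is that $\rho_Y(g)^T\rho_Y(g)\,g^{-1}f(gx)$ must simplify to $\rho_Y(g)^T f(gx)$, which requires $\rho_Y(g)\rho_Y(g)^{-1}=\id$ inside the integrand — this is where the definition of $q(gx)$ does the bookkeeping. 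The other care point is the invertibility assumption on $Q_{Gx}$, which the statement grants for simplicity and whose general (pseudoinverse) treatment is deferred to Appendix~\ref{app:proof_equi_reg}; I would flag that the convexity of $e(x)$ in $c$ guarantees the stationary point is the global minimizer regardless.
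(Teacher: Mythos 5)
Your proposal follows essentially the same route as the paper's proof in Appendix~\ref{app:proof_equi_reg}: iterated integration over $F$ lifted to $G$ via $\alpha(x,g)$, substituting $h(gx)=gc$, minimizing the matrix-weighted least-squares objective in $c$ to get $Q_{Gx}c^\ast = Q_{Gx}\mathbf{E}_G[f,x]$, and substituting back using the symmetry of $Q_{Gx}$ and $q(gx)$. Your normal equation $Q_{Gx}c = \int_G p(gx)\rho_Y(g)^T f(gx)\alpha(x,g)\,dg$ and the simplification $\rho_Y(g)^T\rho_Y(g)g^{-1}f(gx)=\rho_Y(g)^Tf(gx)$ are exactly the computations the paper performs, so the proposal is correct.
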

See Appendix~\ref{app:proof_equi_reg} for the proof. Intuitively, $\mathbf{E}_{G}[f, x]$ is the minimizer obtained by taking the mean of all inversely transformed $f(x)$ for all $x$ in the orbit, see Figure~\ref{fig:reg_example}cd and Example~\ref{exp:reg} below. 

\begin{corollary}
\label{cor:lb_equ_reg_ortho}
Denote $p(Gx)=\int_{Gx}p(z)dz$. Denote $q_x: g\mapsto q(gx)$. Define $G$-stabilized $f$ as $f_x: g\mapsto g^{-1}f(gx)$. When $\rho_Y$ is an orthogonal representation $\rho_Y: G \to \mathrm{O}(n) \subset GL(n)$, $q_x$ is a probability density function on $G$. Denote the variance of $f_x$ as $\bbV_G[f_x]$ where $g\sim q_x$. 
The error has a lower bound $\err(h)\geq \int_F p(Gx)\bbV_{G}[f_x]dx$.
\end{corollary}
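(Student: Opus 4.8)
The plan is to specialize the lower bound of Theorem~\ref{theo:equivar_regression} by exploiting the fact that an orthogonal representation preserves the $L_2$ norm. First I would simplify the normalizing matrix $Q_{Gx}$ from Equation~\ref{eqn:QGx}: since $\rho_Y(g)\in\mathrm{O}(n)$ means $\rho_Y(g)^T\rho_Y(g)=\id$, we obtain $Q_{Gx}=\left(\int_G p(gx)\alpha(x,g)\,dg\right)\id = p(Gx)\,\id$, where the identity $\int_G p(gx)\alpha(x,g)\,dg = p(Gx)$ is exactly the reparametrization property of $\alpha$ (the same one that turns Equation~\ref{eqn:kx} into Equation~\ref{eqn:k_intG}, Appendix~\ref{app:reparam}). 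Consequently $q(gx)=q_x(g)\,\id$ is a scalar multiple of the identity with $q_x(g)=p(gx)\alpha(x,g)/p(Gx)$, and integrating gives $\int_G q_x(g)\,dg=1$, so $q_x$ is a genuine probability density on $G$; this is the first assertion of the corollary.

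Next I would rewrite the mean. Under the scalar form of $q(gx)$, Equation~\ref{eqn:G_norm_E} reads $\mathbf{E}_G[f,x]=\int_G q_x(g)\,g^{-1}f(gx)\,dg=\int_G q_x(g)\,f_x(g)\,dg$, i.e. $\mathbf{E}_G[f,x]$ is precisely the $q_x$-mean of the $G$-stabilized function $f_x$. The key algebraic step is then to factor the integrand of Theorem~\ref{theo:equivar_regression}: writing $f(gx)=g\,f_x(g)$ gives $f(gx)-g\mathbf{E}_G[f,x]=g\bigl(f_x(g)-\mathbf{E}_G[f,x]\bigr)$, and because $\rho_Y(g)$ is orthogonal it preserves norms, so $\|f(gx)-g\mathbf{E}_G[f,x]\|_2^2=\|f_x(g)-\mathbf{E}_G[f,x]\|_2^2$.

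Finally I would substitute $p(gx)\alpha(x,g)=p(Gx)\,q_x(g)$ into the bound of Theorem~\ref{theo:equivar_regression} and pull $p(Gx)$ out of the inner integral, recognizing what remains as the variance: the inner integral becomes $p(Gx)\int_G q_x(g)\|f_x(g)-\mathbf{E}_G[f,x]\|_2^2\,dg=p(Gx)\,\bbV_G[f_x]$. Integrating over $F$ then yields the claimed bound $\err(h)\geq\int_F p(Gx)\bbV_G[f_x]\,dx$. I expect the only real subtlety to be the consistent bookkeeping of the factor $\alpha(x,g)$ and the normalization by $p(Gx)$: orthogonality is what makes the norm-preservation step clean, but one must check that the same $\alpha$ appearing in $Q_{Gx}$, in $q(gx)$, and in the theorem's integrand cancels consistently, so that $q_x$ is normalized to a density and the twisted residual $f(gx)-g\mathbf{E}_G[f,x]$ collapses to the stabilized residual $f_x(g)-\mathbf{E}_G[f,x]$.
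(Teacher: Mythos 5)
Your proposal is correct and follows essentially the same route as the paper's proof in Appendix~\ref{app:proof_equ_reg_ortho}: orthogonality collapses $\rho_Y(g)^T\rho_Y(g)$ to $\id$, making $Q_{Gx}=p(Gx)\,\id$ and $q_x$ a scalar probability density, and norm-preservation turns the residual $f(gx)-g\mathbf{E}_G[f,x]$ into the stabilized residual $f_x(g)-\bbE_G[f_x]$, after which $p(gx)\alpha(x,g)=p(Gx)q_x(g)$ yields $e(x)=p(Gx)\bbV_G[f_x]$ and integration over $F$ finishes. The only difference is cosmetic ordering --- you identify $Q_{Gx}=p(Gx)\,\id$ up front while the paper performs the norm simplification first and evaluates $Q_{Gx}=p(Gx)$ at the end --- and your $\alpha$-bookkeeping matches the paper's reparametrization identity exactly.
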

See Appendix~\ref{app:proof_equ_reg_ortho} for the proof. Notice that Corollary~\ref{cor:lb_equ_reg_ortho}
is a generalization of Theorem~\ref{theo:invar_regression}. That is, Theorem~\ref{theo:invar_regression} can be recovered by taking $\rho_Y(g)=\id$ (See the proof in Appendix~\ref{app:equ_reg_I}).

\begin{example}[Lower bound example of a regression task]
\label{exp:reg}
\begin{figure}[t]
    \centering
    \includegraphics[width=\linewidth]{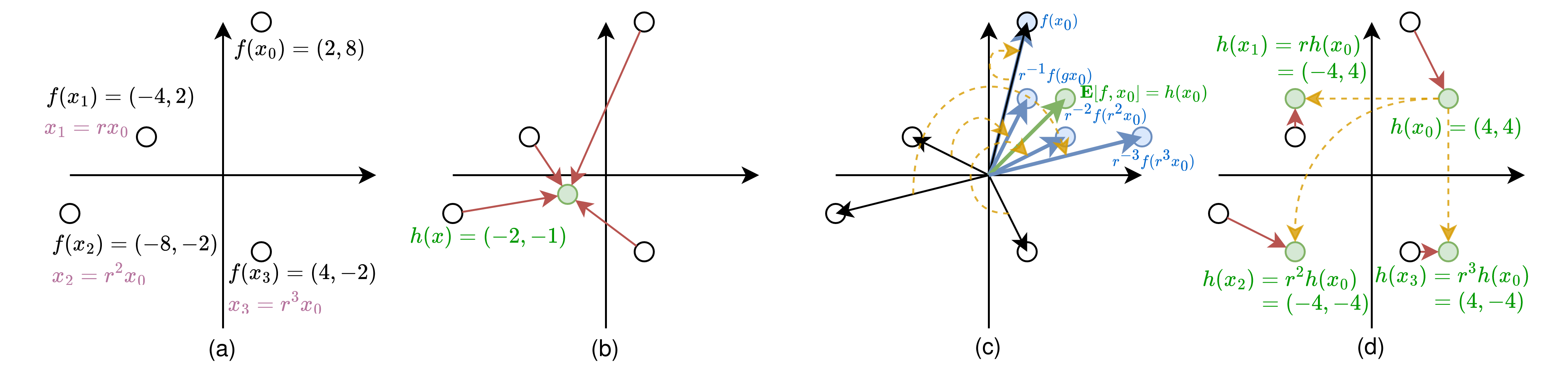}
    \vspace{-0.7cm}
    \caption{An example regression task. (a) The value of $f(x)$ and the transformation rule (purple) with respect to group $G=C_4$ for all $x\in X$. The four points belong to a single orbit. (b) When using an invariant network, the minimal error (red) is obtained when the invariant network outputs the mean value (green) of the orbit. (c) For an equivariant network, the minimizer (green) can be obtained by taking the mean of the $G$-stabilized $f(x)$ (inversely transformed) (blue) for all $x$ in the orbit with respect to the transformation rule in the orbit. (d) The minimal error of an equivariant network.
    }
    \label{fig:reg_example}
\end{figure}
Consider a regression problem where $X=\{x_0, x_1, x_2, x_3\}$ and $Y=\bbR^2$. Assume $p$ is uniform density. The cyclic group $G=C_4=\{e, r, r^2, r^3\}$ (where $e=0$ rotation and $r=\pi/2$ rotation) acts on $X$ through $x_1=rx_0; x_2=rx_1; x_3=rx_2; x_0=rx_3$ (i.e., there is only one orbit $Gx=X$). $g\in G$ acts on $y\in Y$ through $\rho_Y(g)=\big(\begin{smallmatrix}
      \cos g & -\sin g\\
      \sin g & \cos g
\end{smallmatrix}\big)$. Figure~\ref{fig:reg_example}a shows the output of $f(x), \forall x\in X$. \textbf{First}, consider a $G$-invariant network $h$. Since there is only one orbit, Theorem~\ref{theo:invar_regression} can be simplified as: $\err(h)\geq \bbV_X[f]$, the variance of $f$ over $X$. This can be calculated by first taking the mean of $f(x)$ then calculating the mean square error (MSE) from all $x$ to the mean (Figure~\ref{fig:reg_example}b). \textbf{Second}, consider a $G$-equivariant network $h$. Since $G$ is discrete, $gx$ permutes the order of $X$, $\rho_Y$ is an orthogonal representation, and there is only one orbit, Corollary~\ref{cor:lb_equ_reg_ortho} can be written as  $\err(h)\geq \mathbb{V}_{G}[f_x]$, the variance of $G$-stabilized $f$. First, to calculate $\mathbb{E}_{G}[f_x]$, let $x=x_0$, we stabilize $g$ from $f$ by $g^{-1}f(gx)$ for all $g\in G$, then take the mean (Figure~\ref{fig:reg_example}c). We can then find $\mathbb{V}_{G}[f_x]$ by calculating the MSE between $f(x)$ and transformed mean $g\mathbb{E}_{G}[f_x]$ (Figure~\ref{fig:reg_example}d). Appendix~\ref{sec:exp_reg} shows an experiment in this example's environment.
\end{example}

\section{Harmful Extrinsic Equivariance}
\label{sec:harmful_ext}

\leavevmode \citet{surprising} demonstrate that extrinsic equivariance, where the symmetry imposed on the model leads to out-of-distribution data with respect to the input distribution, can lead to a higher performance on the original training data. In this section, we argue that this is not necessarily true in all cases, and there can exist  scenarios where extrinsic equivariance can even be harmful to the learning problem.

Consider a binary classification task where the domain is discrete and contains only a set of four points $S \subset \bbR^3$, and their labels are either $\{-1, +1\}$ as shown in Figure~\ref{fig:xor_orig}. We consider the probability density $p$ to be uniform for this domain, i.e., $p(x)=1/4$ for the four points $S$, and $p=0$ elsewhere. This domain is used for both model training and testing so there is no distribution shift.
We consider two model classes, $\mcF_N$, the set of all linear models, and $\mcF_E$, the set of all linear models which are invariant with respect to the cyclic group $C_2 = \{ 1, g \}$, where $g(x_1, x_2, x_3)=(x_1, x_2, -x_3)$. $\mcF_N$ corresponds to an unconstrained or a non-equivariant model class and $\mcF_E$ corresponds to an extrinsically equivariant class for this domain. For the labeling shown in Figure~\ref{fig:xor_extrinsic}, the hyperplane $x_3=0$ correctly classifies all samples and is contained in $\mcF_N$. However, a function $f_e \in \mcF_E$ is equivalent to a linear classifier on $\bbR^2$ and effectively sees the data as Figure~\ref{fig:xor_transformed}\footnote{Notice that the four additional points in Figure~\ref{fig:xor_transformed} compared with (a) are not in the domain, they are created through applying the transformation rule of $\mathcal{F}_E$ onto the domain.}. This exclusive-or problem does not admit a linear solution (it can be correct for at most 3 points).

\begin{wrapfigure}[18]{r}{0.5\linewidth}
\vspace{-0.5cm}
\captionsetup[subfigure]{justification=centering}
\centering
\subfloat[Data]{\includegraphics[width=0.33\linewidth,trim={150 30 70 80},clip]{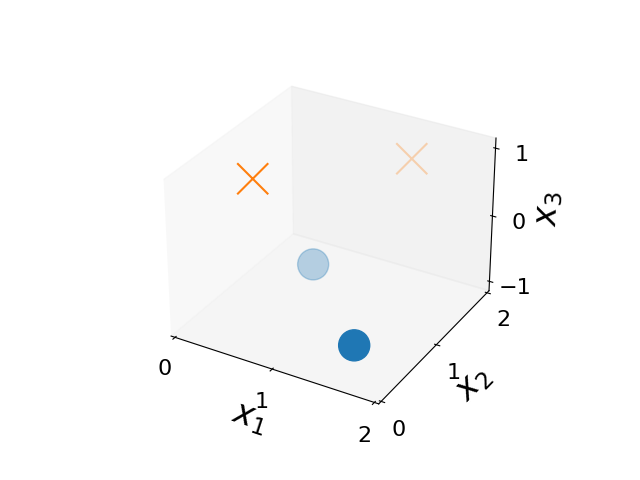}\label{fig:xor_orig}}
\subfloat[Transformed data]{\includegraphics[width=0.33\linewidth,trim={150 30 70 80},clip]{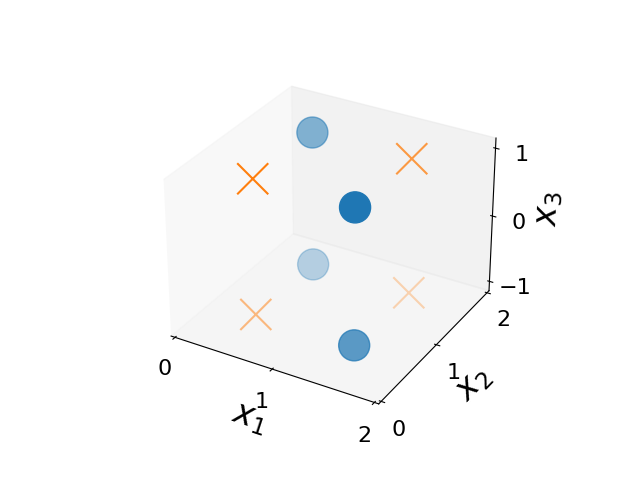}\label{fig:xor_transformed}}
\subfloat[$C_2$-equivariant view]{\includegraphics[width=0.33\linewidth,trim={30 10 30 150},clip]{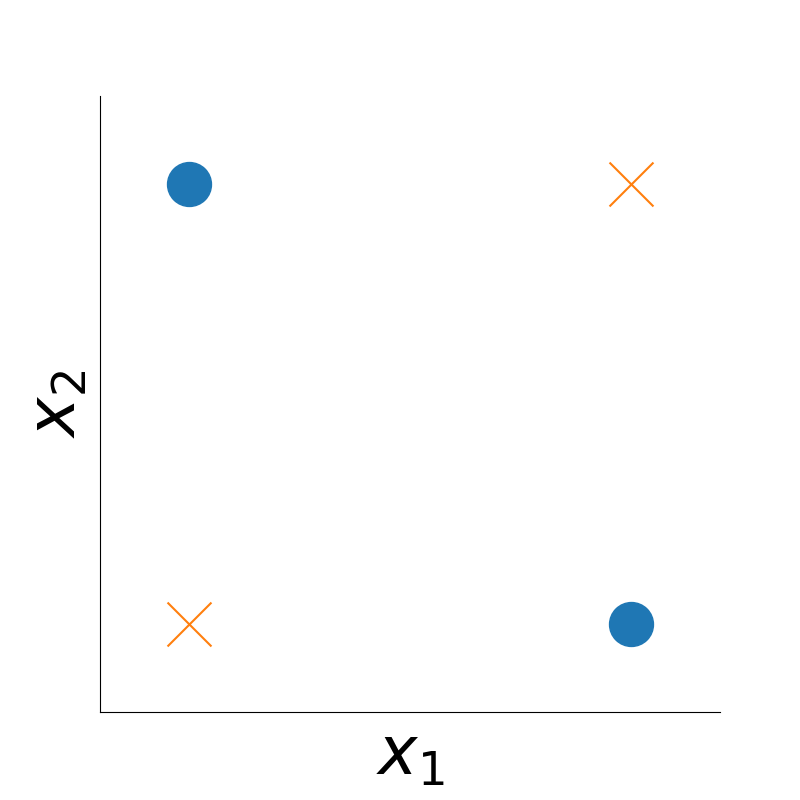}\label{fig:xor_flattened}}
\caption{An example dataset where extrinsic equivariance increases the problem difficulty. The samples are of the form $x=(x_1, x_2, x_3)$ and the labels are shown as different shapes. A $C_2$-equivariant linear model transforms the original data (a) into (b), which is equivalent to viewing the data as in (c). The original task has an easy solution (e.g. hyperplane at $x_3=0$), while the $C_2$-invariant view is the classic exclusive-or problem.}
\label{fig:xor_extrinsic}
\end{wrapfigure}
Concretely, we can compute the empirical Rademacher complexity, a standard measure of model class expressivity, for non-equivariant and extrinsically equivariant model classes and show that $\mcF_E$ has lower complexity than $\mcF_N$. Recall that empirical Rademacher complexity is defined as $\mfR_S\left(\mcF\right) = \bbE_{\sigma}\left[\sup_{f \in \mcF} \frac{1}{m}\sum_{i=1}^{m}\sigma_i f(x^i)\right]$, where $S$ is the set of $m$ samples and $\sigma=(\sigma_1, \dots, \sigma_m)^\top$, $\sigma_i \in \{-1, +1\}$ are independent uniform Rademacher random variables, and $x^i$ is the $i$-th sample. As there exists some linear function $f_n \in \mcF_N$ that fully classifies $S$ for any combination of labels, $\mfR_S(\mcF_N) = 1$. For the extrinsic equivariance case, of the $16$ possible label combinations, there are two cases where $f_e \in \mcF_E$ can at most classify 3 out of 4 points correctly, and thus $\mfR_S(\mcF_E) = \frac{31}{32} < \mfR_S(\mcF_N)$ (see Appendix~\ref{app:xor_complexity} for the calculations). This illustrates that in certain cases, extrinsic equivariance can lead to lower model expressivity than no equivariance and thus be harmful to learning. 

\section{Experiments}

We perform experiments to validate our theoretical analysis on both the lower bounds (Section~\ref{sec:bound}) and the harmful extrinsic equivariance (Section~\ref{sec:harmful_ext}).  We find that our bounds accurately predict empirical model error. In addition to the experiments in this section, Appendix~\ref{sec:exp_square} shows an experiment verifying our classification bound (Theorem~\ref{the:lb_cls}) and Appendix~\ref{sec:exp_reg} shows an experiment verifying our regression bound (Theorem~\ref{theo:invar_regression} and~\ref{theo:equivar_regression}).
The experiment details are in Appendix~\ref{app:exp_detail}.

\subsection{Swiss Roll Experiment}
\label{sec:exp_swiss_roll}

We first perform an experiment in a vertically separated Swiss Roll data distribution, see Figure~\ref{fig:data_ext_3d}\footnote{For visualization, we show a simpler version of the data distribution. See Appendix~\ref{app:real_data} for the actual one.}. This example, similar to that in Section~\ref{sec:harmful_ext}, demonstrates that a $C_2$-invariant model effectively ``flattens'' the $z$-dimension of the data so it must learn the decision boundary between two spirals (Figure~\ref{fig:data_ext_2d}), whereas the non-equivariant model only needs to learn a horizontal plane to separate the classes, a significantly easier task. Besides the extrinsic data distribution, we consider two other data distributions shown in Figure~\ref{fig:data_inc} and Figure~\ref{fig:data_cor}, where a $C_2$-invariant model will observe incorrect and correct equivariance due to the mismatched and matched data labels in the two $z$ planes.

\begin{figure}[t]
\captionsetup[subfigure]{justification=centering}
\centering
\subfloat[Swiss Roll Data]{\includegraphics[width=0.165\linewidth,trim={11cm 0 5cm 5cm},clip]{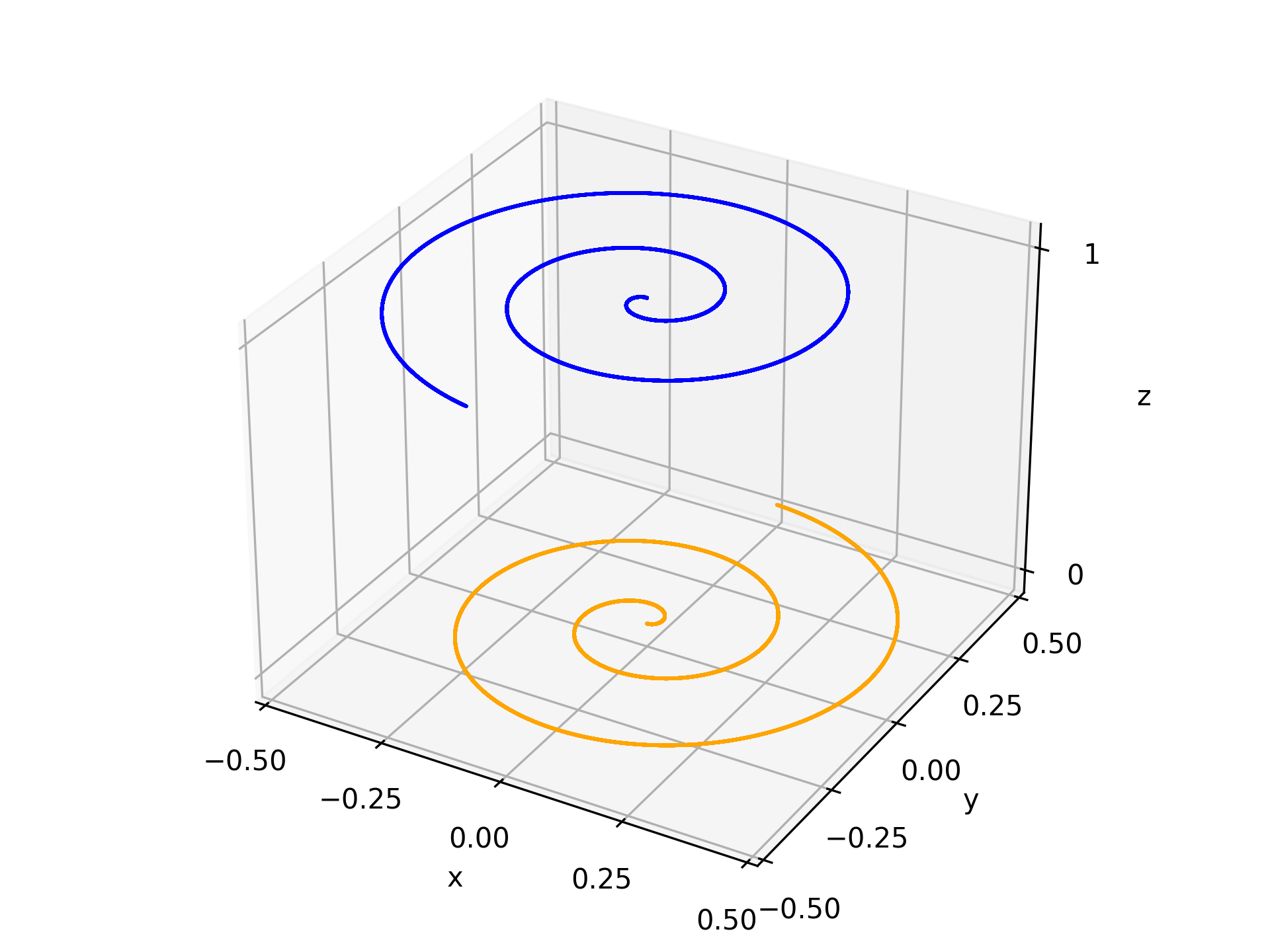}\label{fig:data_ext_3d}}
\subfloat[$C_2$-Invariant View]{\includegraphics[width=0.165\linewidth]{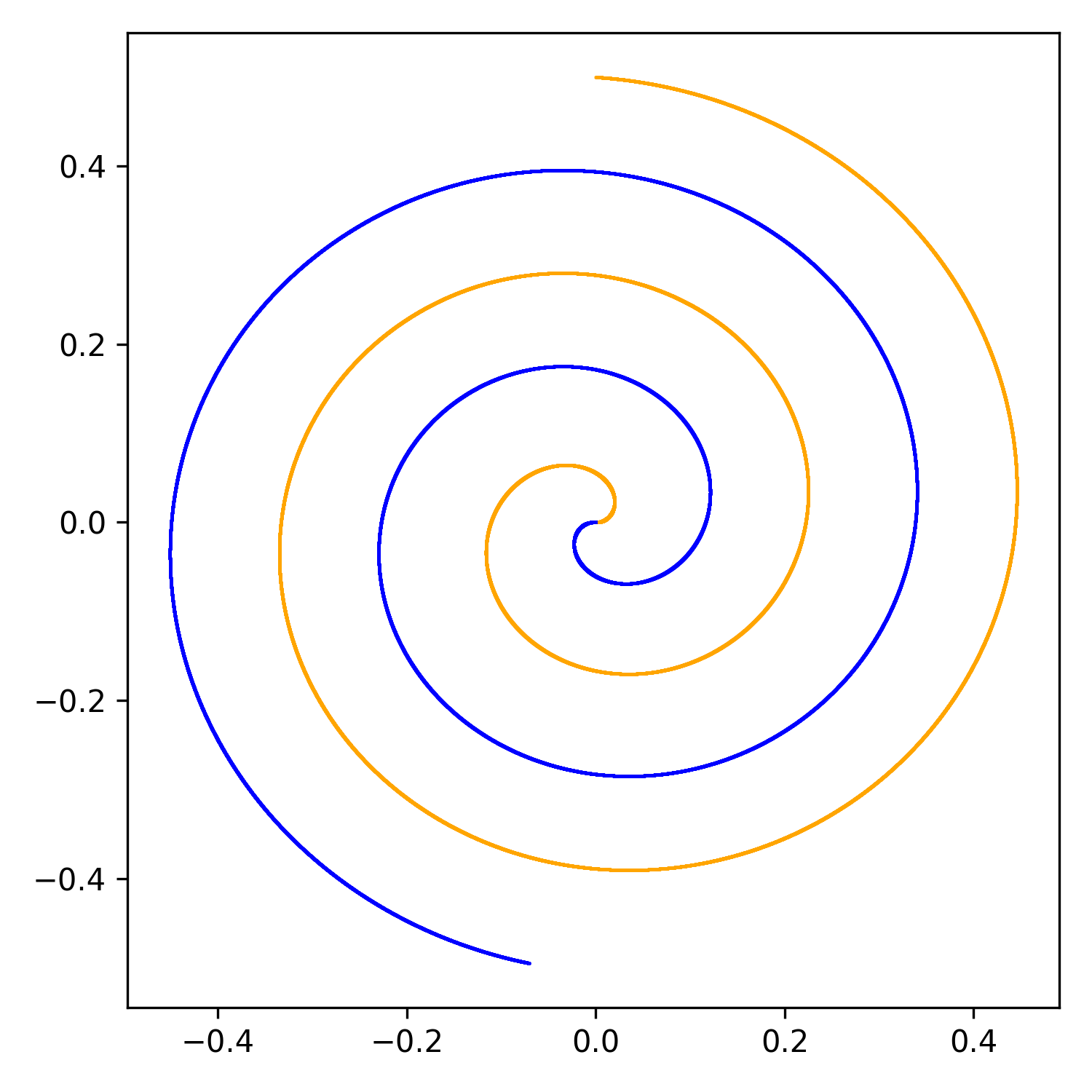}\label{fig:data_ext_2d}}
\subfloat[Incorrect]{\includegraphics[width=0.165\linewidth,trim={11cm 0 5cm 5cm},clip]{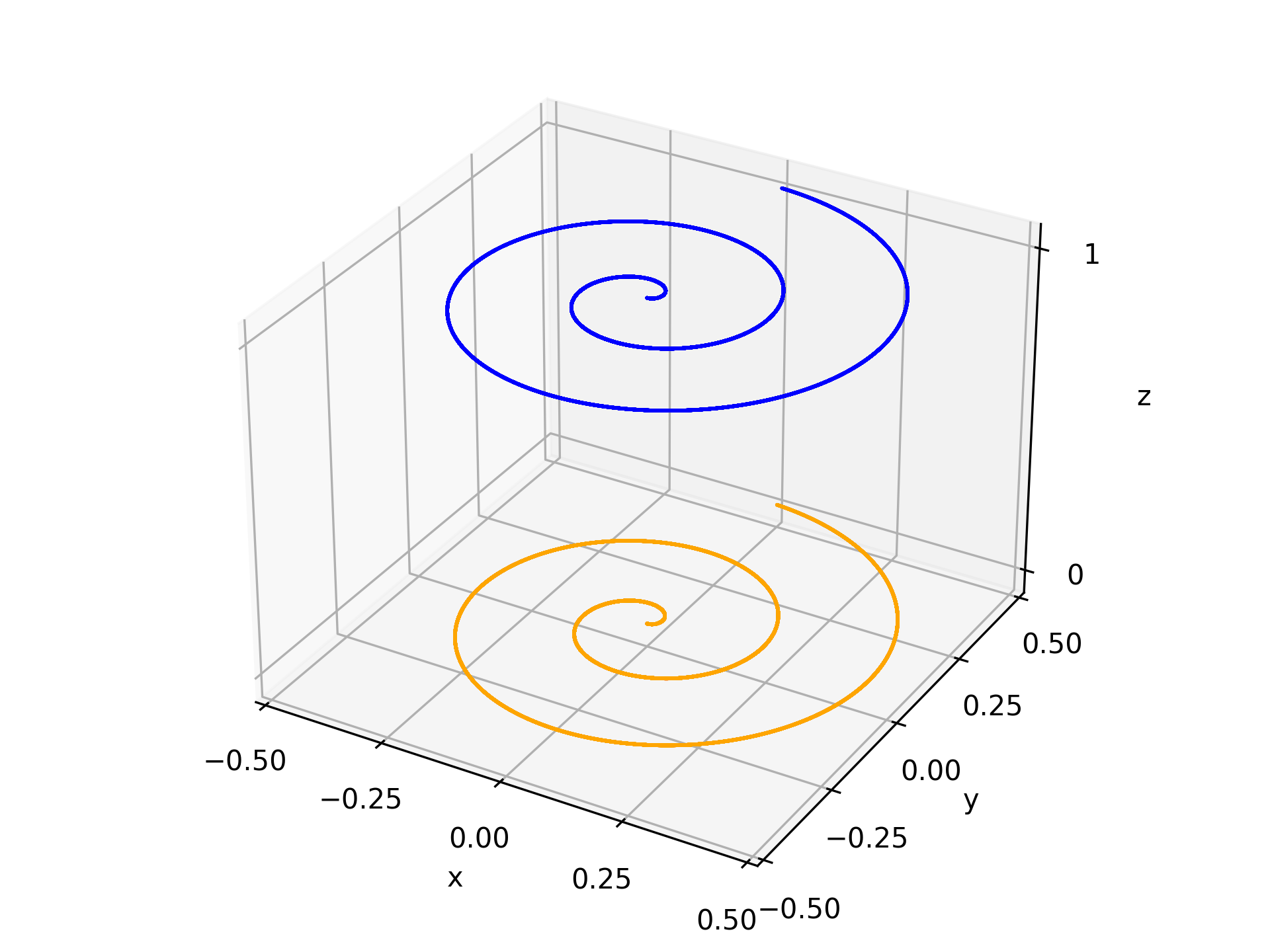}\label{fig:data_inc}}
\subfloat[Correct]{\includegraphics[width=0.165\linewidth,trim={11cm 0 5cm 5cm},clip]{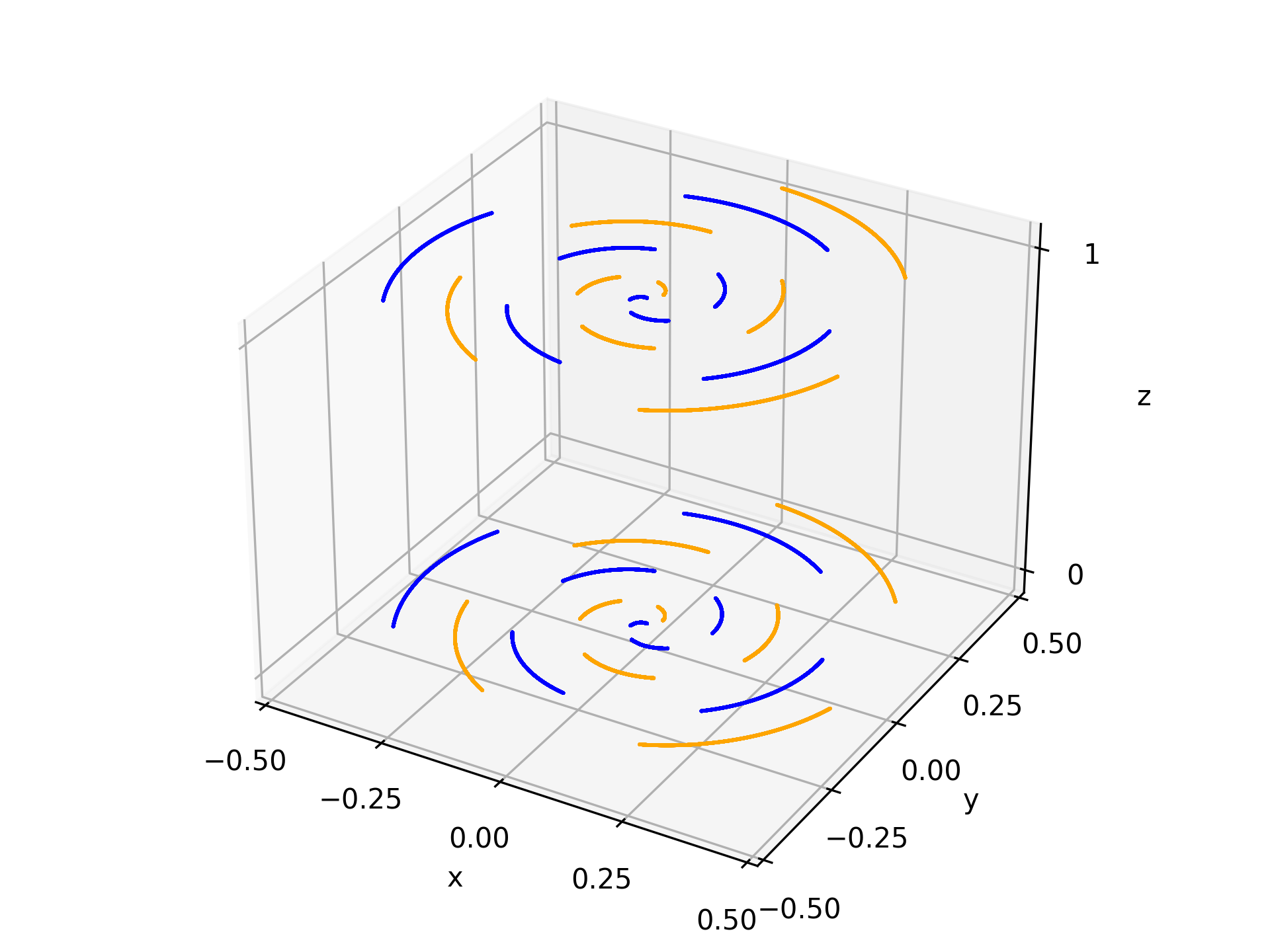}\label{fig:data_cor}}
\subfloat[$c=.5, i=.5, e=0$]{{\includegraphics[width=0.165\linewidth,trim={11cm 0 5cm 5cm},clip]{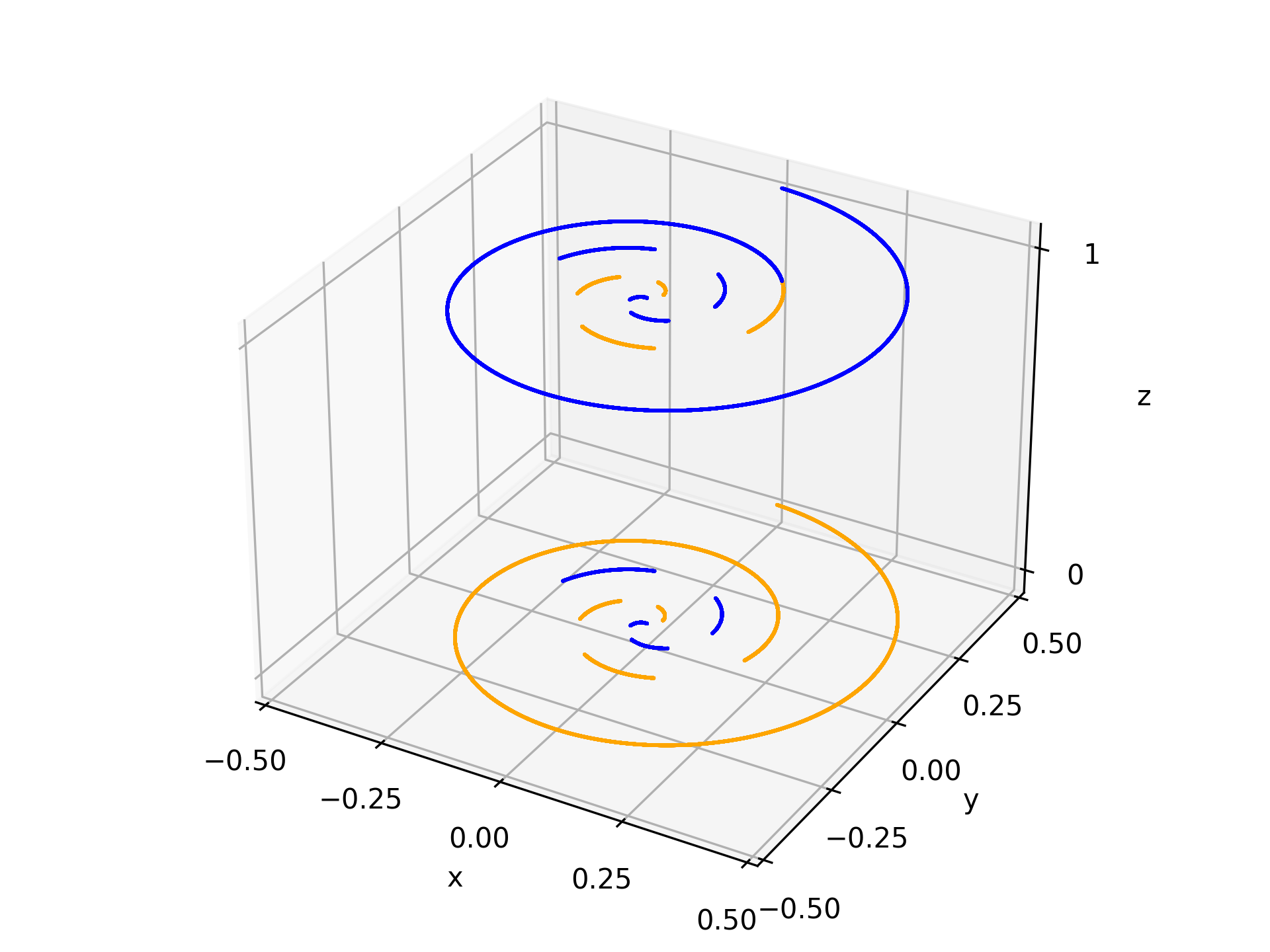}}}
\subfloat[$c=.5, i=0, e=.5$]{\includegraphics[width=0.165\linewidth,trim={11cm 0 5cm 5cm},clip]{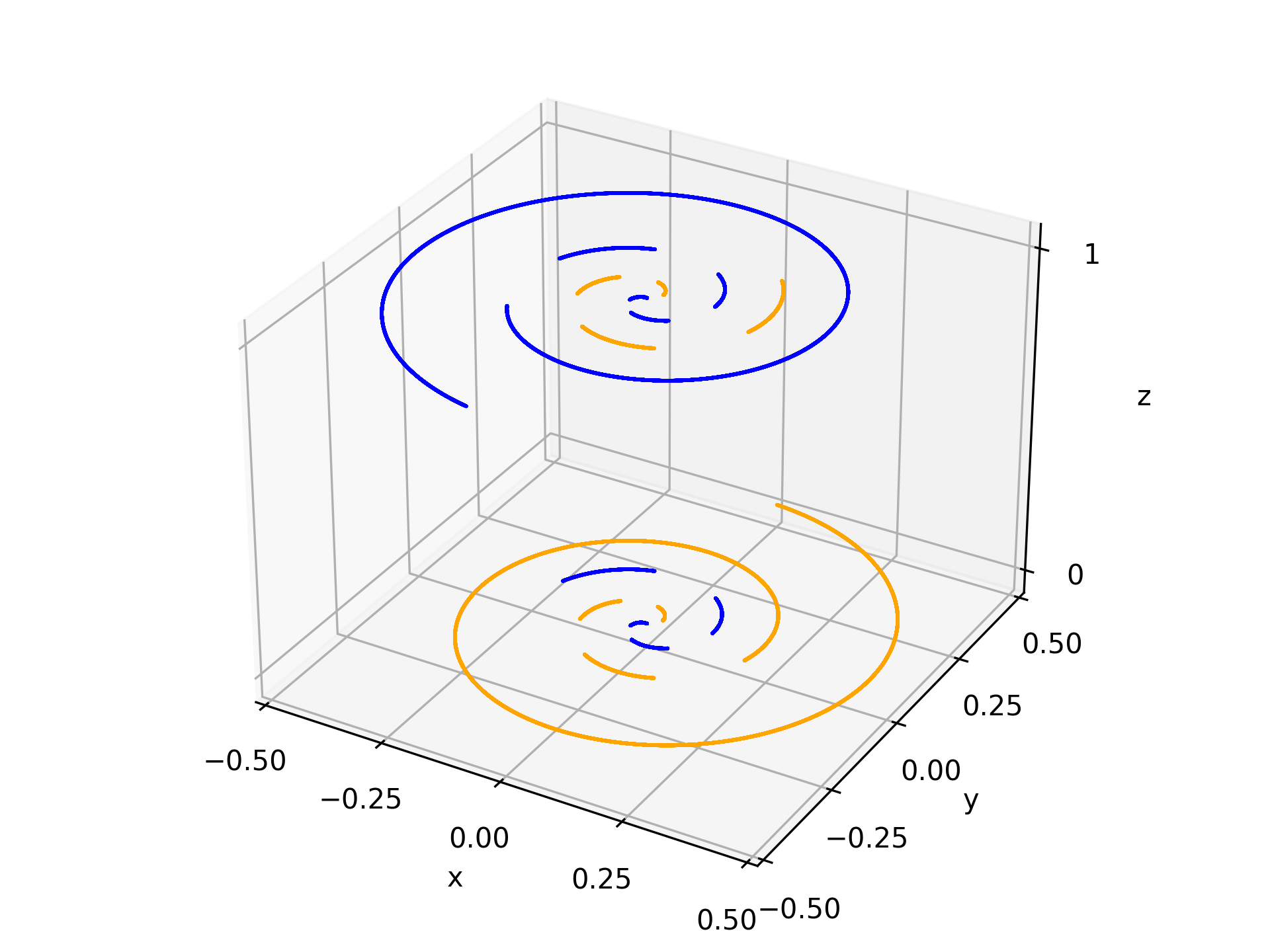}}
\caption{(a) (b) The Swiss Roll data distribution that leads to harmful extrinsic equivariance. 
(c) (d) The correct and incorrect data distribution in the Swiss Roll experiment.  Here the spirals overlap with mismatched and matched labels respectively. (e) (f) Data distribution example with different correct ratio ($c$), incorrect ratio ($i$), and extrinsic ratio ($e$) values.}
\label{fig:swiss_roll_exp_data}
\end{figure}

We combine data from all three distributions in various proportions to test the performance of a $z$-invariant network (INV) with a baseline unconstrained network (MLP). Let $c$ be the correct ratio, the proportion of data from the correct distribution. Define the incorrect ratio $i$ and extrinsic ratio $e$ similarly. We consider all $c, i, e$ that are multiples of 0.125 such that $c+i+e=1$. Figure~\ref{fig:swiss_roll_exp_data}ef shows some example data distributions. Relative to INV, this mixed data distribution has partial correct, incorrect, and extrinsic equivariance, which is not fully captured in prior work~\cite{surprising}. 
Based on Proposition~\ref{prop:err_T}, we have $k(Gx)=0.5$ for $x$ drawn from the incorrect distribution, and $k(Gx)=0$ otherwise. Since the data is evenly distributed, we can calculate the error lower bound $\err(h)\geq 0.5 i$.

\begin{wrapfigure}[16]{r}{0.51\linewidth}\captionsetup[subfigure]{justification=centering}
\centering
\vspace{-0.2cm}
\subfloat[]{\includegraphics[width=0.5\linewidth]{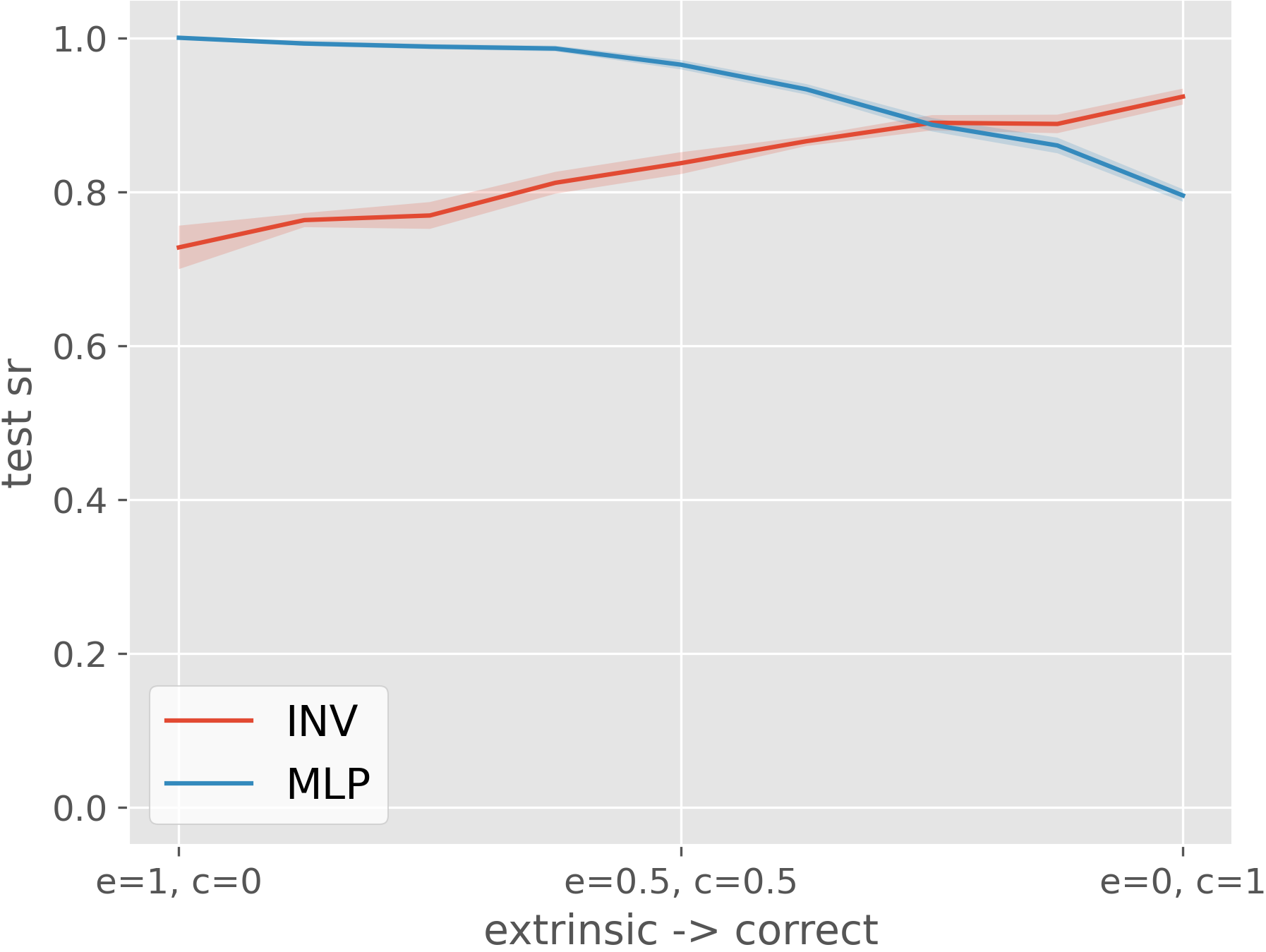}\label{fig:exp_swiss_ec}}
\subfloat[]{\includegraphics[width=0.5\linewidth]{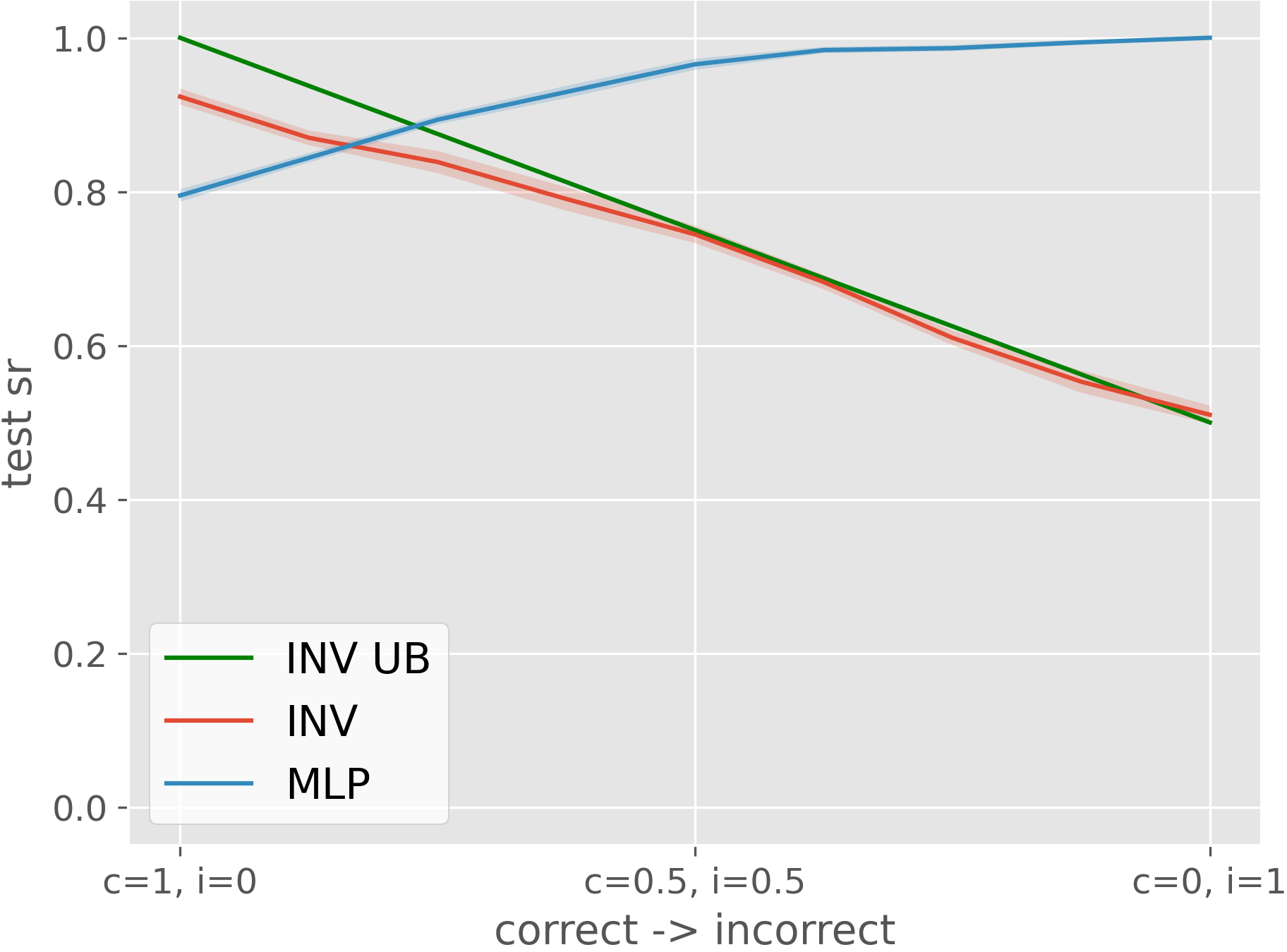}\label{fig:exp_swiss_ci}}
\caption{Result of the Swiss Roll experiment. (a) test success rate of an invariant network (red) and an unconstrained MLP (blue) with different extrinsic and correct ratio when incorrect ratio is 0. (b) same as (a) with different correct and incorrect ratio when extrinsic ratio is 0. Averaged over 10 runs.
}
\label{fig:exp_result}
\end{wrapfigure}
\textbf{Results.} Figure~\ref{fig:exp_swiss_ec} shows the test success rate of INV compared with MLP when $e$ and $c$ vary with $i=0$. When $e$ increases, the performance of INV decreases while the performance of MLP shows an inverse trend, demonstrating that extrinsic equivariance is harmful in this experiment. Figure~\ref{fig:exp_swiss_ci} shows the performance of INV and MLP when $c$ and $i$ vary while $e=0$. The green line shows the upper bound of the test success rate ($1-0.5i$). The experimental result matches our theoretical analysis quite closely. Notice that when $c$ increases, there is a bigger gap between the performance of the network and its theoretical upper bound, since classification in the correct distribution is a harder task. Appendix~\ref{app:real_data} shows the complete results of this experiment.

\subsection{Digit Classification Experiment}
\label{sec:exp_digit}


In this experiment, we apply our theoretical analysis to a realistic digit classification task using both the printed digit dataset~\cite{print_digit} and the MNIST handwritten digit dataset~\cite{mnist}. We compare a $D_4$-invariant network ($D_4$) with an unconstrained CNN. In the printed digit classification, $D_4$ exhibits incorrect equivariance for 6 and 9 under a $\pi$ rotation. Using Theorem~\ref{the:lb_cls}, we can calculate a lower bound of error for $D_4$ at 10\%. However, as shown in Table~\ref{tab:digit} (top), the experimental results indicate that the actual performance is slightly better than predicted by the theory. We hypothesize that this discrepancy arises because a rotated 9 differs slightly from a 6 in some fonts. We conduct a similar experiment using the MNIST handwritten digit dataset (Table~\ref{tab:digit} bottom), where $D_4$ achieves even better performance in classifying 6 and 9. This improvement is likely due to the more distinguishable handwriting of these digits, although the performance still underperforms the CNN as incorrect equivariance persists. It is important to note that there is a significant decrease in performance for $D_4$ when classifying 2/5 and 4/7 compared to the CNN. This is because a vertical flip results in incorrect equivariance when classifying handwritten 2/5, and a similar issue arises for 4/7 under a $\pi/2$ rotation followed by a vertical flip (notice that~\citet{e2cnn} make a similar observation). These experiments demonstrate that our theory is useful not only for calculating the performance bounds of an equivariant network beforehand, but also for explaining the suboptimal performance of an equivariant network, thereby potentially assisting in model selection.

\begin{table}[t]
\scriptsize
\centering
\begin{tabular}{lccccccccccc}
\toprule
                    Digit & Overall & 0     & 1     & 2              & 3     & 4              & 5              & 6              & 7              & 8     & 9              \\\midrule
Print CNN            & 96.62   & 99.81 & 98.27 & 97.06          & 96.12 & 98.04          & 92.9           & 94.93          & 97.85          & 93.65 & 96.13          \\
Print D4             & 92.5    & 99.71 & 98.62 & 96.94          & 95.98 & 97.91          & 93.52          & \textbf{63.08} & 98.42          & 95.88 & \textbf{76.17} \\
Print D4 Upper Bound & 90      & 100   & 100   & 100            & 100   & 100            & 100            & 50             & 100            & 100   & 50             \\\midrule
MNIST CNN            & 98.21   & 99.51 & 99.61 & 98.62          & 98.83 & 98.08          & 98.47          & 97.99          & 97.04          & 96.98 & 96.81          \\
MNIST D4             & 96.15   & 98.93 & 99.21 & \textbf{91.84} & 98.28 & \textbf{95.49} & \textbf{95.04} & \textbf{93.71} & \textbf{95.67} & 97.73 & \textbf{95.34}\\
\bottomrule
\end{tabular}
\caption{$D_4$-invariant network compared with an unconstrained CNN in printed and MNIST handwritten digit classification tasks. Bold indicates that there is a $>1\%$ difference in two models.}
\label{tab:digit}
\end{table}


\begin{figure}[t]
\centering
\begin{minipage}[b]{0.7\linewidth}
\centering
\includegraphics[width=0.8\linewidth]{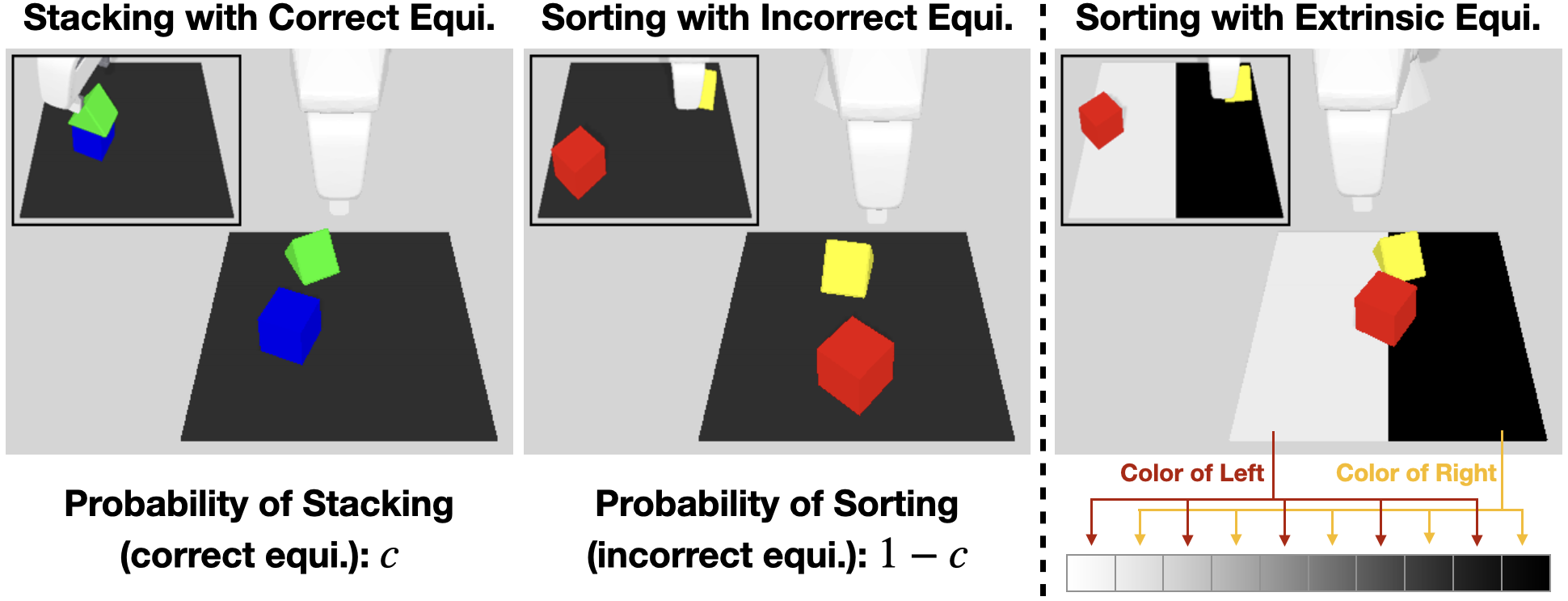}
\caption{Left: an environment containing both correct (Stacking) and incorrect (Pushing) equivariance for a $D_1$-equivariant (horizontal flip) policy net. Right: an environment with harmful extrinsic equivariance for the same policy.}
\label{fig:robot_exp}
\end{minipage}
\hfill
\begin{minipage}[b]{0.28\linewidth}
\centering
\includegraphics[width=\linewidth]{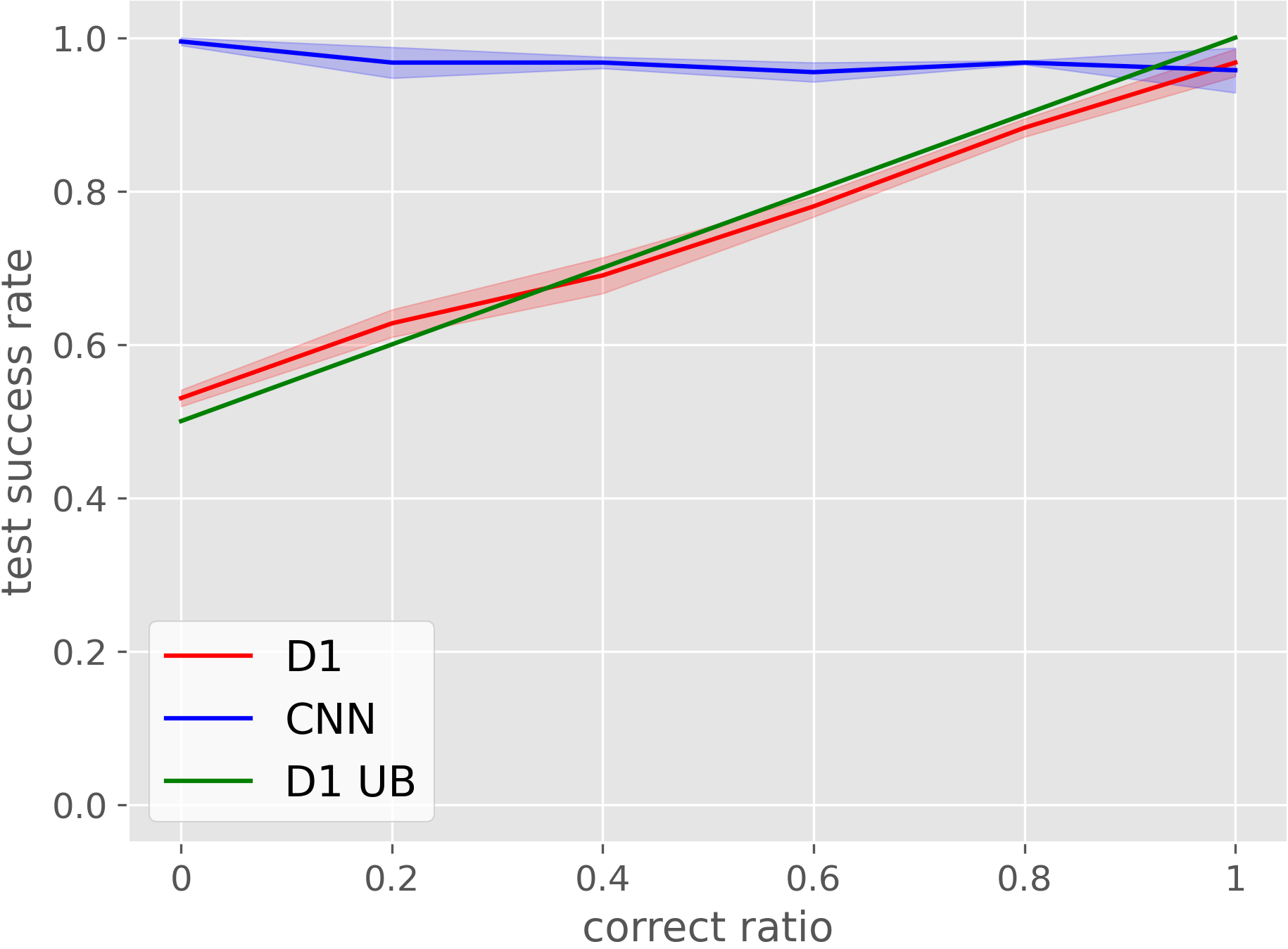}
\caption{Result of robotic experiment with different correct ratio. Averaged over 4 runs.}
\label{fig:robot_exp_result}
\end{minipage}
\end{figure}
\subsection{Robotic Experiment}
In this experiment, we evaluate our theory in behavior cloning in robotic manipulation. We first preform an experiment where the problem is a mixture of correct and incorrect equivariance for a $D_1$-equivariant policy network ($D_1$) where the robot's action will flip when the state is flipped horizontally. Specifically, the environment contains two possible tasks (Figure~\ref{fig:robot_exp} left). Stacking requires the robot to stack a green triangle on top of a blue cube. Here flip equivariance is correct. Sorting requires the robot to push the red cube to the left and the yellow triangle to the right. Here flip equivariance is incorrect because the robot should not sort the objects in an opposite way when the state is flipped (in other words, $D_1$ cannot distinguish left and right). We vary the probability $c$ of the stacking task (correct ratio) in the task distribution, and compare the performance of $D_1$ versus a standard CNN policy. If we view the sorting task as a binary classification task, we can calculate an upper bound of performance for $D_1$ using Theorem~\ref{the:lb_cls}: $0.5 + 0.5c$. Figure~\ref{fig:robot_exp_result} shows the result. Notice that the performance of $D_1$ closely matches the theoretical upper bound, while the performance of CNN remains relatively stable for all Stacking-Sorting distributions.

We further evaluate $D_1$ in a sorting task with harmful extrinsic equivariance. Here, the goal for the robot is the same as sorting above (push the red cube left and the yellow triangle right), however, the left and right sides of the workspace can now be differentiated by gray-scale colors. The shades of gray are discretized evenly into $n$ bins, where the left side's color is randomly sampled from the odd-numbered bins, and the right side's color is randomly sampled from the even-numbered bins (Figure~\ref{fig:robot_exp} right). The different color distributions of the left and right sides make $D_1$ extrinsically equivariant, but it needs to learn the color distribution to distinguish left and right (while CNN can distinguish left and right directly). We set $n=10$, and $D_1$ achieves $71.5 \pm 1.6 \%$ test success rate, while CNN achieves $99.5\pm 0.5 \%$, demonstrating that the $D_1$ extrinsic equivariance is harmful in this task. See Appendix~\ref{app:robot_exp_detail} for the details of the robot experiment.

\section{Discussion}
This paper presents a general theory for when the symmetry of the ground truth function and equivariant network are mismatched. We define pointwise correct, incorrect, and extrinsic equivariance, generalizing prior work~\cite{surprising} to include continuous mixtures of the three extremes. We prove error lower bounds for equivariant networks applied to asymmetric tasks including classification, invariant regression, and equivariant regression without the assumption of invariant data density. Our work discusses the potential disadvantage of extrinsic equivariance, and provides experiments that validate our theoretical analysis.
The major limitation of this paper is that our theoretical lower bounds require domain knowledge like the density function over the domain. In future work, we will develop easy-to-apply model selection tools using our theory. Another future direction is theoretically understanding when extrinsic equivariance is helpful or harmful and analyzing the effect of extrinsic equivariance on the decision boundary of an equivariant network. 

\section*{Acknowledgments}
This work is supported in part by NSF 1724257, NSF 1724191, NSF 1763878, NSF 1750649, NSF 2107256, NSF 2134178, NSF 2312171, and NASA 80NSSC19K1474.

\bibliographystyle{plainnat}
\bibliography{main}

\begin{thebibliography}{65}
\providecommand{\natexlab}[1]{#1}
\providecommand{\url}[1]{\texttt{#1}}
\expandafter\ifx\csname urlstyle\endcsname\relax
  \providecommand{\doi}[1]{doi: #1}\else
  \providecommand{\doi}{doi: \begingroup \urlstyle{rm}\Url}\fi

\bibitem[Abu-Mostafa(1993)]{abu1993hints}
Yaser~S Abu-Mostafa.
\newblock Hints and the vc dimension.
\newblock \emph{Neural Computation}, 5\penalty0 (2):\penalty0 278--288, 1993.

\bibitem[Anderson et~al.(2019)Anderson, Hy, and Kondor]{anderson2019cormorant}
Brandon Anderson, Truong~Son Hy, and Risi Kondor.
\newblock Cormorant: Covariant molecular neural networks.
\newblock \emph{Advances in neural information processing systems}, 32, 2019.

\bibitem[Anselmi et~al.(2019)Anselmi, Evangelopoulos, Rosasco, and
  Poggio]{anselmi2019symmetry}
Fabio Anselmi, Georgios Evangelopoulos, Lorenzo Rosasco, and Tomaso Poggio.
\newblock Symmetry-adapted representation learning.
\newblock \emph{Pattern Recognition}, 86:\penalty0 201--208, 2019.

\bibitem[Atz et~al.(2021)Atz, Grisoni, and Schneider]{atz2021geometric}
Kenneth Atz, Francesca Grisoni, and Gisbert Schneider.
\newblock Geometric deep learning on molecular representations.
\newblock \emph{Nature Machine Intelligence}, 3\penalty0 (12):\penalty0
  1023--1032, 2021.

\bibitem[Behboodi et~al.(2022)Behboodi, Cesa, and Cohen]{behboodi2022pac}
Arash Behboodi, Gabriele Cesa, and Taco Cohen.
\newblock A pac-bayesian generalization bound for equivariant networks.
\newblock \emph{arXiv preprint arXiv:2210.13150}, 2022.

\bibitem[Bogatskiy et~al.(2020)Bogatskiy, Anderson, Offermann, Roussi, Miller,
  and Kondor]{bogatskiy2020lorentz}
Alexander Bogatskiy, Brandon Anderson, Jan Offermann, Marwah Roussi, David
  Miller, and Risi Kondor.
\newblock Lorentz group equivariant neural network for particle physics.
\newblock In \emph{International Conference on Machine Learning}, pages
  992--1002. PMLR, 2020.

\bibitem[Chatzipantazis et~al.(2023)Chatzipantazis, Pertigkiozoglou, Dobriban,
  and Daniilidis]{chatzipantazis2023mathrmseequivariant}
Evangelos Chatzipantazis, Stefanos Pertigkiozoglou, Edgar Dobriban, and Kostas
  Daniilidis.
\newblock {$\mathrm{SE}(3)$}-equivariant attention networks for shape
  reconstruction in function space.
\newblock In \emph{The Eleventh International Conference on Learning
  Representations}, 2023.
\newblock URL \url{https://openreview.net/forum?id=RDy3IbvjMqT}.

\bibitem[Chen et~al.(2020)Chen, Dobriban, and Lee]{chen2020group}
Shuxiao Chen, Edgar Dobriban, and Jane~H Lee.
\newblock A group-theoretic framework for data augmentation.
\newblock \emph{The Journal of Machine Learning Research}, 21\penalty0
  (1):\penalty0 9885--9955, 2020.

\bibitem[Cohen and Welling(2016)]{g_conv}
Taco Cohen and Max Welling.
\newblock Group equivariant convolutional networks.
\newblock In \emph{International conference on machine learning}, pages
  2990--2999. PMLR, 2016.

\bibitem[Cohen and Welling(2017)]{steerable_cnns}
Taco~S. Cohen and Max Welling.
\newblock Steerable {CNN}s.
\newblock In \emph{International Conference on Learning Representations}, 2017.
\newblock URL \url{https://openreview.net/forum?id=rJQKYt5ll}.

\bibitem[Cohen et~al.(2019)Cohen, Geiger, and Weiler]{cohen2019general}
Taco~S Cohen, Mario Geiger, and Maurice Weiler.
\newblock A general theory of equivariant cnns on homogeneous spaces.
\newblock \emph{Advances in neural information processing systems}, 32, 2019.

\bibitem[Coumans and Bai(2016--2021)]{pybullet}
Erwin Coumans and Yunfei Bai.
\newblock Pybullet, a python module for physics simulation for games, robotics
  and machine learning.
\newblock \url{http://pybullet.org}, 2016--2021.

\bibitem[Dangovski et~al.(2021)Dangovski, Jing, Loh, Han, Srivastava, Cheung,
  Agrawal, and Soljacic]{dangovski2021equivariant}
Rumen Dangovski, Li~Jing, Charlotte Loh, Seungwook Han, Akash Srivastava, Brian
  Cheung, Pulkit Agrawal, and Marin Soljacic.
\newblock Equivariant self-supervised learning: Encouraging equivariance in
  representations.
\newblock In \emph{International Conference on Learning Representations}, 2021.

\bibitem[Dehmamy et~al.(2021)Dehmamy, Walters, Liu, Wang, and
  Yu]{dehmamy2021automatic}
Nima Dehmamy, Robin Walters, Yanchen Liu, Dashun Wang, and Rose Yu.
\newblock Automatic symmetry discovery with lie algebra convolutional network.
\newblock \emph{Advances in Neural Information Processing Systems},
  34:\penalty0 2503--2515, 2021.

\bibitem[Deng(2012)]{mnist}
Li~Deng.
\newblock The mnist database of handwritten digit images for machine learning
  research.
\newblock \emph{IEEE Signal Processing Magazine}, 29\penalty0 (6):\penalty0
  141--142, 2012.

\bibitem[Dhama(2021)]{print_digit}
Kshitij Dhama.
\newblock Printed numerical digits image dataset.
\newblock \url{https://github.com/kaydee0502/printed-digits-dataset}, 2021.
\newblock GitHub Repository.

\bibitem[Elesedy and Zaidi(2021)]{elesedy2021provably}
Bryn Elesedy and Sheheryar Zaidi.
\newblock Provably strict generalisation benefit for equivariant models.
\newblock In \emph{International Conference on Machine Learning}, pages
  2959--2969. PMLR, 2021.

\bibitem[Finzi et~al.(2020)Finzi, Stanton, Izmailov, and
  Wilson]{finzi2020generalizing}
Marc Finzi, Samuel Stanton, Pavel Izmailov, and Andrew~Gordon Wilson.
\newblock Generalizing convolutional neural networks for equivariance to lie
  groups on arbitrary continuous data.
\newblock In \emph{International Conference on Machine Learning}, pages
  3165--3176. PMLR, 2020.

\bibitem[Hauberg et~al.(2016)Hauberg, Freifeld, Larsen, Fisher, and
  Hansen]{hauberg2016dreaming}
S{\o}ren Hauberg, Oren Freifeld, Anders Boesen~Lindbo Larsen, John Fisher, and
  Lars Hansen.
\newblock Dreaming more data: Class-dependent distributions over
  diffeomorphisms for learned data augmentation.
\newblock In \emph{Artificial intelligence and statistics}, pages 342--350.
  PMLR, 2016.

\bibitem[Hinton et~al.(2011)Hinton, Krizhevsky, and
  Wang]{hinton2011transforming}
Geoffrey~E Hinton, Alex Krizhevsky, and Sida~D Wang.
\newblock Transforming auto-encoders.
\newblock In \emph{International conference on artificial neural networks},
  pages 44--51. Springer, 2011.

\bibitem[Huang et~al.(2022)Huang, Wang, Walters, and Platt]{rss22haojie}
Haojie Huang, Dian Wang, Robin Walters, and Robert Platt.
\newblock Equivariant transporter network.
\newblock In \emph{Robotics: Science and Systems}, 2022.

\bibitem[Huang et~al.(2023)Huang, Wang, Zhu, Walters, and Platt]{huang2023edge}
Haojie Huang, Dian Wang, Xupeng Zhu, Robin Walters, and Robert Platt.
\newblock Edge grasp network: A graph-based {$\mathrm{SE}(3)$}-invariant
  approach to grasp detection.
\newblock In \emph{International Conference on Robotics and Automation (ICRA)},
  2023.

\bibitem[Jia et~al.(2023)Jia, Wang, Su, Klee, Zhu, Walters, and
  Platt]{jia2023seil}
Mingxi Jia, Dian Wang, Guanang Su, David Klee, Xupeng Zhu, Robin Walters, and
  Robert Platt.
\newblock Seil: Simulation-augmented equivariant imitation learning.
\newblock In \emph{International Conference on Robotics and Automation (ICRA)},
  2023.

\bibitem[Kingma and Ba(2014)]{adam}
Diederik~P Kingma and Jimmy Ba.
\newblock Adam: A method for stochastic optimization.
\newblock \emph{arXiv preprint arXiv:1412.6980}, 2014.

\bibitem[Klee et~al.(2022)Klee, Biza, Platt, and Walters]{klee2022i2i}
David Klee, Ondrej Biza, Robert Platt, and Robin Walters.
\newblock I2i: Image to icosahedral projection for {$\mathrm{SO}(3)$} object
  reasoning from single-view images.
\newblock \emph{arXiv preprint arXiv:2207.08925}, 2022.

\bibitem[Klee et~al.(2023)Klee, Biza, Platt, and Walters]{klee2023image}
David Klee, Ondrej Biza, Robert Platt, and Robin Walters.
\newblock Image to sphere: Learning equivariant features for efficient pose
  prediction.
\newblock In \emph{The Eleventh International Conference on Learning
  Representations}, 2023.
\newblock URL \url{https://openreview.net/forum?id=_2bDpAtr7PI}.

\bibitem[Kondor and Trivedi(2018)]{kondor2018generalization}
Risi Kondor and Shubhendu Trivedi.
\newblock On the generalization of equivariance and convolution in neural
  networks to the action of compact groups.
\newblock In \emph{International Conference on Machine Learning}, pages
  2747--2755. PMLR, 2018.

\bibitem[Krizhevsky et~al.(2012)Krizhevsky, Sutskever, and
  Hinton]{krizhevsky2012imagenet}
Alex Krizhevsky, Ilya Sutskever, and Geoffrey~E Hinton.
\newblock Imagenet classification with deep convolutional neural networks.
\newblock \emph{Advances in neural information processing systems}, 25, 2012.

\bibitem[LeCun et~al.(1998)LeCun, Bottou, Bengio, and
  Haffner]{lecun1998gradient}
Yann LeCun, L{\'e}on Bottou, Yoshua Bengio, and Patrick Haffner.
\newblock Gradient-based learning applied to document recognition.
\newblock \emph{Proceedings of the IEEE}, 86\penalty0 (11):\penalty0
  2278--2324, 1998.

\bibitem[Lenc and Vedaldi(2015)]{lenc2015understanding}
Karel Lenc and Andrea Vedaldi.
\newblock Understanding image representations by measuring their equivariance
  and equivalence.
\newblock In \emph{Proceedings of the IEEE conference on computer vision and
  pattern recognition}, pages 991--999, 2015.

\bibitem[Li et~al.(2021)Li, Weng, Yi, Guibas, Abbott, Song, and
  Wang]{li2021leveraging}
Xiaolong Li, Yijia Weng, Li~Yi, Leonidas~J Guibas, A~Abbott, Shuran Song, and
  He~Wang.
\newblock Leveraging se (3) equivariance for self-supervised category-level
  object pose estimation from point clouds.
\newblock \emph{Advances in Neural Information Processing Systems},
  34:\penalty0 15370--15381, 2021.

\bibitem[Liu et~al.(2023)Liu, Zhang, Hu, Huang, Wang, and
  Yi]{liu2023selfsupervised}
Xueyi Liu, Ji~Zhang, Ruizhen Hu, Haibin Huang, He~Wang, and Li~Yi.
\newblock Self-supervised category-level articulated object pose estimation
  with part-level {SE}(3) equivariance.
\newblock In \emph{The Eleventh International Conference on Learning
  Representations}, 2023.
\newblock URL \url{https://openreview.net/forum?id=20GtJ6hIaPA}.

\bibitem[Lyle et~al.(2020)Lyle, van~der Wilk, Kwiatkowska, Gal, and
  Bloem-Reddy]{lyle2020benefits}
Clare Lyle, Mark van~der Wilk, Marta Kwiatkowska, Yarin Gal, and Benjamin
  Bloem-Reddy.
\newblock On the benefits of invariance in neural networks.
\newblock \emph{arXiv preprint arXiv:2005.00178}, 2020.

\bibitem[Maile et~al.(2023)Maile, Wilson, and
  Forr{\'e}]{maile2023equivarianceaware}
Kaitlin Maile, Dennis~George Wilson, and Patrick Forr{\'e}.
\newblock Equivariance-aware architectural optimization of neural networks.
\newblock In \emph{The Eleventh International Conference on Learning
  Representations}, 2023.
\newblock URL \url{https://openreview.net/forum?id=a6rCdfABJXg}.

\bibitem[Marchetti et~al.(2022)Marchetti, Tegn{\'e}r, Varava, and
  Kragic]{marchetti2022equivariant}
Giovanni~Luca Marchetti, Gustaf Tegn{\'e}r, Anastasiia Varava, and Danica
  Kragic.
\newblock Equivariant representation learning via class-pose decomposition.
\newblock \emph{arXiv preprint arXiv:2207.03116}, 2022.

\bibitem[Maron et~al.(2019)Maron, Fetaya, Segol, and
  Lipman]{maron2019universality}
Haggai Maron, Ethan Fetaya, Nimrod Segol, and Yaron Lipman.
\newblock On the universality of invariant networks.
\newblock In \emph{International conference on machine learning}, pages
  4363--4371. PMLR, 2019.

\bibitem[Maron et~al.(2020)Maron, Litany, Chechik, and Fetaya]{dss}
Haggai Maron, Or~Litany, Gal Chechik, and Ethan Fetaya.
\newblock On learning sets of symmetric elements.
\newblock In \emph{International conference on machine learning}, pages
  6734--6744. PMLR, 2020.

\bibitem[Miao et~al.(2023)Miao, Rainforth, Mathieu, Dubois, Teh, Foster, and
  Kim]{pmlr-v202-miao23a}
Ning Miao, Tom Rainforth, Emile Mathieu, Yann Dubois, Yee~Whye Teh, Adam
  Foster, and Hyunjik Kim.
\newblock Learning instance-specific augmentations by capturing local
  invariances.
\newblock In Andreas Krause, Emma Brunskill, Kyunghyun Cho, Barbara Engelhardt,
  Sivan Sabato, and Jonathan Scarlett, editors, \emph{Proceedings of the 40th
  International Conference on Machine Learning}, volume 202 of
  \emph{Proceedings of Machine Learning Research}, pages 24720--24736. PMLR,
  23--29 Jul 2023.
\newblock URL \url{https://proceedings.mlr.press/v202/miao23a.html}.

\bibitem[Mondal et~al.(2022)Mondal, Jain, Siddiqi, and
  Ravanbakhsh]{pmlr-v162-mondal22a}
Arnab~Kumar Mondal, Vineet Jain, Kaleem Siddiqi, and Siamak Ravanbakhsh.
\newblock {E}q{R}: Equivariant representations for data-efficient reinforcement
  learning.
\newblock In Kamalika Chaudhuri, Stefanie Jegelka, Le~Song, Csaba Szepesvari,
  Gang Niu, and Sivan Sabato, editors, \emph{Proceedings of the 39th
  International Conference on Machine Learning}, volume 162 of
  \emph{Proceedings of Machine Learning Research}, pages 15908--15926. PMLR,
  17--23 Jul 2022.
\newblock URL \url{https://proceedings.mlr.press/v162/mondal22a.html}.

\bibitem[Moskalev et~al.(2022)Moskalev, Sepliarskaia, Sosnovik, and
  Smeulders]{moskalev2022liegg}
Artem Moskalev, Anna Sepliarskaia, Ivan Sosnovik, and Arnold~W.M. Smeulders.
\newblock Lie{GG}: Studying learned lie group generators.
\newblock In Alice~H. Oh, Alekh Agarwal, Danielle Belgrave, and Kyunghyun Cho,
  editors, \emph{Advances in Neural Information Processing Systems}, 2022.
\newblock URL \url{https://openreview.net/forum?id=9sKZ60VtRmi}.

\bibitem[Pan et~al.(2023)Pan, Okorn, Zhang, Eisner, and Held]{pan2023tax}
Chuer Pan, Brian Okorn, Harry Zhang, Ben Eisner, and David Held.
\newblock Tax-pose: Task-specific cross-pose estimation for robot manipulation.
\newblock In \emph{Conference on Robot Learning}, pages 1783--1792. PMLR, 2023.

\bibitem[Park et~al.(2022)Park, Biza, Zhao, van~de Meent, and
  Walters]{park2022sen}
Jung~Yeon Park, Ondrej Biza, Linfeng Zhao, Jan~Willem van~de Meent, and Robin
  Walters.
\newblock Learning symmetric representations for equivariant world model.
\newblock In \emph{International Conference on Machine Learning}, 2022.
\newblock URL \url{https://arxiv.org/abs/2204.11371}.

\bibitem[Quessard et~al.(2020)Quessard, Barrett, and
  Clements]{quessard2020learning}
Robin Quessard, Thomas Barrett, and William Clements.
\newblock Learning disentangled representations and group structure of
  dynamical environments.
\newblock \emph{Advances in Neural Information Processing Systems},
  33:\penalty0 19727--19737, 2020.

\bibitem[Rommel et~al.(2021)Rommel, Moreau, Paillard, and
  Gramfort]{rommel2021cadda}
C{\'e}dric Rommel, Thomas Moreau, Joseph Paillard, and Alexandre Gramfort.
\newblock Cadda: Class-wise automatic differentiable data augmentation for eeg
  signals.
\newblock In \emph{International Conference on Learning Representations}, 2021.

\bibitem[Ryu et~al.(2023)Ryu, in~Lee, Lee, and Choi]{ryu2023equivariant}
Hyunwoo Ryu, Hong in~Lee, Jeong-Hoon Lee, and Jongeun Choi.
\newblock Equivariant descriptor fields: {SE}(3)-equivariant energy-based
  models for end-to-end visual robotic manipulation learning.
\newblock In \emph{The Eleventh International Conference on Learning
  Representations}, 2023.
\newblock URL \url{https://openreview.net/forum?id=dnjZSPGmY5O}.

\bibitem[Sannai et~al.(2021)Sannai, Imaizumi, and Kawano]{sannai2021improved}
Akiyoshi Sannai, Masaaki Imaizumi, and Makoto Kawano.
\newblock Improved generalization bounds of group invariant/equivariant deep
  networks via quotient feature spaces.
\newblock In \emph{Uncertainty in Artificial Intelligence}, pages 771--780.
  PMLR, 2021.

\bibitem[Schmidt and Roth(2012)]{schmidt2012learning}
Uwe Schmidt and Stefan Roth.
\newblock Learning rotation-aware features: From invariant priors to
  equivariant descriptors.
\newblock In \emph{2012 IEEE Conference on Computer Vision and Pattern
  Recognition}, pages 2050--2057. IEEE, 2012.

\bibitem[Simeonov et~al.(2022)Simeonov, Du, Tagliasacchi, Tenenbaum, Rodriguez,
  Agrawal, and Sitzmann]{neural_descriptor}
Anthony Simeonov, Yilun Du, Andrea Tagliasacchi, Joshua~B Tenenbaum, Alberto
  Rodriguez, Pulkit Agrawal, and Vincent Sitzmann.
\newblock Neural descriptor fields: Se (3)-equivariant object representations
  for manipulation.
\newblock In \emph{2022 International Conference on Robotics and Automation
  (ICRA)}, pages 6394--6400. IEEE, 2022.

\bibitem[Simeonov et~al.(2023)Simeonov, Du, Lin, Garcia, Kaelbling,
  Lozano-P{\'e}rez, and Agrawal]{simeonov2023se}
Anthony Simeonov, Yilun Du, Yen-Chen Lin, Alberto~Rodriguez Garcia, Leslie~Pack
  Kaelbling, Tom{\'a}s Lozano-P{\'e}rez, and Pulkit Agrawal.
\newblock Se (3)-equivariant relational rearrangement with neural descriptor
  fields.
\newblock In \emph{Conference on Robot Learning}, pages 835--846. PMLR, 2023.

\bibitem[Sohn and Lee(2012)]{ICML2012Sohn_659}
Kihyuk Sohn and Honglak Lee.
\newblock Learning invariant representations with local transformations.
\newblock In John Langford and Joelle Pineau, editors, \emph{Proceedings of the
  29th International Conference on Machine Learning (ICML-12)}, ICML '12, pages
  1311--1318, New York, NY, USA, July 2012. Omnipress.
\newblock ISBN 978-1-4503-1285-1.

\bibitem[Sokolic et~al.(2017)Sokolic, Giryes, Sapiro, and
  Rodrigues]{sokolic2017generalization}
Jure Sokolic, Raja Giryes, Guillermo Sapiro, and Miguel Rodrigues.
\newblock Generalization error of invariant classifiers.
\newblock In \emph{Artificial Intelligence and Statistics}, pages 1094--1103.
  PMLR, 2017.

\bibitem[van~der Pol et~al.(2020)van~der Pol, Worrall, van Hoof, Oliehoek, and
  Welling]{van2020mdp}
Elise van~der Pol, Daniel Worrall, Herke van Hoof, Frans Oliehoek, and Max
  Welling.
\newblock Mdp homomorphic networks: Group symmetries in reinforcement learning.
\newblock \emph{Advances in Neural Information Processing Systems}, 33, 2020.

\bibitem[Walters et~al.(2020)Walters, Li, and Yu]{walters2020trajectory}
Robin Walters, Jinxi Li, and Rose Yu.
\newblock Trajectory prediction using equivariant continuous convolution.
\newblock \emph{arXiv preprint arXiv:2010.11344}, 2020.

\bibitem[Wang et~al.(2021)Wang, Walters, Zhu, and Platt]{corl}
Dian Wang, Robin Walters, Xupeng Zhu, and Robert Platt.
\newblock Equivariant {$Q$} learning in spatial action spaces.
\newblock In \emph{5th Annual Conference on Robot Learning}, 2021.
\newblock URL \url{https://openreview.net/forum?id=IScz42A3iCI}.

\bibitem[Wang et~al.(2022{\natexlab{a}})Wang, Jia, Zhu, Walters, and
  Platt]{corl22}
Dian Wang, Mingxi Jia, Xupeng Zhu, Robin Walters, and Robert Platt.
\newblock On-robot learning with equivariant models.
\newblock In \emph{6th Annual Conference on Robot Learning},
  2022{\natexlab{a}}.
\newblock URL \url{https://openreview.net/forum?id=K8W6ObPZQyh}.

\bibitem[Wang et~al.(2022{\natexlab{b}})Wang, Walters, and Platt]{iclr}
Dian Wang, Robin Walters, and Robert Platt.
\newblock {$\mathrm{SO}(2)$}-equivariant reinforcement learning.
\newblock In \emph{International Conference on Learning Representations},
  2022{\natexlab{b}}.
\newblock URL \url{https://openreview.net/forum?id=7F9cOhdvfk_}.

\bibitem[Wang et~al.(2023{\natexlab{a}})Wang, Kohler, Zhu, Jia, and
  Platt]{bulletarm}
Dian Wang, Colin Kohler, Xupeng Zhu, Mingxi Jia, and Robert Platt.
\newblock Bulletarm: An open-source robotic manipulation benchmark and learning
  framework.
\newblock In \emph{Robotics Research}, pages 335--350. Springer,
  2023{\natexlab{a}}.

\bibitem[Wang et~al.(2023{\natexlab{b}})Wang, Park, Sortur, Wong, Walters, and
  Platt]{surprising}
Dian Wang, Jung~Yeon Park, Neel Sortur, Lawson~L.S. Wong, Robin Walters, and
  Robert Platt.
\newblock The surprising effectiveness of equivariant models in domains with
  latent symmetry.
\newblock In \emph{International Conference on Learning Representations},
  2023{\natexlab{b}}.
\newblock URL \url{https://openreview.net/forum?id=P4MUGRM4Acu}.

\bibitem[Wang et~al.(2020)Wang, Walters, and Yu]{wang2020incorporating}
Rui Wang, Robin Walters, and Rose Yu.
\newblock Incorporating symmetry into deep dynamics models for improved
  generalization.
\newblock \emph{arXiv preprint arXiv:2002.03061}, 2020.

\bibitem[Weiler and Cesa(2019)]{e2cnn}
Maurice Weiler and Gabriele Cesa.
\newblock General e (2)-equivariant steerable cnns.
\newblock \emph{Advances in Neural Information Processing Systems}, 32, 2019.

\bibitem[Winter et~al.(2022)Winter, Bertolini, Le, Noe, and
  Clevert]{winter2022unsupervised}
Robin Winter, Marco Bertolini, Tuan Le, Frank Noe, and Djork-Arn{\'e} Clevert.
\newblock Unsupervised learning of group invariant and equivariant
  representations.
\newblock In Alice~H. Oh, Alekh Agarwal, Danielle Belgrave, and Kyunghyun Cho,
  editors, \emph{Advances in Neural Information Processing Systems}, 2022.
\newblock URL \url{https://openreview.net/forum?id=47lpv23LDPr}.

\bibitem[Yarotsky(2022)]{yarotsky2022universal}
Dmitry Yarotsky.
\newblock Universal approximations of invariant maps by neural networks.
\newblock \emph{Constructive Approximation}, 55\penalty0 (1):\penalty0
  407--474, 2022.

\bibitem[Zhao et~al.(2023)Zhao, Zhu, Kong, Walters, and
  Wong]{zhao2023integrating}
Linfeng Zhao, Xupeng Zhu, Lingzhi Kong, Robin Walters, and Lawson~L.S. Wong.
\newblock Integrating symmetry into differentiable planning with steerable
  convolutions.
\newblock In \emph{The Eleventh International Conference on Learning
  Representations}, 2023.
\newblock URL \url{https://openreview.net/forum?id=n7CPzMPKQl}.

\bibitem[Zhou et~al.(2020)Zhou, Knowles, and Finn]{zhou2020meta}
Allan Zhou, Tom Knowles, and Chelsea Finn.
\newblock Meta-learning symmetries by reparameterization.
\newblock \emph{arXiv preprint arXiv:2007.02933}, 2020.

\bibitem[Zhu et~al.(2022)Zhu, Wang, Biza, Su, Walters, and Platt]{rss22xupeng}
Xupeng Zhu, Dian Wang, Ondrej Biza, Guanang Su, Robin Walters, and Robert
  Platt.
\newblock Sample efficient grasp learning using equivariant models.
\newblock In \emph{Robotics: Science and Systems}, 2022.

\end{thebibliography}

\newpage
\appendix

\section{Integrals on the Group}
\label{app:reparam}

\paragraph{Fundamental Domains} In this paper, we are interested in cases in which the group $G$ is not necessarily discrete but may have positive dimension. We do not assume the fundamental domain has non-empty interior, and thus domain is a misnomer. In this case the conjugates of the fundamental domain $gF$ have measure 0 and the condition that their intersection have measure 0 is vacuous.  Instead we assume a stronger condition, that the union of all pairwise intersections $\bigcup_{g_1 \neq g_2} \left(g_1F \cap g_2F \right)$ has measure 0.  We also require that $F$ and the orbits $Gx$ are differentiable manifolds such that integrals over $X$ may be evaluated $\int_X f(x) dx = \int_F \int_{Gy} f(z) dz dy$ similar to Equation 8 from \cite{finzi2020generalizing}.

\paragraph{Reparameterization}

Consider the integral
\begin{equation}
    \int_{Gx} f(z) dz.
\end{equation}
Denote the identification of the orbit $Gx$ and coset space $G/G_x$ with respect to the stabilizer $G_x = \{ g : gx = x \}$ by $a_x \colon G/G_x \to Gx.$  Then the integral can be written  
\begin{align*}
    \int_{G/G_x} f(\bar{g}x) \left| \frac{\partial a_x(\bar{g})}{\partial \bar{g}} \right|d\bar{g}.
\end{align*}
We can also lift the integral to $G$ itself
\begin{align*}
    \int_{G/G_x} f(\bar{g}x) \left| \frac{\partial a_x(\bar{g})}{\partial \bar{g}} \right|d\bar{g} &=
    \left(\int_{G_x} dh \right)^{-1} \left(\int_{G_x} dh \right) \int_{G/G_x}  f(\bar{g}x) \left| \frac{\partial a_x(\bar{g})}{\partial \bar{g}} \right| d\bar{g} \\
   &= \left(\int_{G_x} dh \right)^{-1} \int_{G/G_x}  \int_{G_x} f(\bar{g}hx) \left| \frac{\partial a_x(\bar{g})}{\partial \bar{g}} \right| dh d\bar{g} \\
      &= \left(\int_{G_x} dh \right)^{-1} \int_G f(gx) \left| \frac{\partial a_x(\bar{g})}{\partial \bar{g}} \right| dg.
\end{align*}
Define $\alpha(g,x) = \left(\int_{G_x} dh \right)^{-1} \left| \frac{\partial a_x(\bar{g})}{\partial \bar{g}} \right|$.  Then
\[
\int_{Gx} f(z) dz =  \int_G f(gx) \alpha(g,x) dg.
\]

\section{Iterated Integral}
\label{app:iterated_integral}
Let $X$ be an $n$-dimensional space, Definition~\ref{def:total_dissent} (Equation~\ref{eqn:kx}) defines $k(Gx)$ as an integral over $Gx\subseteq X$, which is a $m$-dimensional sub-manifold of $X$. In Theorem~\ref{the:lb_cls}, Equation~\ref{eqn:err4} rewrites the error function (Equation~\ref{eqn:err_fcns}) as an iterated integral over the orbit $Gx$ and then the fundamental domain $F$ using Definition~\ref{def:fundamental}. In the discrete group case, $m$ would be 0, Equation~\ref{eqn:kx} is an integral of a 0-form in a 0-manifold, which is a sum:
\begin{align}
k(Gx)&=\min_{y\in Y} \sum_{z\in Gx}p(z) \mathbbm{1} (f(z) \neq y)=\min_{y\in Y} \sum_{g\in G}p(gx) \mathbbm{1} (f(gx) \neq y)
\end{align}


\section{Proof of Proposition~\ref{prop:lb_finite}}
\label{app:proof_cls_dis}

\begin{proof}
Consider the integral of probability density inside $Gx$, for a given $y$, it can be separated into two groups:
\begin{align}
\nonumber
\begin{split}
\int_{Gx}p(z)dz = &\int_{Gx}p(z)\mathbbm{1}(f(z)=y)dz\\
& + \int_{Gx}p(z)\mathbbm{1}(f(z) \neq y)dz.
\end{split}
\end{align}
We can then rewrite $k(Gx)$ in Equation~\ref{eqn:kx} as:
\begin{align}
\label{eqn:kx_1}
k(Gx) = \min_{y\in Y}\left[ \int_{Gx}p(z)dz- \int_{Gx}p(z)\mathbbm{1}(f(z)=y)dz\right].
\end{align}
Letting $(Gx)_y  = \{x'\in Gx\ |\ f(x')=y\} = f^{-1}(y) \cap Gx$, Equation~\ref{eqn:kx_1} can be written as:
\begin{align}
\nonumber
k(Gx) = \min_{y\in Y}\Big[ \int_{Gx}p(z)dz- \int_{(Gx)_y}p(z)dz\Big]\\
\nonumber
\int_{Gx}p(z)dz- \max_{y\in Y}\int_{(Gx)_y}p(z)dz.
\end{align}

Theorem~\ref{the:lb_cls} can be rewritten as:

\begin{align}
\nonumber
\err(h) &\geq \int_F \Big(\int_{Gx}p(z)dz - \max_{y\in Y}\int_{(Gx)_y}p(z)dz\Big)dx\\
\nonumber
&\geq \int_F \int_{Gx}p(z)dz - \int_F\max_{y\in Y}\int_{(Gx)_y}p(z)dz\\
\label{eqn:erra3}
&\geq 1 - \int_F \max_{y\in Y} |(Gx)_y|p(x)dx.
\end{align}

The first term in Equation~\ref{eqn:erra3} uses the fact that $X=\bigcup_{x\in F} Gx$ so the integral of the probability of the orbits of all points in the fundamental domain is the integral of the probability of the input domain $X$ which is 1. The second term of Equation~\ref{eqn:erra3} uses $p(gx)=p(x)$ so the integration of $p(z)$ on $(Gx)_y$ becomes $p(x)$ times the range of the limit which is the size of $(Gx)_y$, $|(Gx)_y|$.

Now consider a partition of $F=\coprod_q F_q$ where $F_q = \{ x\in F: (\max_{y\in Y}|(Gx)_y|) / |Gx| = q\}$. We can rewrite Equation~\ref{eqn:erra3} as:

\begin{align}
\label{eqn:erra4}
\err(h) &\geq 1 - \int_F q |Gx| p(x)dx\\
\label{eqn:erra5}
& \geq 1 - \sum_q \int_{F_q}q |Gx|p(x)dx\\
\label{eqn:erra6}
& \geq 1 - \sum_q q \int_{F_q}|Gx|p(x)dx.
\end{align}

Equation~\ref{eqn:erra4} uses the definition of $q$. Equation~\ref{eqn:erra5} separates the integral over $F$ into the partition of $F$. Equation~\ref{eqn:erra6} moves $q$ out from the integral because it is a constant inside the integral. Consider the definition of $c_q$, we have:
\begin{align}
\nonumber
c_q &= \mathbb{P}(x\in X_q)\\
\nonumber
&= \int_{X_q}p(x)dx\\
\label{eqn:cq2}
&= \int_{F_q} \int_{Gx}p(z)dz dx\\
\label{eqn:cq3}
&= \int_{F_q}|Gx|p(x)dx.
\end{align}

Equation~\ref{eqn:cq2} uses $X_q = \bigcup_{x\in F_q}Gx$. Equation~\ref{eqn:cq3} uses $p(x)=p(gx)$. Now we can write Equation~\ref{eqn:erra6} as:
\begin{align}
\nonumber
\err(h) & \geq  1 - \sum_q q c_q.
\end{align}
\end{proof}

\section{Proof of Theorem~\ref{theo:equivar_regression}}
\label{app:proof_equi_reg}

Define $q(gx) \in \bbR^{n \times n}$ such that
\begin{equation}
\label{eqn:qpp_qgx}
Q_{Gx}q(gx)=p(gx)\rho_Y(g)^T\rho_Y(g)\alpha(x, g).
\end{equation}
In particular, $q(gx)$ exists when $Q_{Gx}$ is full rank. It follows that $Q_{Gx}\int_G q(gx) dg=Q_{Gx}$. Moreover, $Q_{Gx}$ and $q(gx)$ are symmetric matrix.

\begin{proof}
The error function (Equation~\ref{eqn:err_fcns}) can be written
\begin{align}
\nonumber
\err(h) &= \mathbb{E}_{x\sim p}[||f(x)-h(x)||_2^2]\\
\nonumber
&=\int_X p(x)||f(x)-h(x)||_2^2 dx\\
\nonumber
&=\int_{x\in F} \int_{g\in G} p(gx)||f(gx) - h(gx)||_2^2\alpha(x, g)dgdx.
\end{align}
Denote $e(x)=\int_{G} p(gx)||f(gx) - h(gx)||_2^2\alpha(x, g)dg$. Since $h$ is $G$-equivariant, 
for each $x\in F$ the value $c = h(x) \in \bbR^n$ of $h$ at $x$ determines the value of $h$ across the whole orbit $h(gx)=gh(x)=gc$ for $g\in G$. Then $e(x)$ can be written
\begin{align}
\nonumber
e(x)&=\int_{G} p(gx)||f(gx) - gc||_2^2\alpha(x, g)dg\\
\nonumber
&=\int_{G} p(gx)||g(g^{-1}f(gx) - c)||_2^2\alpha(x, g)dg\\
\nonumber
&=\int_{G} (g^{-1}f(gx) - c)^Tp(gx)g^Tg\alpha(x, g)(g^{-1}f(gx) - c)dg\\
\label{eqn:gex}
&=\int_{G} (g^{-1}f(gx) - c)^TQ_{Gx}q(gx)(g^{-1}f(gx) - c)dg.
\end{align}
Taking the derivative of $e(x)$ with respect to $c$ we have
\begin{align}
\nonumber
\frac{\partial e(x)}{\partial c}
&=\int_{G}\Big((Q_{Gx}q(gx))^T+(Q_{Gx}q(gx))\Big)(c - g^{-1}f(gx))dg\\
\nonumber
&=\int_{G}2Q_{Gx}q(gx)(c - g^{-1}f(gx))dg.
\end{align}
Setting $\partial e(x) / \partial c = 0$ we can find an equation for $c^*$ which minimizes $e(x)$
\begin{align}
\nonumber
Q_{Gx}\int_{G}q(gx)dg \cdot c^* &= Q_{Gx}\int_{ G}q(gx)g^{-1}f(gx)dg \\
\label{eqn:gc*}
Q_{Gx} c^* &= Q_{Gx}\mathbf{E}_{G}[f, x].
\end{align}


Substituting $c^*$ into Equation~\ref{eqn:gex} we have
\begin{align}
\nonumber
e(x) \geq& \int_{G} (g^{-1}f(gx) - c^*)^TQ_{Gx}q(gx)(g^{-1}f(gx) - c^*)dg\\
\begin{split}
\label{eqn:gex_expand}
=& \int_{G} (g^{-1}f(gx))^TQ_{Gx}q(gx)(g^{-1}f(gx)) \\&-\Big(c^{*T}Q_{Gx}q(gx)g^{-1}f(gx)\Big)^{T}
\\&-c^{*T}Q_{Gx}q(gx)g^{-1}f(gx)
\\&+ c^{*T}Q_{Gx}q(gx)c^*dg.\\
\end{split}
\end{align}

The term $\int_G c^{*T}Q_{Gx}q(gx)g^{-1}f(gx) dg$ could be simplified as

\begin{align}
\nonumber
     \int_G c^{*T}Q_{Gx}q(gx)g^{-1}f(gx) dg 
     &= \int_G \mathbf{E}_{G}[f, x]Q_{Gx}q(gx)g^{-1}f(gx) dg.\\
\end{align}

Notice that $Q_{Gx}$, and $q(gx)$ are symmetric matrix

\begin{align}
\nonumber
\int_G c^{*T}Q_{Gx}q(gx)c^*dg &= \int_G c^{*T}q(gx)Q_{Gx}c^*dg\\
\nonumber
     &= \int_G \mathbf{E}^T_{G}[f, x]Q_{Gx}q(gx)\mathbf{E}_{G}[f, x]dg.
\end{align}

Thus Equation~\ref{eqn:gex_expand} becomes

\begin{align}
\nonumber
e(x) \geq& \int_{G} (g^{-1}f(gx))^TQ_{Gx}q(gx)(g^{-1}f(gx)) \\
\nonumber
&-\Big(\mathbf{E}^T_{G}[f, x]Q_{Gx}q(gx)g^{-1}f(gx)\Big)^{T}
\\
\nonumber
&-\mathbf{E}^T_{G}[f, x]Q_{Gx}q(gx)g^{-1}f(gx)
\\
\nonumber
&+ \mathbf{E}^T_{G}[f, x]Q_{Gx}q(gx)\mathbf{E}_{G}[f, x]dg\\
\nonumber
&= \int_G p(gx)||f(gx) - g\mathbf{E}_{G}[f, x]||_2^2\alpha(x, g)dg.
\end{align}

Taking the integral over the fundamental domain $F$ we have
\begin{align}
\nonumber
\err(h) &= \int_{F} e(x) \\
\label{eqn:g_results}
& \geq \int_F \int_G p(gx)||f(gx) - g\mathbf{E}_{G}[f, x]||_2^2\alpha(x, g)dg dx.
\end{align}
\end{proof}

\section{Proof of Corollary~\ref{cor:lb_equ_reg_ortho}}
\label{app:proof_equ_reg_ortho}
\begin{proof}
When $\rho_Y$ is an orthogonal representation, we have $\rho_Y(g)^T\rho_Y(g)=I_n$, i.e., the identity matrix. Then $q(gx)$ can be written as $q(gx)=s(gx)\id$ where $s(gx)$ is a scalar. Since $\int_G q(gx)dg=\id$, we can re-define $q(gx)$ to drop $\id$ and only keep the scalar, then $q_x(g)$ can be viewed as a probability density function of $g$ because now $\int_G q_x(g)=1$.

With $q_x(g)$ being the probability density function, $\mathbf{E}_G[f, x]$ (Equation~\ref{eqn:G_norm_E})
naturally becomes the mean $\bbE_G[f_x]$ where $g\sim q_x$.

Now consider $e(x)=\int_G p(gx) ||f(gx)-g\mathbf{E}_G[f_x]||_2^2\alpha(x, g)dg$ in Theorem~\ref{theo:equivar_regression}, it can be written as
\begin{align}
\nonumber
e(x) =& \int_G p(gx) ||f(gx)-g\bbE_G[f_x]||_2^2\alpha(x, g)dg\\
\nonumber
=& \int_G p(gx) || g(g^{-1}f(gx)-\bbE_G[f_x])||_2^2 \alpha(x, g) dg\\
\nonumber
=& \int_G p(gx)(g^{-1}f(gx)-\bbE_G[f_x])^T\rho_Y(g)^T\rho_Y(g)(g^{-1}f(gx)-\bbE_G[f_x])\alpha(x, g) dg.
\end{align}
Since $\rho_Y(g)^T\rho_Y(g)=I_n$, we have
\begin{align}
\nonumber
e(x)=&\int_G p(gx)(g^{-1}f(gx)-\bbE_G[f_x])^T(g^{-1}f(gx)-\bbE_G[f_x])\alpha(x, g) dg\\
\label{eqn:proof_ortho_1}
=&\int_G p(gx)||g^{-1}f(gx)-\bbE_G[f_x]||^2_2\alpha(x, g) dg.
\end{align}
From Equation~\ref{eqn:QGx} we have $p(gx)\alpha(x, g)=Q_{Gx}q(gx)$. Substituting in Equation~\ref{eqn:proof_ortho_1} we have
\begin{align}
\nonumber
e(x)=& \int_G Q_{Gx} q(gx)||g^{-1}f(gx)-\bbE_G[f_x]||^2_2 dg.
\end{align}
Since $Q_{Gx}=\int_G p(gx)\alpha(a, g)dg$ when $\rho_Y(g)^T\rho_Y(g)=I_n$, we have
\begin{align}
\nonumber
e(x)= &Q_{Gx} \int_G q_x(g)||g^{-1}f(gx)-\bbE_G[f_x]||^2_2 dg\\
\label{eqn:proof_ortho_2}
=& Q_{Gx}\bbV_G[f_x].
\end{align}

Now consider $Q_{Gx}$ (Equation~\ref{eqn:QGx}), when $\rho_Y(g)^T\rho_Y(g)=I_n$, it can be written
\begin{align}
\nonumber
Q_{Gx}=&\int_G p(gx)\alpha(x, g) dg\\
\nonumber
&=\int_{Gx} p(z)dz\\
\nonumber
&=p(Gx).
\end{align}
Replacing $Q_{Gx}$ with $p(Gx)$ in Equation~\ref{eqn:proof_ortho_2} then taking the integral of $e(x)$ over the fundamental domain gives the result.
\end{proof}

\section{Lower Bound of Equivariant Regression when $\rho_Y=\id$}
\label{app:equ_reg_I}
\begin{proposition}
When $\rho_Y=\id$, the error of $h$ has lower bound $\err(h)\geq \int_Fp(Gx)\bbV_{Gx}[f]dx$, which is the same as Theorem~\ref{theo:invar_regression}.
\end{proposition}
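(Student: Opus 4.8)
The plan is to derive this as a direct specialization of Corollary~\ref{cor:lb_equ_reg_ortho}, since the identity representation $\rho_Y = \id$ is in particular an orthogonal representation ($\id^T \id = I_n$). The strategy is to show that under $\rho_Y = \id$, the general equivariant machinery collapses precisely onto the invariant-regression quantities of Theorem~\ref{theo:invar_regression}, so that the lower bound $\int_F p(Gx)\bbV_G[f_x]\,dx$ from the corollary rewrites term-by-term into $\int_F p(Gx)\bbV_{Gx}[f]\,dx$.

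First I would invoke Corollary~\ref{cor:lb_equ_reg_ortho}, which applies because $\rho_Y=\id$ is orthogonal, giving $\err(h)\geq \int_F p(Gx)\bbV_G[f_x]\,dx$ where $f_x\colon g\mapsto g^{-1}f(gx)$ and $g\sim q_x$. The key observation is that with $\rho_Y(g)=\id$ for every $g$, the $G$-stabilized function degenerates: $f_x(g)=g^{-1}f(gx)=\rho_Y(g)^{-1}f(gx)=f(gx)$. So the inverse transformation does nothing, and evaluating $f_x$ over $G$ with respect to $q_x$ is simply evaluating $f$ over the orbit $Gx$. Next I would compare the two averaging measures: from Equation~\ref{eqn:QGx}, when $\rho_Y^T\rho_Y=I_n$ we have $q(gx)=Q_{Gx}^{-1}p(gx)\alpha(x,g)$ with $Q_{Gx}=p(Gx)$, so the scalar density is $q_x(g)=p(gx)\alpha(x,g)/p(Gx)$. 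Pushed forward onto the orbit via $\int_{Gx}(\cdot)\,dz=\int_G(\cdot)\,\alpha(x,g)\,dg$, this is exactly the normalized orbit density $q(z)=p(z)/p(Gx)$ appearing in Theorem~\ref{theo:invar_regression}. Hence $\bbE_G[f_x]=\bbE_{Gx}[f]$ and $\bbV_G[f_x]=\bbV_{Gx}[f]$.

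Having matched both the integrand (the mean becomes $\bbE_{Gx}[f]$ and the squared deviation $\|f(gx)-\bbE_{Gx}[f]\|_2^2$ reparameterizes to $\|f(z)-\bbE_{Gx}[f]\|_2^2$) and the measure, I would conclude that $p(Gx)\bbV_G[f_x]=p(Gx)\bbV_{Gx}[f]$ pointwise in $x\in F$, so integrating over $F$ yields the claimed bound, which coincides with Theorem~\ref{theo:invar_regression}. I expect the main obstacle to be purely bookkeeping: carefully tracking the reparameterization between the integral over $G$ (weighted by $\alpha(x,g)$) and the integral over the orbit $Gx$, using the Appendix~\ref{app:reparam} identity $\int_{Gx}f(z)\,dz=\int_G f(gx)\alpha(x,g)\,dg$, to verify that $q_x$ as a density on $G$ corresponds to $q$ as a density on $Gx$. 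No genuinely new estimate is needed beyond confirming that orthogonality specializes correctly and that $g^{-1}f(gx)=f(gx)$ kills the only place where $\rho_Y$ nontrivially entered.
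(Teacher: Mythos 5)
Your proposal is correct and follows essentially the same route as the paper's proof: specialize Corollary~\ref{cor:lb_equ_reg_ortho} at $\rho_Y=\id$, note that $g^{-1}f(gx)=f(gx)$, show $Q_{Gx}=p(Gx)$ via the change of variables $z=gx$ (the Appendix~\ref{app:reparam} identity), and conclude $\bbE_G[f_x]=\bbE_{Gx}[f]$ and $\bbV_G[f_x]=\bbV_{Gx}[f]$. The only cosmetic difference is that the paper carries out the substitution for the mean explicitly and asserts the variance identity ``similarly,'' whereas you frame both at once as a pushforward of the density $q_x$ on $G$ to $q$ on $Gx$; the content is identical.
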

\begin{proof}
Consider Equation~\ref{eqn:QGx}, when $\rho_Y(g)=\id$, we have
\begin{equation}
\nonumber
Q_{Gx}=\int_G p(gx)\alpha(x, g)dg.
\end{equation}
Exchange the integration variable using $z=gx$ we have
\begin{equation}
\label{eqn:QGx_I}
Q_{Gx}=\int_{Gx} p(z)dz.
\end{equation}
Consider $\bbE_G[f_x]=\int_G q_x(g)g^{-1}f(gx)dg$. When $\rho_Y(g)=\id$, it becomes 
\begin{equation}
\nonumber
    \bbE_G[f_x]=\int_G q(gx)f(gx)dg.
\end{equation}
Substituting $q(gx)$ with Equation~\ref{eqn:QGx}, considering $\rho_Y(g)=\id$, we have
\begin{equation}
\nonumber
    \bbE_G[f_x]=\int_G Q_{Gx}^{-1}p(gx)f(gx)\alpha(x, g)dg.
\end{equation}
Exchange the integration variable using $z=gx$ we have
\begin{equation}
\nonumber
    \bbE_G[f_x]=\int_{Gx} Q_{Gx}^{-1}p(z)f(z)dz.
\end{equation}
Substituting Equation~\ref{eqn:QGx_I} we have
\begin{align}
\nonumber
    \bbE_G[f_x]&=\int_{Gx} \frac{p(z)}{\int_{Gx} p(z)dz}f(z)dz\\
    \nonumber
    &=\bbE_{Gx}[f].
\end{align}
Similarly, we can proof $\bbV_G[f_x]=\bbV_{Gx}[f]$, thus when $\rho_Y=\id$, Corollary~\ref{cor:lb_equ_reg_ortho} is Theorem~\ref{theo:invar_regression}.
\end{proof}


\section{Rademacher Complexity of Harmful Extrinsic Equivariance Example}
\label{app:xor_complexity}
Let $S=\{x^1, x^2, x^3, x^4\}$, where the labels are $y^1,y^2=+1$ and $y^3,y^4=-1$. We consider two model classes $\mcF_N$, the set of all linear models, and $\mcF_E$, the set of all linear models equivariant to $C_2$, and compute their empirical Rademacher complexity on $S$.

\begin{table}[ht]
\centering
\begin{tabular}{@{}ccc@{}}
\multicolumn{1}{l}{} & Non-equivariant model class & $C_2$-equivariant model class \\ 
\midrule
$\sigma^\top$ & $\sup_{f_n \in \mcF_N} \frac{1}{m}\sum_{i=1}^{m}\sigma_i f_n(x^i)$ & $\sup_{f_n \in \mcF_N} \frac{1}{m}\sum_{i=1}^{m}\sigma_i f_n(x^i)$ \\ 
\midrule
$[-1,-1,-1,-1]$ & $1$ & $1$ \\
$[-1,-1,-1,+1]$ & $1$ & $1$ \\
$[-1,-1,+1,-1]$ & $1$ & $1$ \\
$[-1,-1,+1,+1]$ & $1$ & $0.75$ \\
$[-1,+1,-1,-1]$ & $1$ & $1$ \\
$[-1,+1,-1,+1]$ & $1$ & $1$ \\
$[-1,+1,+1,-1]$ & $1$ & $1$ \\
$[-1,+1,+1,+1]$ & $1$ & $1$ \\
$[+1,-1,-1,-1]$ & $1$ & $1$ \\
$[+1,-1,-1,+1]$ & $1$ & $1$ \\
$[+1,-1,+1,-1]$ & $1$ & $1$ \\
$[+1,-1,+1,+1]$ & $1$ & $1$ \\
$[+1,+1,-1,-1]$ & $1$ & $0.75$ \\
$[+1,+1,-1,+1]$ & $1$ & $1$ \\
$[+1,+1,-1,+1]$ & $1$ & $1$ \\
$[+1,+1,+1,+1]$ & $1$ & $1$ \\
\midrule
\multicolumn{1}{l}{$\mfR_S$} & $1$ & $\frac{31}{32}$ \\ \bottomrule
\end{tabular}
\end{table}

For the data $S$, an extrinsically equivariant linear model class has lower empirical Rademacher complexity than its unconstrained linear counterpart, demonstrating that extrinsic equivariance can be harmful to learning.

\section{Additional Experiments}
\label{app:additional_exp}
\subsection{Swiss Roll Experiment}
\label{app:real_data}
\begin{figure*}[t]
\centering
\subfloat[Correct]{{\includegraphics[width=0.25\textwidth,trim={11cm 0 5cm 5cm},clip]{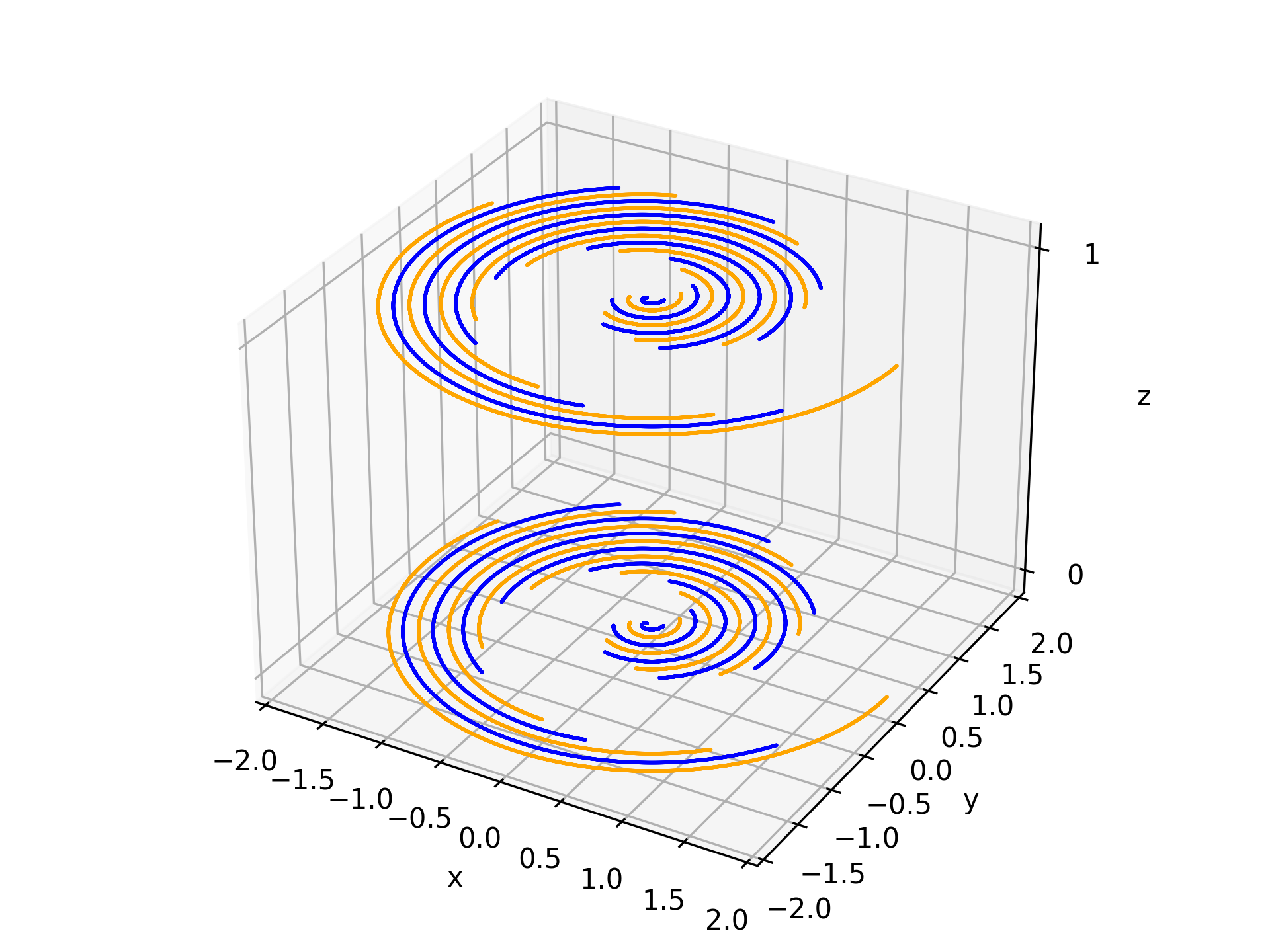}}}
\subfloat[Incorrect]{\includegraphics[width=0.25\textwidth,trim={11cm 0 5cm 5cm},clip]{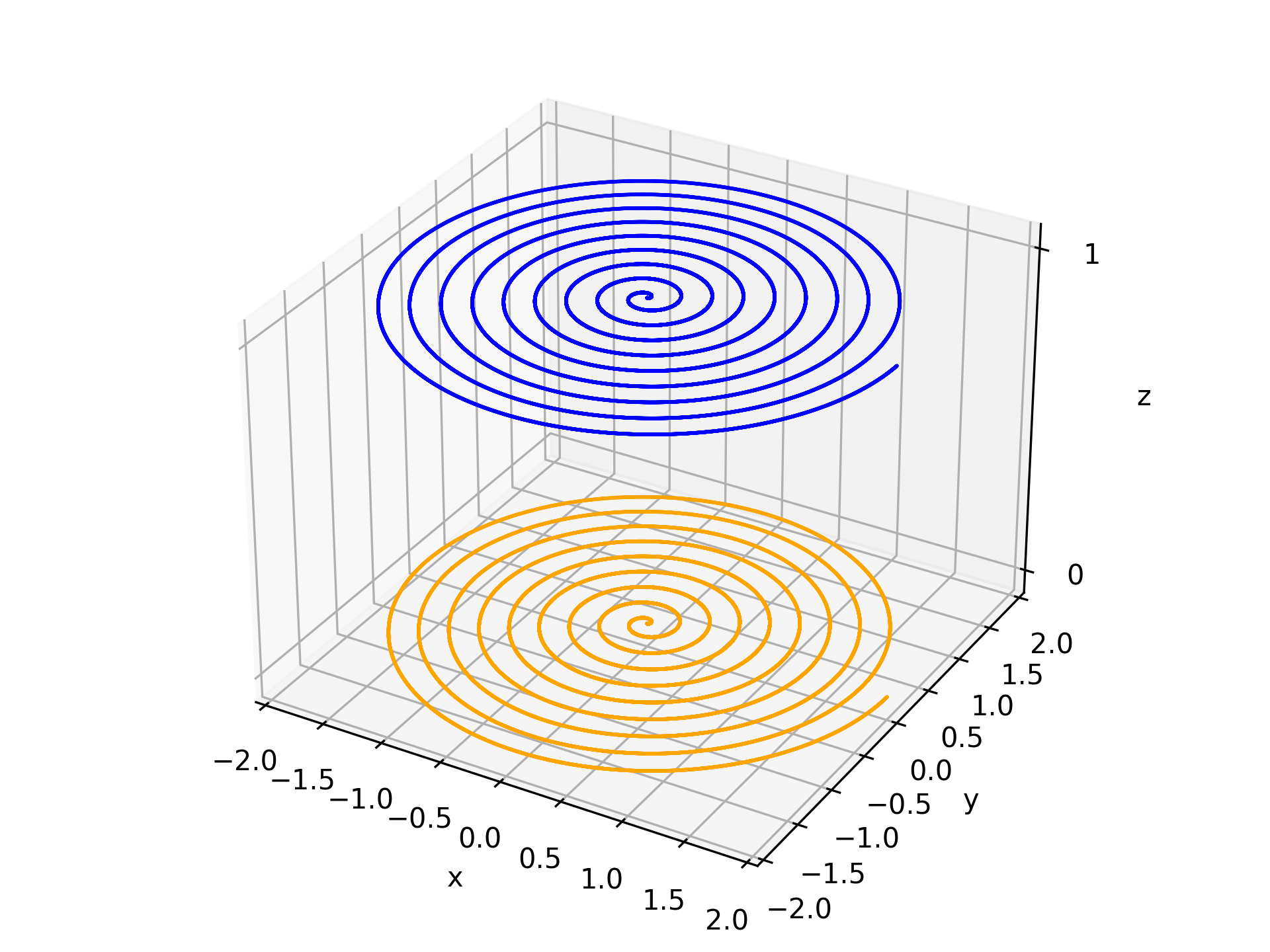}}
\subfloat[Extrinsic]{\includegraphics[width=0.25\textwidth,trim={11cm 0 5cm 5cm},clip]{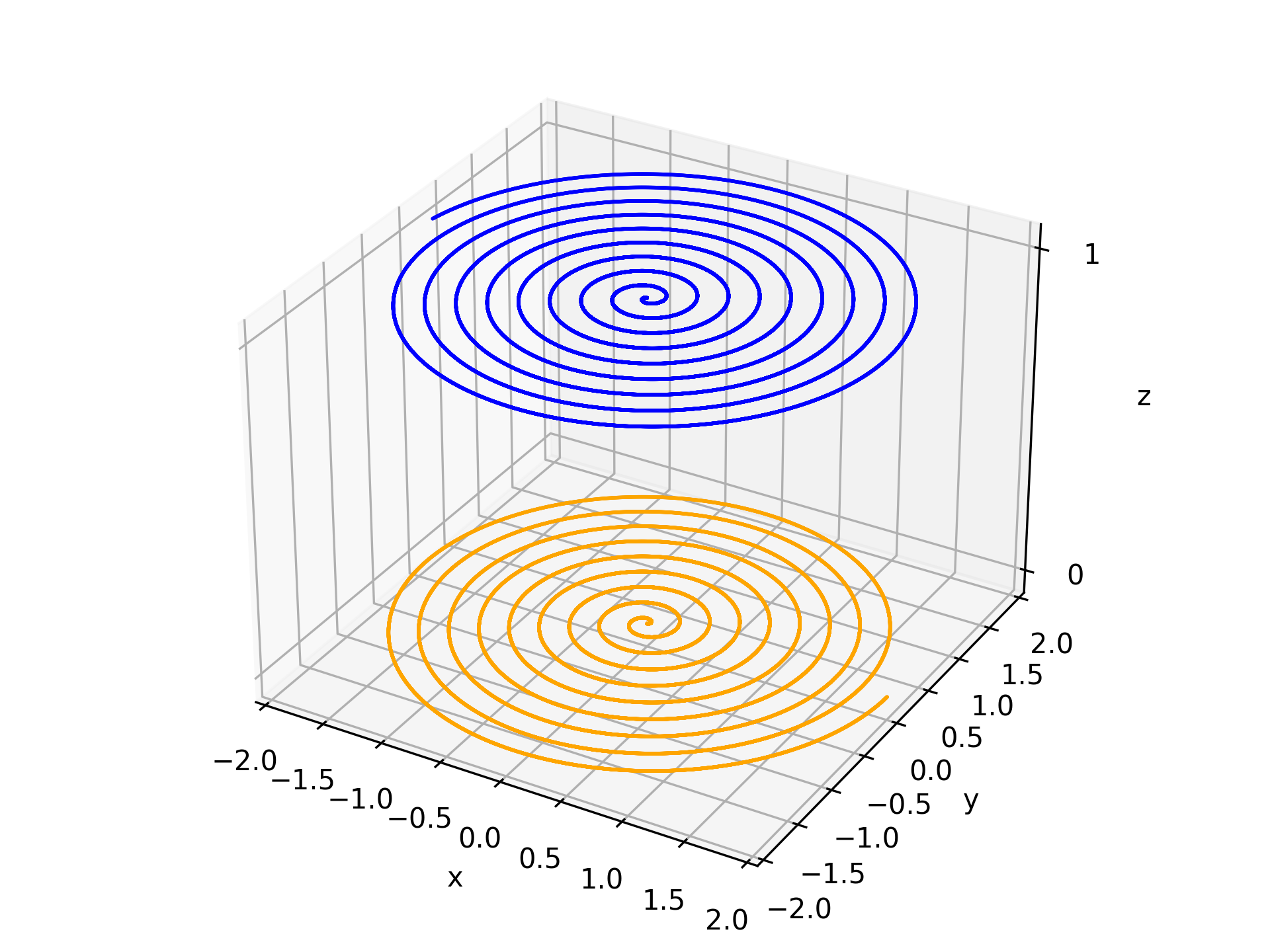}}
\subfloat[Extrinsic Invariant View]{\includegraphics[width=0.23\textwidth]{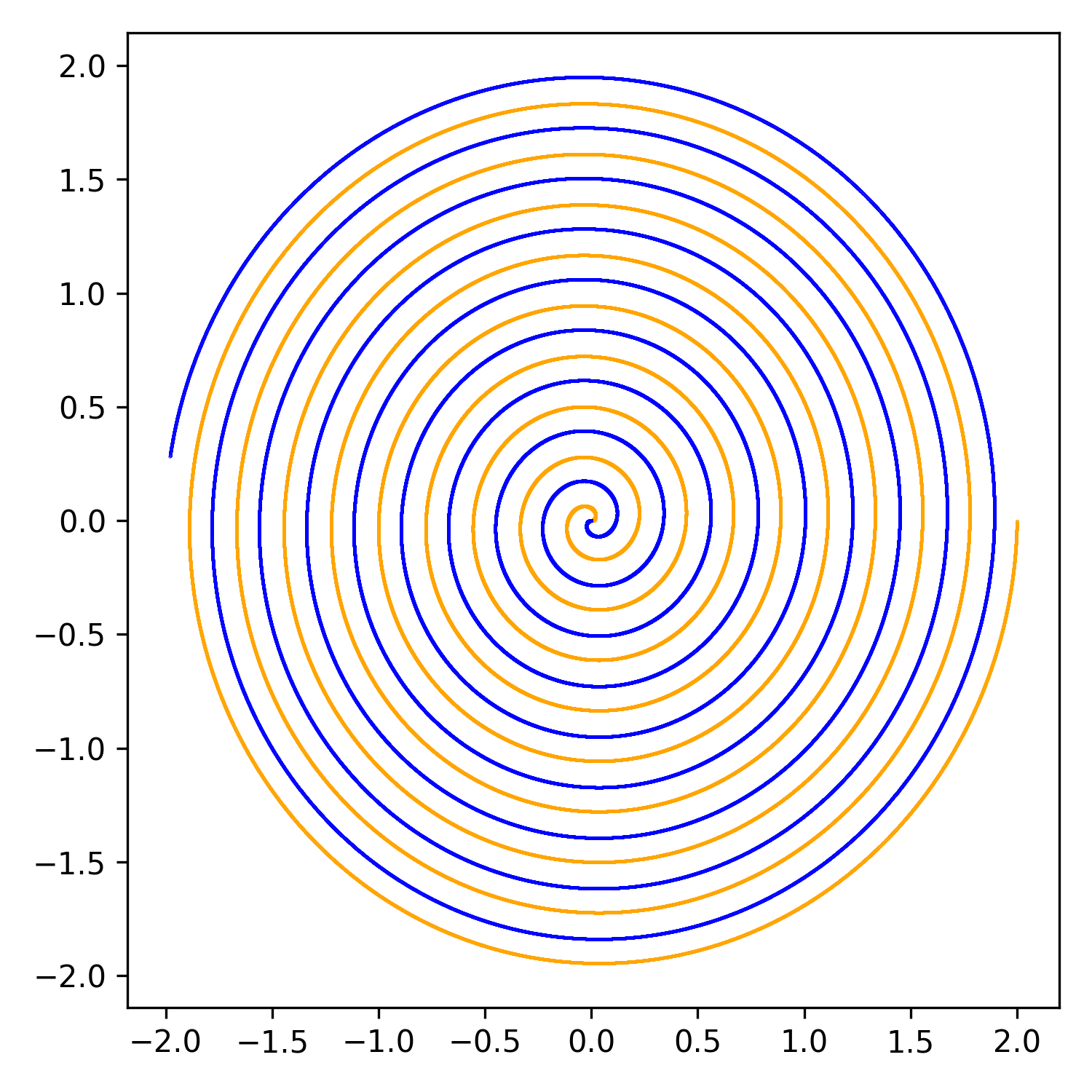}}
\caption{The correct, incorrect, and extrinsic data distribution in the Swiss Roll experiment.}
\label{fig:data_full}
\end{figure*}

\begin{figure*}[t]
\captionsetup[subfigure]{justification=centering}
\centering
\subfloat[$c=.5, i=.5, e=0$]{{\includegraphics[width=0.25\textwidth,trim={11cm 0 5cm 5cm},clip]{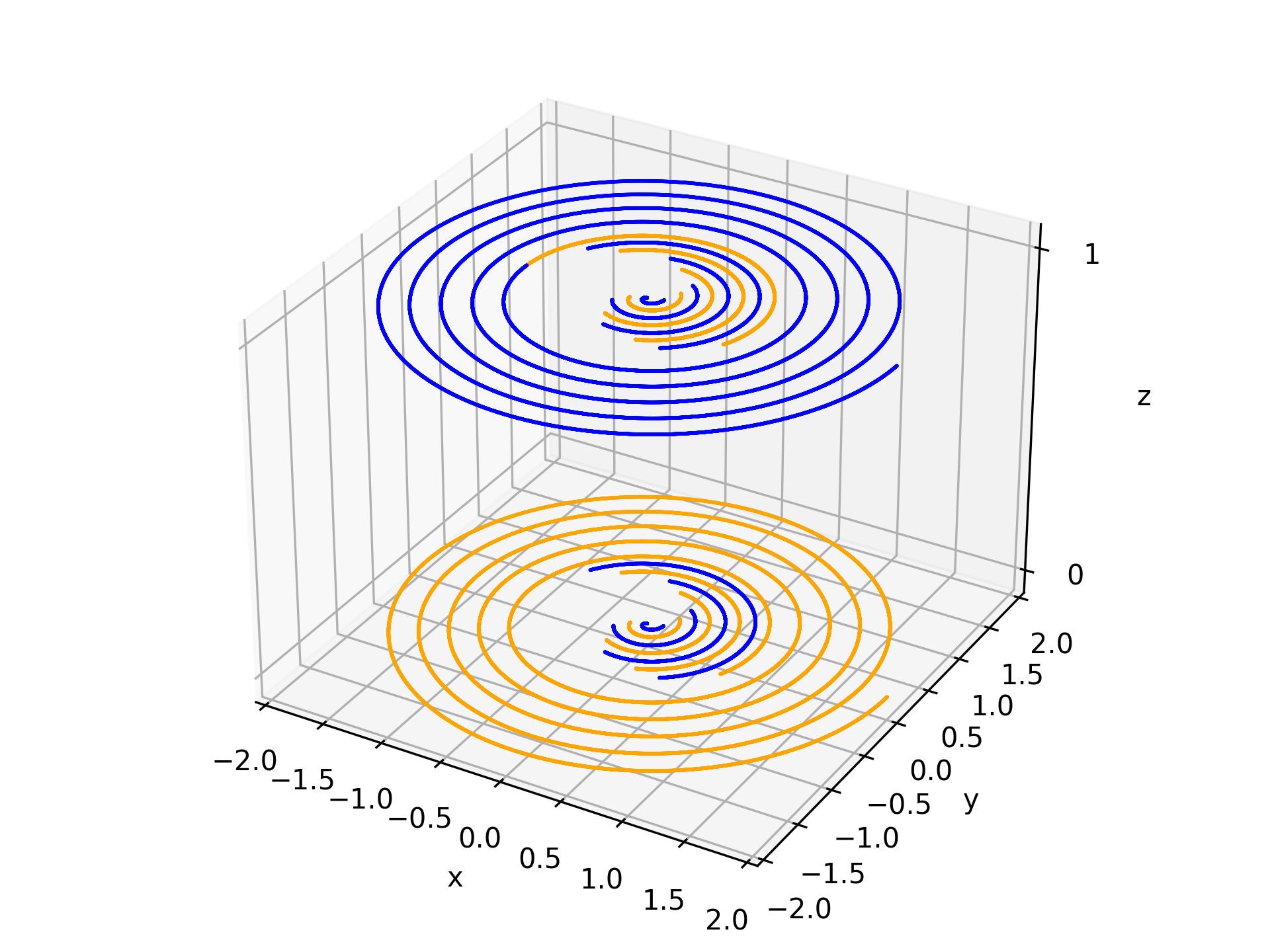}}}
\subfloat[$c=.5, i=0, e=.5$]{\includegraphics[width=0.25\textwidth,trim={11cm 0 5cm 5cm},clip]{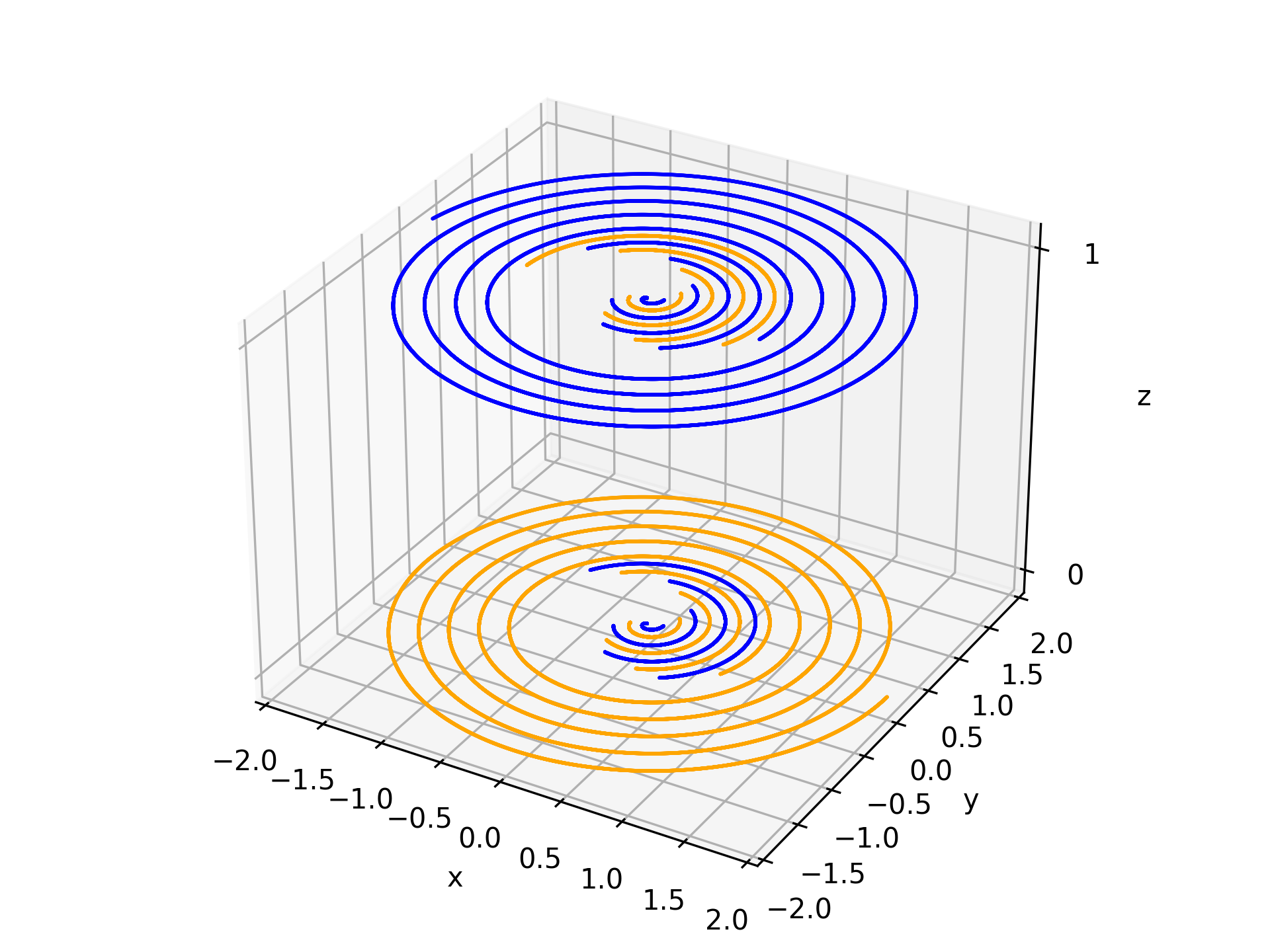}}
\subfloat[$c=0, i=.5, e=.5$]{\includegraphics[width=0.25\textwidth,trim={11cm 0 5cm 5cm},clip]{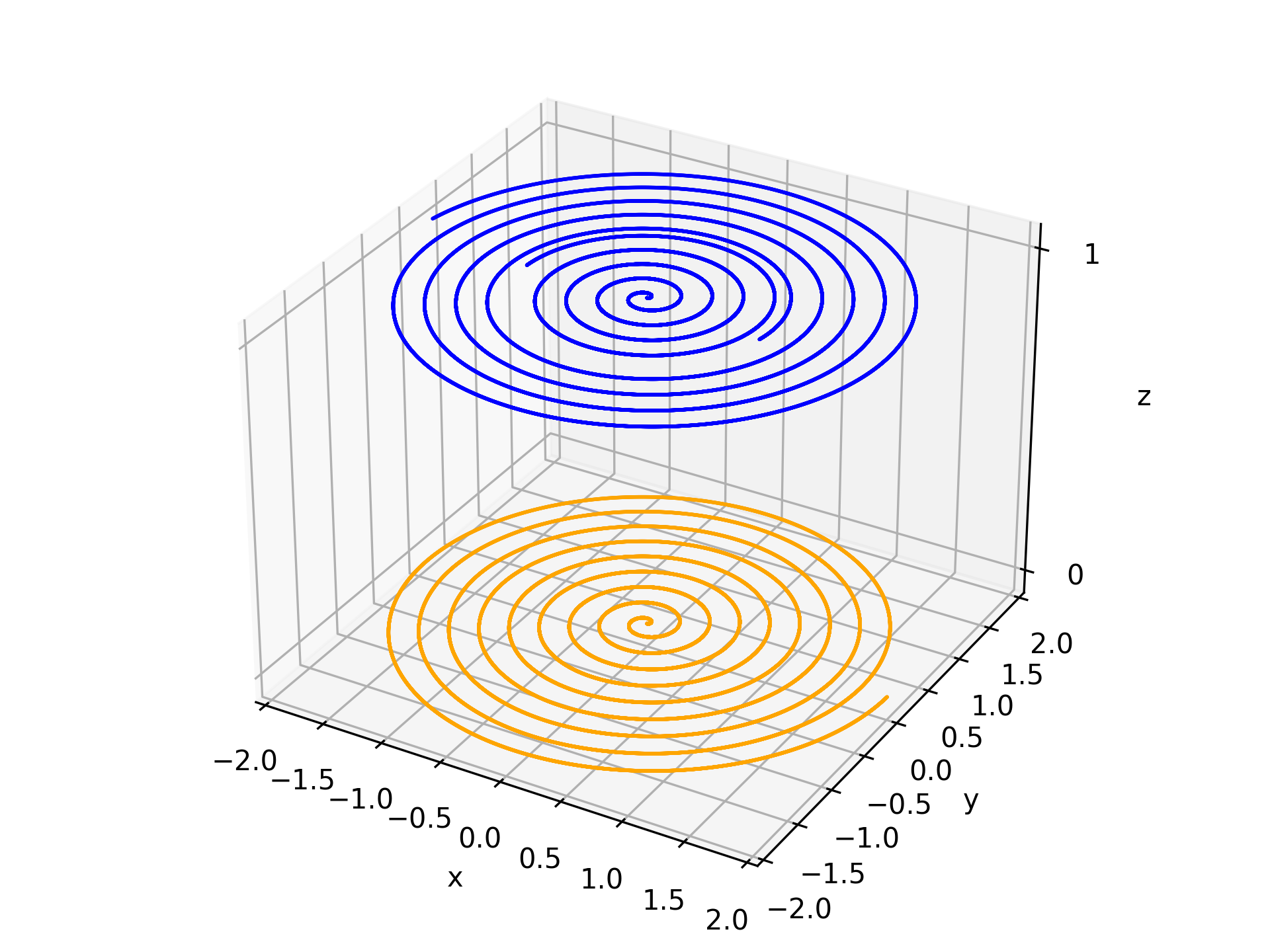}}
\subfloat[$c=.5, i=.25, e=.25$]{\includegraphics[width=0.25\textwidth,trim={11cm 0 5cm 5cm},clip]{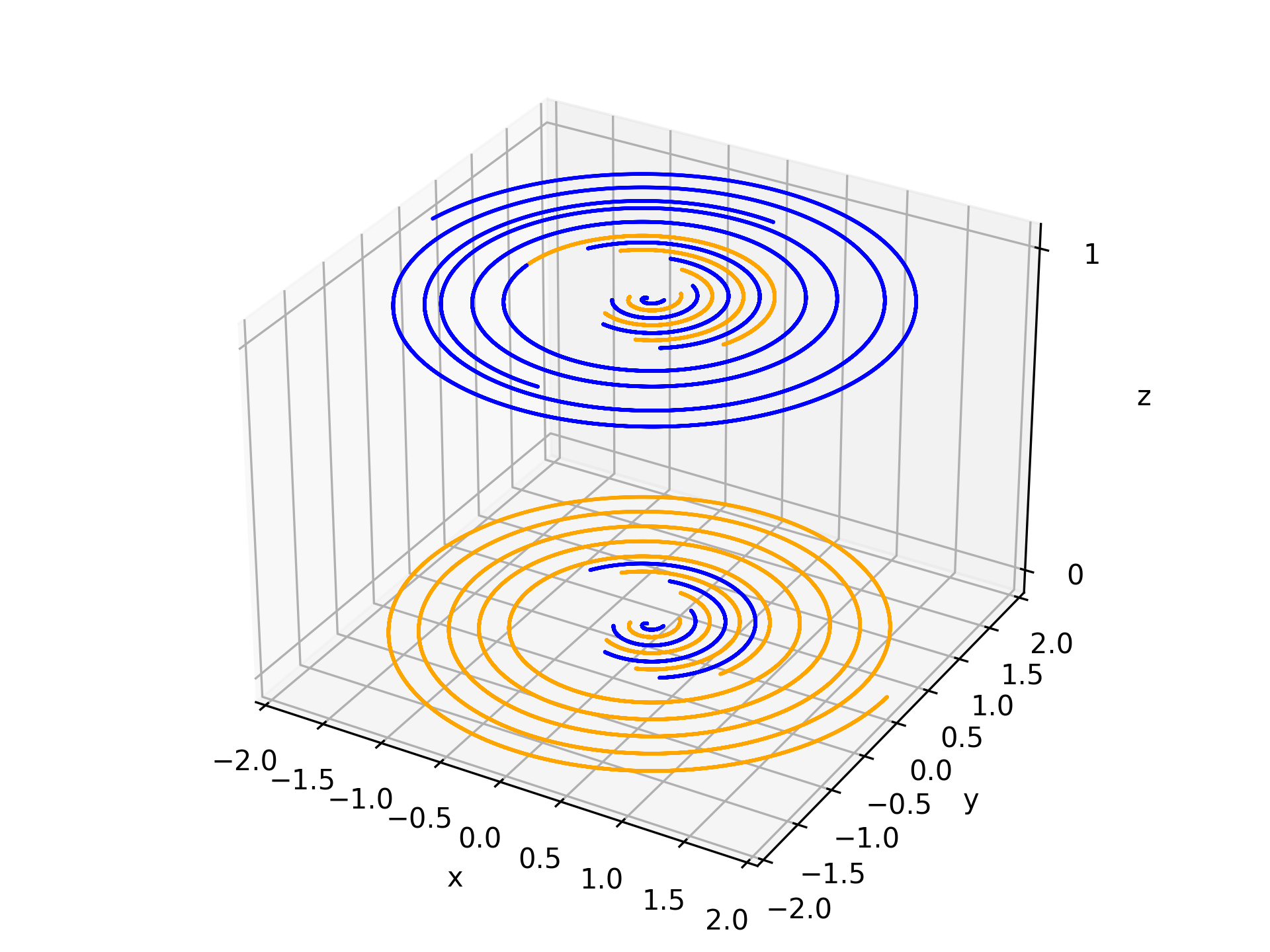}}
\caption{Data distribution example with different correct ratio ($c$), incorrect ratio ($ir$), and extrinsic ratio ($er$) values.}
\label{fig:data_mix_full}
\end{figure*}

\begin{figure}[t]
\centering
\subfloat[]{\includegraphics[width=0.4\textwidth]{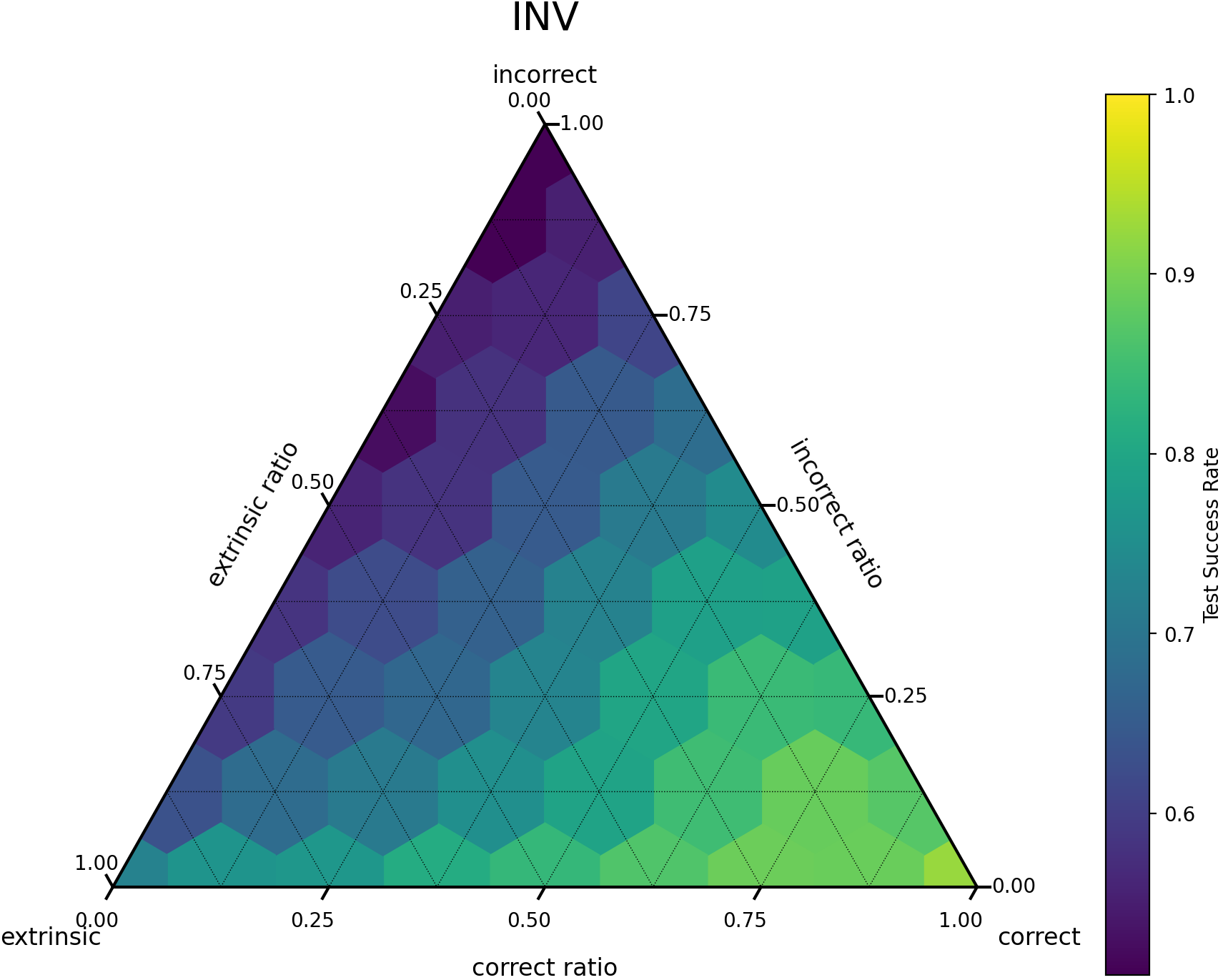}\label{fig:exp_swiss_inv}}
\subfloat[]{\includegraphics[width=0.4\textwidth]{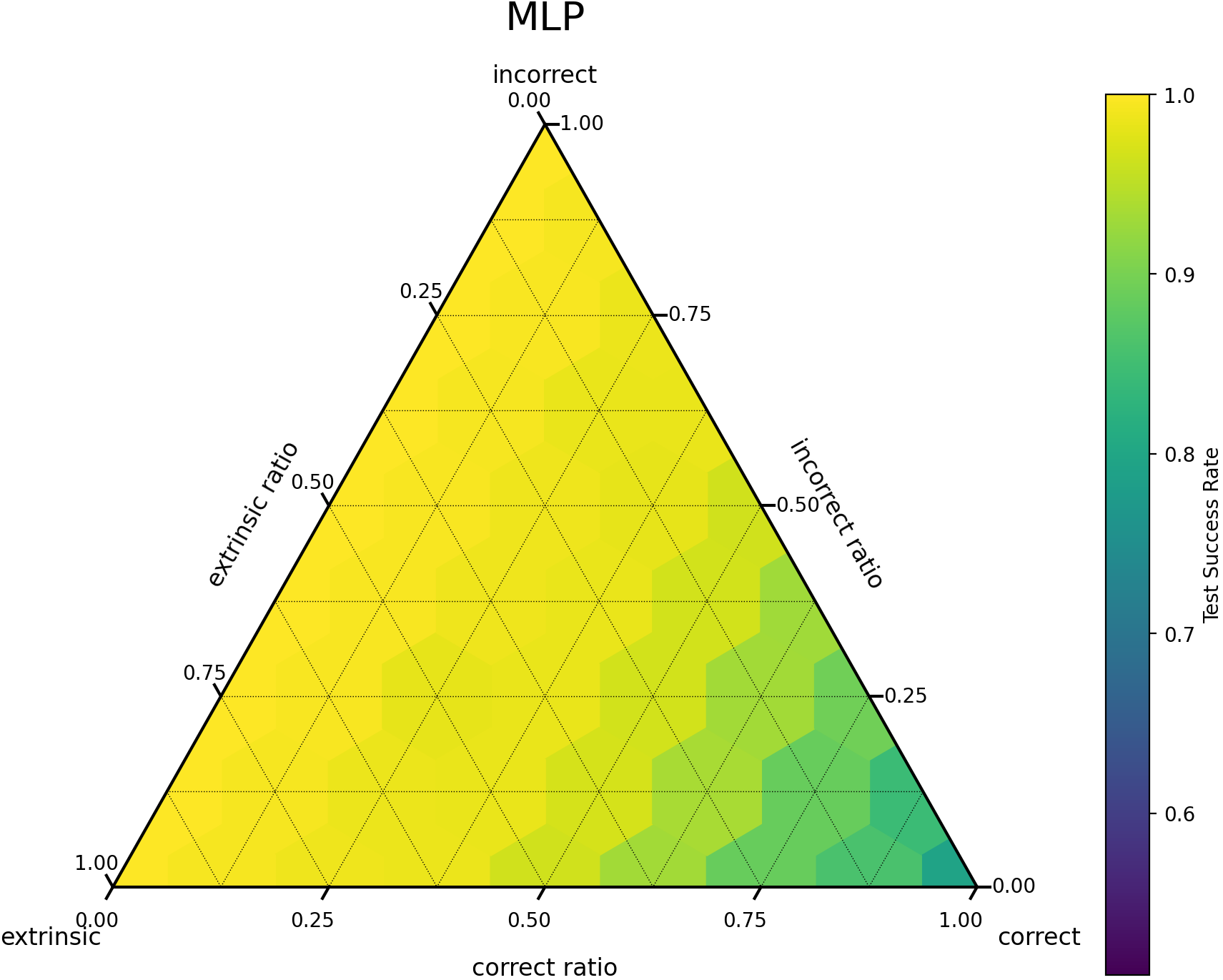}\label{fig:exp_swiss_mlp}}   
\caption{The ternary plot of the invariant network (a) and unconstrained network (b) with different correct, incorrect, and extrinsic ratio.}
\end{figure}

Figure~\ref{fig:data_full} and Figure~\ref{fig:data_mix_full} show the actual data distribution for the Swiss Roll experiment in Section~\ref{sec:exp_swiss_roll}. In the incorrect distribution, the data in the two $z$ planes form two spirals with different labels but the same shape. The equivariance is incorrect because if we translate one spiral to the other spiral's plane, they will overlap but their labels are different. In the correct distribution, there are two different `dashed' spirals copied into two $z$-planes. The equivariance is correct because after a $z$-translation, both the data and their labels exactly overlap. In all three cases, we assume the data has a uniform distribution. Figure~\ref{fig:exp_swiss_mlp} shows the ternary plot of MLP for all different $c, ir, er$, where the performance of MLP decreases as the correct ratio increases. Figure~\ref{fig:exp_swiss_inv} shows an inverse trend: the performance of INV increases as the correct ratio increases. Moreover, both extrinsic and incorrect equivariance harms the performance of INV, but incorrect equivariance is more devastating because the error is limited by a theoretical lower bound.

\subsection{Square Experiment}
\label{sec:exp_square}
\begin{figure}[t]
    \centering
    \includegraphics[width=0.35\linewidth]{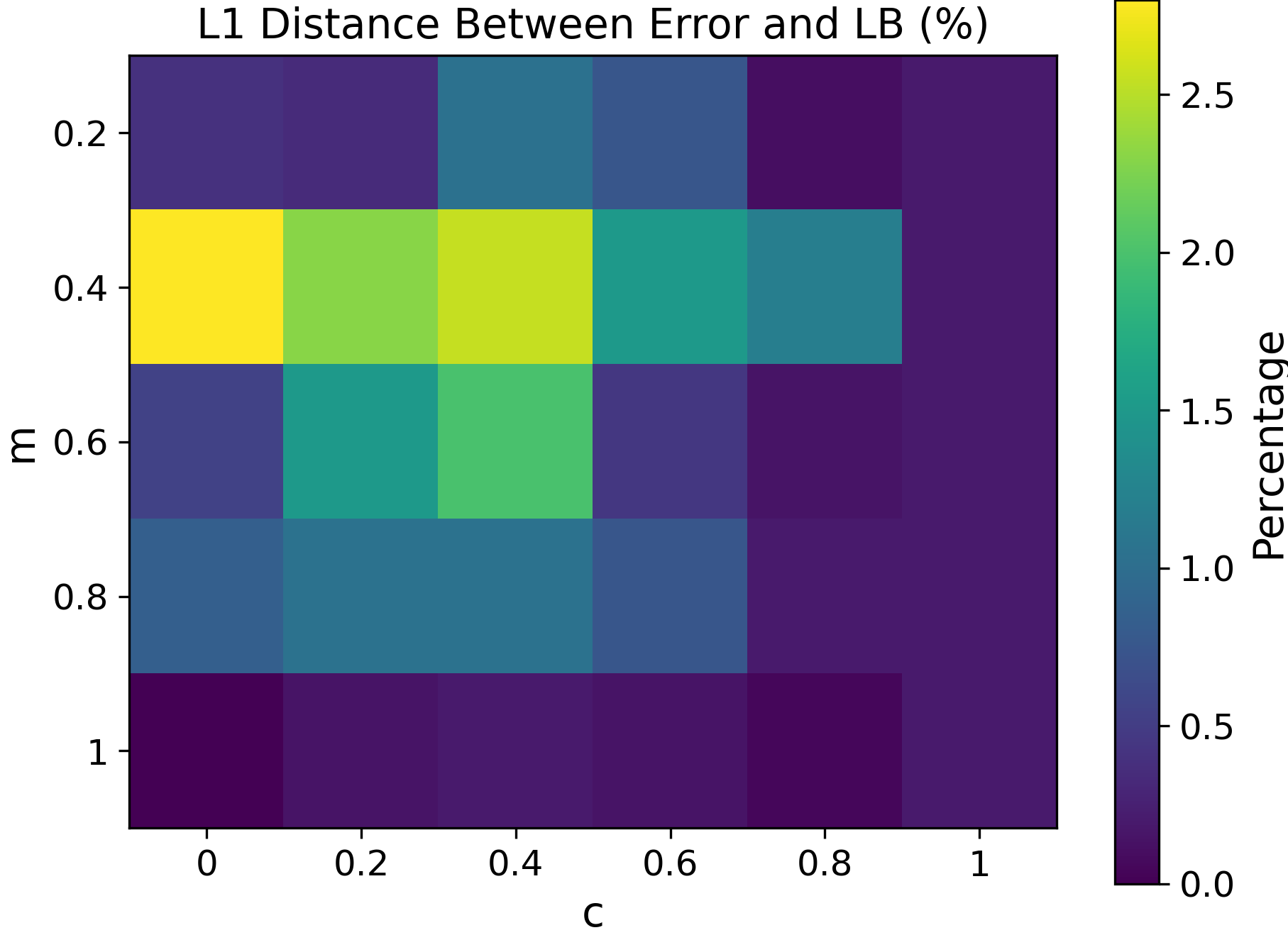}
    \caption{Result of the square experiment in terms of the $L_1$ distance between the network error and the theoretical lower bound in percentage. Each cell corresponds to an experiment with a particular correct ratio ($c$) and majority label ratio ($m$). Results are averaged over 10 runs.}
    \label{fig:exp_square}
\end{figure}
We consider the environment shown in Example~\ref{exp:square}. We vary $m \in \{0.2, 0.4, 0.6, 0.8, 1\}$ and $c\in\{0, 0.2, 0.4, 0.6, 0.8, 1\}$. We train an $u$-invariant network and evaluate its test performance with the theoretical lower bound $\err(h)\geq (1-c)\times (1-m)$. 
Figure~\ref{fig:exp_square} shows the test error of the trained network compared with the theoretical lower bound. The highest difference is below 3\%, demonstrating the correctness of our theory.

\subsection{Regression Experiment}
\label{sec:exp_reg}
\begin{table}
\centering
\scriptsize
\begin{tabular}{ccc}
\toprule
& Invariant Network & Equivariant Network \\
\midrule
Empirical/Theoretical & 1.002 $\pm$0.000 & 1.001 $\pm$0.000 \\
\bottomrule
\end{tabular}
\caption{Empirical $\err(h)$ divided by theoretical $\err(h)$ for invariant regression and equivariant regression. Results are averaged over 100 runs with different $f$ for each regression. Empirical regression error matches theoretical error.}
\label{tab:reg_exp_result}
\end{table}
In this experiment, we validate our theoretical error lower bound for invariant and equivariant regression (Theorem~\ref{theo:invar_regression},~\ref{theo:equivar_regression}) in an environment similar to Example~\ref{exp:reg}.
Consider a regression task $f: \bbR\times \mcX\to \bbR^2$ given by $(\theta, x)\mapsto y$, where $\mcX=\{x_0, x_1, x_2, x_3\}$. The group $g\in G=C_4=\{e, r, r^2, r^3\}$ acts on $(\theta, x)$ by $g(\theta, x)=(\theta, gx)$ through permutation: $x_1=rx_0; x_2=rx_1; x_3=rx_2; x_0=rx_3$. Let $r^k\in G$ acts on $y$ by $\rho_Y(g)=\left(\begin{smallmatrix}\cos g & -\sin g\\ \sin g & \cos g\end{smallmatrix}\right)$ where $g = k\pi/2$.
Note that fixing a single value of $\theta$ gives Example~\ref{exp:reg}; in other words, this experiment has infinitely many orbits where each orbit is similar to Example~\ref{exp:reg}. 

We generate random polynomial function $f$ that is not equivariant, i.e., $\exists (\theta, x) \text{ s.t. } g\cdot f(\theta, x) \neq \rho_Y(g)y$. 
Then we try to fit $f$ using a $G$-invariant network and a $G$-equivariant network. 
We measure their error compared with the theoretical lower bound given by Theorem~\ref{theo:invar_regression} and \ref{theo:equivar_regression}. As is shown in Table~\ref{tab:reg_exp_result}, both the invariant network and the equivariant network achieve an error rate nearly the same as our theoretical bound. The empirical error is slightly higher than the theoretical error due to the neural network fitting error. Please refer to~\ref {app:regression_exp_detail} for more experiment details.

\section{Experiment Details}
\label{app:exp_detail}
This section describes the details of our experiments. All of the experiment is performed using a single Nvidia RTX 2080 Ti graphic card.

\subsection{Swiss Roll Experiment}
In the Swiss Roll Experiment in Section~\ref{sec:exp_swiss_roll}, we use a three-layer MLP for the unconstrained network. For the $z$-invariant network, we use a network with two DSS~\cite{dss} layers to implement the $z$-invariance, each containing two FC layers. We train the networks using the Adam~\cite{adam} optimizer with a learning rate of $10^{-3}$. The batch size is 128. In each run, there are 200 training data, 200 validation data, and 200 test data randomly sampled from the data distribution. The network is trained for a minimal of 1000 epochs and a maximum of 10000 epochs, where the training is terminated after there is no improvement in the classification success rate in the validation set for a consecutive of 1000 epochs. We report the test success rate of the epoch model with the highest validation success rate.

\subsection{Square Experiment}
In the Square Experiment in Section~\ref{sec:exp_square}, we use a network with two DSS~\cite{dss} layers to implement the horizontal invariance, where each layer contains two FC layers. We train the networks using the Adam~\cite{adam} optimizer with a learning rate of $10^{-3}$. The batch size is 128. In each run, there are 1000 training data, 200 validation data, and 200 test data randomly sampled from the data distribution. The network is trained for a minimal of 1000 epochs and a maximum of 10000 epochs, where the training is terminated after there is no improvement in the classification success rate in the validation set for a consecutive of 1000 epochs. We report the test success rate of the epoch model with the highest validation success rate.

\subsection{Digit Classification Experiment}
In the Digit Classification Experiment in Section~\ref{sec:exp_digit}, we use two similar five-layer convolutional networks for the $D_4$-invariant network and the CNN, where the $D_4$-invariant network is implemented using the e2cnn package~\cite{e2cnn}. Both networks have the similar amount of trainable parameters. We train the networks using the Adam~\cite{adam} optimizer with a learning rate of $5\times 10^{-5}$ and weight decay of $10^{-5}$. The batch size is 256. In each run, there are 5000 training data, 1000 validation data, and 1000 test data randomly sampled from the data distribution. The network is trained for a minimal of 50 epochs and a maximum of 1000 epochs, where the training is terminated after there is no improvement in the classification success rate in the validation set for a consecutive of 50 epochs. We report the test success rate of the epoch model with the highest validation success rate.

\subsection{Regression Experiment}
\label{app:regression_exp_detail}

In the regression experiment, we validate our theoretical error lower bound for invariant and equivariant regression (Theorem~\ref{theo:invar_regression},~\ref{theo:equivar_regression}) by comparing empirical network fitting error and the theoretical fitting error of a function $f$. Specifically, the function $f$ maps a distance $\theta$ and an index $x$ pair to a vector $y$:
\begin{equation}
    f: \bbR\times \mcX\to \bbR^2, 
    \text{given by } (\theta, x)\mapsto y
\end{equation}

where $\mcX=\{x_0, x_1, x_2, x_3\}$. The group $g\in G=C_4=\{e, r, r^2, r^3\}$ acts on $(\theta, x)$ by $g(\theta, x)=(\theta, gx)$ through permuting the index $x$: $x_1=rx_0; x_2=rx_1; x_3=rx_2; x_0=rx_3$. Let $r^k\in G$ acts on vector $y$ by rotation $\rho_Y(g)=\left(\begin{smallmatrix}\cos g & -\sin g\\ \sin g & \cos g\end{smallmatrix}\right)$ where $g = k\pi/2$. 

We construct function $f$ in the following way: for each  $x\in \mcX$, choose $l_x:\bbR \to \bbR^2$ and define $f(\theta, x)=l_x(\theta)$. Notice that when $l_{gx} = \rho_Y(g)l_x(\theta)$, $f$ is $G$-equivariant. We define $l_x(\theta) = (p_x(\theta),q_x(\theta))$ where $p_x$ and $q_x$ are cubic polynomials of $x$, i.g., $p_x$ with coefficients $a, b, c, d$ will be $p_x = ax^3 + bx^2 + cx + d$. We choose $p_x$ and $q_x$ with different coefficients for each $x$ such that $f$ is not equivariant, i.e., $l_{gx}\neq \rho_Y(g)l_x(\theta)$. For each run, we generate a function $f$, sample data $\theta, x$, and evaluate the data obtaining $y$. Then we train neural networks using $L2$ loss till converge. Eventually, we sample another set of data to evaluate the empirical 
$L2$ error as well as the theoretical $L2$ error.

\subsection{Robotic Experiment}
\begin{figure}[t]
    \centering
    \includegraphics[width=0.5\linewidth]{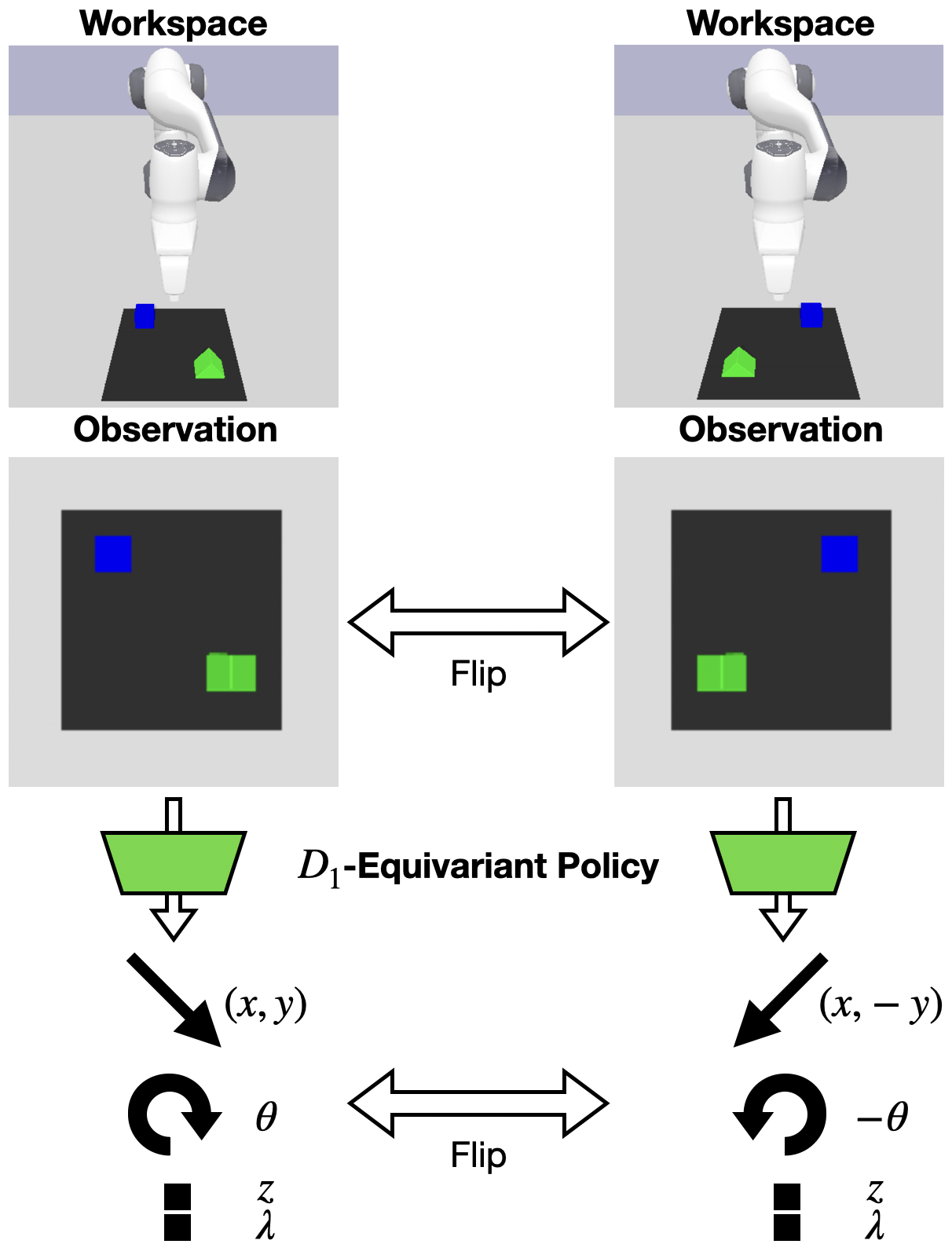}
    \caption{The robotic experiment setup and the $D_1$-equivariant policy network.}
    \label{fig:robot_setup}
\end{figure}
\label{app:robot_exp_detail}
In the robotic manipulation experiment, the state $s$ is defined as a top-down RGBD image of the workspace centered at the gripper's position (Figure~\ref{fig:robot_setup} middle). The action $a=(x, y, z, \theta, \lambda)$ is defined as the change of position ($x, y, z$) and top-down orientation ($\theta$) of the gripper, with the gripper open width ($\lambda$). For a $D_1=\{1, g\}$ group where $g$ represents a horizontal flip, the group action on the state space $gs$ is defined as flipping the image; the group action on the action space $ga$ is defined as flipping the $y$ and $\theta$ action and leaving the other action components unchanged, $ga=(x, -y, z, -\theta, \lambda)$. We define a $D_1$-equivariant policy network $\pi: s\mapsto a$ using e2cnn~\cite{e2cnn}, where the output action of $\pi$ will flip accordingly when the input image is flipped (Figure~\ref{fig:robot_setup} bottom). We train the network using the Adam~\cite{adam} optimizer with a learning rate of $10^{-3}$ and weight decay of $10^{-5}$. For each run, we train the network for a total of 20k training steps, where we perform evaluation for 100 episodes every 2k training steps. We report the highest success rate of the 10 evaluations as the result of the run.

We develop the experimental environments in the PyBullet~\cite{pybullet} simulator, based on the BullatArm benchmark~\cite{bulletarm}. In the Stacking (correct equivariance) and Sorting (incorrect equivariance) experiment, we gather a total of $400$ episodes of demonstrations, where $400c$ of them are Stacking and the rest $400(1-c)$ are Sorting. In evaluation, the task follows the same distribution, where $100c$ of the evaluation episodes are Stacking and the rest are Sorting. Notice that the agent can distinguish the Stacking and Sorting tasks because the object colors are different for the two tasks (green and blue for stacking, yellow and red for sorting). In the Sorting (extrinsic equivariance) experiment, we also use $400$ episodes of demonstrations. 

Specifically, in Sorting, the cube and the triangle are initially placed randomly, within a distance of $\pm1.5cm$ from the horizontal mid-line of the workspace. The objective is to push the triangle at least $9cm$ toward left and to push the cube at least $9cm$ toward right, while ensuring that both objects remain within the boundaries of workspace. In Stacking, two blocks are randomly initialized on the floor of the workspace. The goal is to pick up the triangle and place it on top of the cube. The workspace has a size of $30cm\times30cm\times25cm$.









\end{document}